\renewcommand{\S}{\mathcal{S}}
\newcommand{\B}[1]{\boldsymbol{{#1}}}
\newcommand{\tuple}[1]{\langle{#1}\rangle}
\renewcommand{\P}{\mathbb{P}}
\newcommand{\M}{M}
\newcommand{\X}{\mathcal{X}}
\newcommand{\E}{\mathcal{E}}
\newcommand{\Y}{\mathcal{Y}}
\newcommand{\G}{\mathcal{G}}
\newcommand{\R}{\mathbb{R}}
\newcommand{\BG}{\mathcal{B}}
\newcommand{\V}{\mathcal{V}}
\newcommand{\pa}[2]{\mathrm{pa}_{{#1}}({#2})}
\newcommand{\adj}[2]{\mathrm{adj}_{#1}(#2)}
\newcommand{\an}[2]{\mathrm{an}_{{#1}}({#2})}
\newcommand{\de}[2]{\mathrm{de}_{{#1}}({#2})}
\newcommand{\ch}[2]{\mathrm{ch}_{{#1}}({#2})}
\newcommand{\scc}[2]{\mathrm{sc}_{{#1}}({#2})}
\newcommand{\sigsep}[1]{\overset{\sigma}{\underset{{#1}}{\perp}}}
\newcommand{\dsep}[1]{\overset{d}{\underset{{#1}}{\perp}}}
\newcommand{\indep}{\mathrel{\perp\mspace{-10mu}\perp}}
\newcommand{\ind}[1]{\underset{{#1}}{\indep}}
\newcommand{\given}{\,|\,}
\begin{document}

\title{Conditional independences and causal relations implied by sets of equations}

\author{\name Tineke Blom \email t.blom2@uva.nl \\
       \addr Informatics Institute\\
       University of Amsterdam\\
       P.O. Box 19268, 1000 GG Amsterdam, The Netherlands
       \AND
       \name Mirthe M. van Diepen \email mvdiepen@cs.ru.nl \\
       \addr Institute for Computer and Information Science\\
       Radboud University Nijmegen\\
       PO Box 9102, 6500 HC Nijmegen, The Netherlands
   	   \AND
   	   \name Joris M. Mooij \email j.m.mooij@uva.nl \\
   	   \addr Korteweg-De Vries Institute for Mathematics\\
   	   University of Amsterdam\\
   	   P.O. Box 19268, 1000 GG Amsterdam, The Netherlands
	   }

\editor{Viktor Chernozhukov}

\maketitle

\begin{abstract}
Real-world complex systems are often modelled by sets of equations with endogenous and exogenous variables. What can we say about the causal and probabilistic aspects of variables that appear in these equations without explicitly solving the equations? We make use of Simon's causal ordering algorithm \citep{Simon1953} to construct a \emph{causal ordering graph} and prove that it expresses the effects of soft and perfect interventions on the equations under certain unique solvability assumptions. We further construct a \emph{Markov ordering graph} and prove that it encodes conditional independences in the distribution implied by the equations with independent random exogenous variables, under a similar unique solvability assumption. We discuss how this approach reveals and addresses some of the limitations of existing causal modelling frameworks, such as causal Bayesian networks and structural causal models.
\end{abstract}

\begin{keywords}
  Causality, Conditional Independence, Structure Learning, Causal Ordering, Graphical Models, Dynamical Systems, Cycles
\end{keywords}

\section{Introduction}
\label{sec:introduction}

The discovery of causal relations is a fundamental objective in many scientific endeavours. The process of the scientific method usually involves a hypothesis, such as a causal graph or a set of equations, that explains observed phenomena. In practice, such a graph structure can be learned automatically from conditional independences in observational data via causal discovery algorithms, e.g.\ the well-known PC and FCI algorithms \citep{Spirtes2000, Zhang2008}. The crucial assumption in \emph{causal discovery} is that directed edges in this learned graph express causal relations between variables. However, an immediate concern is whether directed mixed graphs actually can simultaneously encode the causal semantics and the conditional independence constraints of a system.\footnote{See, for example, \citep{Dawid2010} and references therein for a discussion.} We explicitly define soft and perfect interventions on sets of equations and demonstrate that, for some models, a single graph expressing conditional independences between variables via d-separations does not seem to represent the effects of these interventions in an unambiguous way, while graphs that also have vertices representing equations do encode both the dependence and causal structure implied by these models. In particular, we show that the output of the PC algorithm does not have a straightforward causal interpretation when it is applied to data generated by a simple dynamical model with feedback at equilibrium.

It is often said that the ``gold standard'' in causal discovery is controlled experimentation. Indeed, the main principle of the scientific method is to derive predictions from a hypothesis, such as a causal graph or set of equations, that are then verified or rejected through experimentation. We show how, in practice, testable predictions can be derived automatically from sets of equations via the \emph{causal ordering algorithm}, introduced by \citet{Simon1953}. We adapt and extend the algorithm to construct a \emph{directed cluster graph} that we call the \emph{causal ordering graph}. From this, we can construct a directed graph that we call the \emph{Markov ordering graph}. We prove that, under a certain unique solvability assumption, the latter implies conditional independences between variables which can be tested in observational data and the former represents the effects of soft and certain perfect interventions which can be verified through experimentation. We believe that the technique of causal ordering is a useful and scalable tool in our search for and understanding of causal relations.

In this work, we also shed new light on differences between the causal ordering graph and the graph associated with a Structural Causal Model (SCM) (see \citet{Pearl2000, Bongers2020}), which are also commonly referred to as Structural Equation Models (SEMs).\footnote{The latter term has been used by econometricians since the 1950s. Note that, in the past some econometricians have used (cyclic/non-recursive) ``structural models'' without requiring that there is a specified one-to-one correspondence between endogenous variables and equations; see e.g.\ \citet{Basmann1963}. Recent usage is consistent with the implication that there is a specified variable on the left-hand side for each equation as is common in the SCM framework.} Specifically, we demonstrate that the two graphical representations may model different sets of interventions. Furthermore, we show that a stronger Markov property can sometimes be obtained by applying causal ordering to the structural equations of an SCM. By explicitly defining interventions and by distinguishing between the Markov ordering graph and the causal ordering graph we gain new insights about the correct interpretation of results in \citet{Iwasaki1994, Dash2005}. Throughout this work, we discuss an example in \citet{Iwasaki1994} to illustrate our ideas. Here, we use it to highlight the contributions of this paper and to provide an overview of its central concepts.

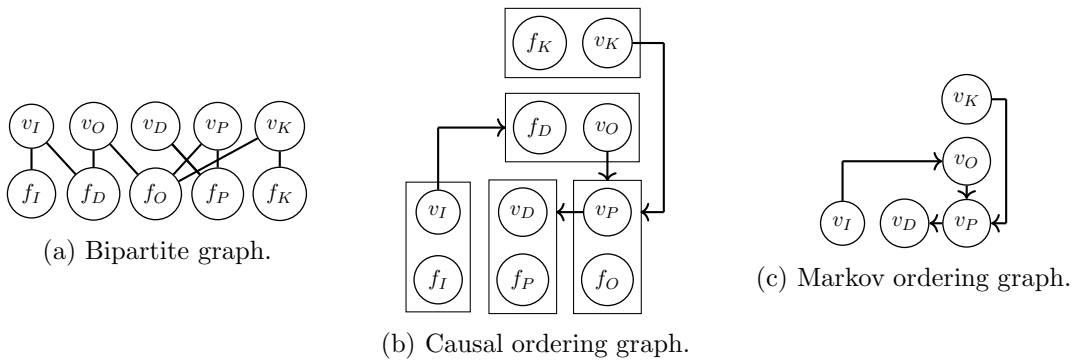
\begin{figure}[th]
\begin{subfigure}{0.33\textwidth}
\centering
\begin{tikzpicture}[scale=0.75,every node/.style={transform shape}] 
\GraphInit[vstyle=Normal]
\SetGraphUnit{1.1}
\SetVertexMath
\Vertex{v_I}
\EA(v_I){v_O}
\EA(v_O){v_D}
\EA(v_D){v_P}
\EA(v_P){v_K}
\SO[unit=1.2](v_I){f_I}
\SO[unit=1.2](v_O){f_D}
\SO[unit=1.2](v_D){f_O}
\SO[unit=1.2](v_P){f_P}
\SO[unit=1.2](v_K){f_K}
\tikzset{EdgeStyle/.style = {-}}
\Edge(f_I)(v_I)
\Edge(f_D)(v_I)
\Edge(f_D)(v_O)
\Edge(f_O)(v_K)
\Edge(f_O)(v_P)
\Edge(f_O)(v_O)
\Edge(f_P)(v_D)
\Edge(f_P)(v_P)
\Edge(f_P)(v_D)
\Edge(f_K)(v_K)
\end{tikzpicture}
\subcaption{Bipartite graph.}
\label{fig:bathtub bipartite graph}
\end{subfigure}%
\begin{subfigure}{0.33\textwidth}
\centering
\begin{tikzpicture}[scale=0.75,every node/.style={transform shape}] 
\GraphInit[vstyle=Normal]
\SetGraphUnit{1.2}
\SetVertexMath
\Vertex{v_I}
\EA[unit=1.5](v_I){v_D}
\EA[unit=1.5](v_D){v_P}
\SO(v_I){f_I}
\SO(v_D){f_P}
\SO(v_P){f_O}
\NO[unit=1.5](v_P){v_O}
\WE(v_O){f_D}
\NO[unit=1.5](v_O){v_K}
\WE(v_K){f_K}
\node[draw=black, fit=(v_I) (f_I), inner sep=0.1cm ]{};
\node[draw=black, fit=(v_D) (f_P), inner sep=0.1cm ]{};
\node[draw=black, fit=(v_P) (f_O), inner sep=0.1cm ]{};
\node[draw=black, fit=(v_O) (f_D), inner sep=0.1cm ]{};
\node[draw=black, fit=(v_K) (f_K), inner sep=0.1cm ]{};
\draw[EdgeStyle, style={-}](v_I) to (0,1.5);
\draw[EdgeStyle, style={->}](0,1.5) to (1.2,1.5);
\draw[EdgeStyle, style={->}](v_O) to (3,0.55);
\draw[EdgeStyle, style={->}](v_P) to (2.085,0);
\draw[EdgeStyle, style={-}](v_K) to (4,3);
\draw[EdgeStyle, style={-}](4,3) to (4,0);
\draw[EdgeStyle, style={->}](4,0) to (3.585,0);
\end{tikzpicture}
\subcaption{Causal ordering graph.}
\label{fig:bathtub directed cluster graph}
\end{subfigure}%
\begin{subfigure}{0.33\textwidth}
\centering
\begin{tikzpicture}[scale=0.75,every node/.style={transform shape}] 
\GraphInit[vstyle=Normal]
\SetGraphUnit{1.1}
\SetVertexMath
\Vertex{v_I}
\EA[unit=1.1](v_I){v_D}
\EA[unit=1.1](v_D){v_P}
\NO[unit=1.1](v_P){v_O}
\NO[unit=1.1](v_O){v_K}
\draw[EdgeStyle, style={-}](v_I) to (0,1.1);
\draw[EdgeStyle, style={->}](0,1.1) to (v_O);
\draw[EdgeStyle, style={->}](v_O) to (v_P);
\draw[EdgeStyle, style={->}](v_P) to (v_D);
\draw[EdgeStyle, style={-}](v_K) to (2.9,2.2);
\draw[EdgeStyle, style={-}](2.9,2.2) to (2.9,0);
\draw[EdgeStyle, style={->}](2.9,0) to (v_P);
\end{tikzpicture}
\subcaption{Markov ordering graph.}
\label{fig:bathtub markov graph}
\end{subfigure}
\caption{Three graphical representations for the bathtub system in equilibrium. The bipartite graph in Figure \ref{fig:bathtub bipartite graph} is a representation of the structure of equations $F=\{f_K,f_I,f_P,f_O,f_D\}$ where the vertices $V=\{v_K,v_I,v_P,v_O,v_D\}$ correspond to endogenous variables and there is an edge $(v-f)$ if and only if the variable $v$ appears in equation $f$. The outcome of the causal ordering algorithm is the directed cluster graph in Figure \ref{fig:bathtub directed cluster graph}, in which rectangles represent a partition of the variable and equation vertices into clusters. The corresponding Markov ordering graph for the variable vertices is given in Figure \ref{fig:bathtub markov graph}.}
\label{fig:bathtub}
\end{figure}

\begin{example}
	\label{ex:bathtub intro}
	Let us revisit a physical model of a filling bathtub in equilibrium that is presented in \citet{Iwasaki1994}. Consider a system where water flows from a faucet into a bathtub at a constant rate $X_{v_I}$ and it flows out of the tub through a drain with diameter $X_{v_K}$. An ensemble of such bathtubs that have faucets and drains with different (unknown) rates and diameters can be modelled by the equations $f_K$ and $f_I$ below:
	\begin{alignat}{3}
	\label{eq:dyn K}
	f_{K}:&\qquad X_{v_K} = U_{w_K},\\
	f_{I}:&\qquad X_{v_I} = U_{w_I},
	\intertext{where $U_{w_K}$ and $U_{w_I}$ are independent random variables both taking value in $\R_{>0}$. When the faucet is turned on the water level $X_{v_D}$ in the bathtub increases as long as the inflow $X_{v_I}$ of the water exceeds the outflow $X_{v_O}$ of water. The differential equation $\dot{X}_{v_D}(t) = U_{w_1}(X_{v_I}(t)-X_{v_O}(t))$ defines the mechanism for the rate of change in $X_{v_D}(t)$, where $U_{w_1}$ is a constant or a random variable taking value in $\R_{>0}$. At equilibrium the rate of change is equal to zero, resulting in the equilibrium equation}
	f_{D}:&\qquad U_{w_1}(X_{v_I}-X_{v_O}) = 0.
	\intertext{As the water level $X_{v_D}$ increases, the pressure $X_{v_P}$ that is exerted by the water increases as well. The mechanism for the change in pressure is defined by the differential equation $\dot{X}_{v_P}(t) = U_{w_2}(g\,U_{w_3}X_{v_D}(t)-X_{v_P}(t))$, where $g$ is the gravitational constant and $U_{w_2},U_{w_3}$ are constants or random variables both taking value in $\R_{>0}$. After equilibration, we obtain}
	f_{P}:&\qquad U_{w_2}(g\,U_{w_3}X_{v_D}-X_{v_P}) =0.
	\intertext{ The higher the pressure $X_{v_P}$ or the bigger the size of the drain $X_{v_K}$, the faster the water flows through the drain. The differential equation $\dot{X}_{v_O}(t) = U_{w_4}(U_{w_5}X_{v_K}X_{v_P}(t)-X_{v_O}(t))$ models the outflow rate of the water, where $U_{w_4},U_{w_5}$ are constants or random variables both taking value in $\R_{>0}$. The equilibrium equation $f_O$ is given by}
	\label{eq:dyn O}
	f_{O}:&\qquad U_{w_4}(U_{w_5}X_{v_K}X_{v_P}-X_{v_O}) = 0.
	\end{alignat}
  	We will study the conditional independences that are implied by equilibrium equations \eqref{eq:dyn K} to \eqref{eq:dyn O}. In Sections \ref{sec:effects soft interventions} and \ref{sec:effects perfect interventions} we will define the notion of soft and perfect interventions on sets of equations as a generalization of soft and perfect interventions on SCMs. The causal properties of sets of equilibrium equations are examined by comparing the equilibrium distribution before and after an intervention. Our approach is related to the comparative statics analysis that is used in economics to study the change in equilibrium distribution after changing exogenous variables or parameters in the model, see also \citet{Simon1988}. In this work, we will additionally consider the effects on the equilibrium distribution of perfect interventions targeting endogenous variables in the equilibrium equations.

	\paragraph{\normalfont{Graphical representations.}} A set of equations can be represented by a bipartite graph. In the case of the filling bathtub, the structure of equilibrium equations \eqref{eq:dyn K} to \eqref{eq:dyn O} is represented by the bipartite graph in Figure \ref{fig:bathtub bipartite graph}. The set $V=\{v_K,v_I,v_P,v_O,v_D\}$ consists of vertices that correspond to variables and the vertices in the set $F=\{f_K,f_I,f_P,f_O,f_D\}$ correspond to equations. There is an edge between a variable vertex $v_i$ and an equation vertex $f_j$ if the variable labelled $v_i$ appears in the equation with label $f_j$. A formal definition of a system of constraints and its associated bipartite graph will be provided in Section \ref{sec:system of constraints}. The causal ordering algorithm, introduced by \citet{Simon1953} and reformulated by us in Section \ref{sec:causal ordering}, takes a \emph{self-contained} bipartite graph as input and returns a \emph{causal ordering graph}. A causal ordering graph is a \emph{directed cluster graph} which consists of variable vertices $v_i$ and equation vertices $f_j$ that are partitioned into clusters. Directed edges go from variable vertices to clusters. For the filling bathtub, the causal ordering graph is given in Figure \ref{fig:bathtub directed cluster graph}. In Section \ref{sec:markov properties} we will show how the \emph{Markov ordering graph} can be constructed from a causal ordering graph. For the equilibrium equations of the filling bathtub, the Markov ordering graph is given in Figure \ref{fig:bathtub markov graph}. The causal ordering algorithm of \citet{Simon1953} can only be applied to bipartite graphs that have the property that they are \emph{self-contained}. In Section \ref{sec:extending the causal ordering algorithm} we introduce an extended causal ordering algorithm that can also be applied to bipartite graphs that are not self-contained.

	\paragraph{\normalfont{Markov property.}} The Markov ordering graph in Figure \ref{fig:bathtub markov graph} encodes conditional independences between the equilibrium solutions $X_{v_K}$, $X_{v_I}$, $X_{v_P}$, $X_{v_O}$, and $X_{v_D}$ of the equilibrium equations. In particular, d-separations between variable vertices in the Markov ordering graph imply conditional independences between the corresponding variables under certain solvability conditions, as we will prove in Theorem \ref{thm:markov property} in Section \ref{sec:markov properties}. In Figure \ref{fig:bathtub markov graph}, the variable vertices $v_I$ and $v_D$ are d-separated by $v_O$. It follows that at equilibrium the inflow rate $X_{v_I}$ and the water level $X_{v_D}$ are independent given the outflow rate $X_{v_O}$. In Sections \ref{sec:extending the causal ordering algorithm} and \ref{sec:generalized directed global markov property} we show how we can use \emph{a perfect matching} for a bipartite graph to construct a directed graph that implies conditional independences between variables via $\sigma$-separations.\footnote{\citet{Forre2017} introduced the notion of $\sigma$-separations to replace $d$-separations in directed graphs that may contain cycles. See Section \ref{sec:cyclic SCMs} for more details.}

	\paragraph{\normalfont{Soft interventions.}} The causal ordering graph in Figure \ref{fig:bathtub directed cluster graph} encodes the effects of \emph{soft} interventions targeting (equilibrium) \emph{equations}. This type of intervention is often also referred to as a mechanism change. We assume that the variables in each cluster can be solved uniquely from the equations in their cluster both before and after the intervention.\footnote{For the underlying dynamical model this assumption means that we assume that the equations of the model define a unique equilibrium to which the system converges and that the system also converges to a unique equilibrium that is defined by the model equations after an intervention on one of the parameters or exogenous variables in the model. For some dynamical systems (e.g., a biochemical reaction network) extra equations are required that describe the dependence of the equilibrium distribution on initial conditions and cannot be modelled in the standard SCM framework \citep{Blom2019}. For these systems, \citet{Blom2019} introduced the more general class of Causal Constraints Models.} A soft intervention has no effect on a variable if there is no directed path from the intervention target to the cluster containing the variable, as we will prove in Theorem \ref{thm:soft interventions} in Section \ref{sec:effects soft interventions}. Consider an experiment where the value of the gravitational constant $g$ is altered (e.g.\ by moving the bathtub to the moon) resulting in an alteration of the equation $f_P$. This is a soft intervention on $f_P$. There is no directed path from $f_P$ to clusters that contain the vertices $\{v_K,v_I,v_P,v_O\}$ in the causal ordering graph in Figure \ref{fig:bathtub directed cluster graph}. Since the conditions of Theorem \ref{thm:soft interventions} are satisfied, the soft intervention on $f_P$ has no effect on $\{X_{v_K},X_{v_I},X_{v_P},X_{v_O}\}$ but it may have an effect on $X_{v_D}$ (depending on the precise functional form of the equations and the values of the parameters).

	\paragraph{\normalfont{Perfect interventions.}} The causal ordering graph in Figure \ref{fig:bathtub directed cluster graph} also encodes the effects of \emph{perfect} interventions on \emph{clusters}, under the assumption that variables can be solved uniquely from the equations of their clusters in the causal ordering graph before and after intervention. We will formally prove this in Theorem \ref{theo:effects of perfect interventions on clusters} in Section \ref{sec:effects perfect interventions}. Consider a perfect intervention on the cluster $\{f_K,v_K\}$ (i.e.\ fixing the diameter $X_{v_K}$ of the drain by altering the equation $f_K$) in Figure \ref{fig:bathtub directed cluster graph}. This intervention generically changes the solution for $\{X_{v_K}, X_{v_P}, X_{v_D}\}$ because $v_K$ is targeted by the intervention and there are directed paths from the cluster of $v_K$ to the clusters of $v_P$ and $v_D$. It has no effect on $\{X_{v_I},X_{v_O}\}$ because there are no directed paths from the cluster of $v_K$ to the clusters of $v_I$ and $v_O$.
\end{example}

\subsection{System of constraints}
\label{sec:system of constraints}

Our formal treatment of sets of equations mirrors the definition of a structural causal model (see also Definition~\ref{def:scm}) in the sense that we separate the model from the endogenous random variables that solve it. We introduce a mathematical object that we call a \emph{system of constraints} to represent equations and their structure as a bipartite graph.

\begin{definition}
	\label{def:system of constraints}
	A \emph{system of constraints} is a tuple $\tuple{\B{\X}, \B{X}_W, \B{\Phi}, \BG=\tuple{V,F,E}}$ where
	\begin{enumerate}
		\item $\B{\X}=\bigotimes_{v\in V}\X_v$, where each $\X_v$ is a standard measurable space and the domain of a variable $X_v$,
		\item $\B{X}_W=(X_w)_{w\in W}$ is a family of independent random variables taking value in $\B{\X}_W$ with $W\subseteq V$ a set of indices corresponding to exogenous variables,\footnote{This means that the nodes $V\setminus W$ correspond to endogenous variables.}
		\item $\B{\Phi}=(\Phi_f)_{f\in F}$ is a family of constraints, each of which is a tuple $\Phi_f=\tuple{\phi_f,c_f,V(f)}$, with:
		\begin{enumerate}
			\item $V(f)\subseteq V$
			\item $c_f$ a constant taking value in a standard measurable space $\Y$,
			\item $\phi_f:\B{\X}_{V(f)}\to \Y$ a measurable function,
		\end{enumerate}	
		\item $\BG=\tuple{V, F, E}$ is a bipartite graph with:
		\begin{enumerate}
			\item $V$ a set of nodes corresponding to variables,
			\item $F$ a set of nodes corresponding to constraints,
			\item $E=\{(f-v): f\in F, v\in V(f)\}$ a set of edges.
		\end{enumerate}
	\end{enumerate}
\end{definition}

Henceforth we will use the terms `variables' and `vertices corresponding to variables' interchangeably. We will also use the terms `constraints', `equations', and `vertices corresponding to constraints' interchangeably. We will often refer to the bipartite graph in a system of constraints as the `associated bipartite graph'. A constraint is formally defined as a triple consisting of a measurable function, a constant, and a subset of the variables. For the sake of convenience we will often write constraints as equations instead. Note that the notation for adjacencies in the associated bipartite graph is equivalent to the notation for the variables that belong to a constraint: $V(f)=\adj{\BG}{f}$. For a set $S_F\subseteq F$, we will let $\adj{\BG}{S_F} = V(S_F)=\cup_{f\in S_F}V(f)$ denote the adjacencies of the vertices $f \in S_F$.

When modelling some system with a system of constraints, we are implicitly assuming that the constraints are \emph{reversible} in the sense that the causal relations between the endogenous variables are flexible and may depend in principle on the entire set of constraints in the system. However, there is an important modelling choice regarding which of the variables to consider as \emph{endogenous} (``internal'' to the system) and which variables to consider as \emph{exogenous} (``external'' to the system). The implicit assumption here is that \emph{the endogenous variables cannot cause the exogenous variables}. This is the (only) causal ``background knowledge'' that is expressed formally by a system of constraints. As \citet{Simon1953} showed, and as we will explicate in later sections, the causal relations between the endogenous variables can then be obtained by applying Simon's causal ordering algorithm.

\begin{example}
Consider two variables: the temperature in a room ($X_1$) and the reading of a thermometer in the same room ($X_2$). One can think of different systems of constraints to model these variables. One possibility is the single constraint ($X_1 - X_2 = 0$) in which both $X_1$ and $X_2$ are considered to be endogenous variables. As it turns out, we will then not be able to draw any conclusion regarding the causal relation between $X_1$ and $X_2$. Another possibility would be to use the same constraint, but now considering $X_1$ to be exogenous and $X_2$ to be endogenous. Then, one will find that $X_1$ causes $X_2$, but not vice versa, which may appear to be a realistic model. Yet another possibility with the same constraint would be to consider $X_2$ to be the exogenous variable and $X_1$ to be endogenous. This model would be considered less realistic in most situations (except perhaps in somewhat unnatural settings where the thermometer would be broken, but its reading would be used by some agent to adjust the heating in order to control the room temperature).

Thus, the constraint $X_1 - X_2 = 0$ on its own does not lead to any conclusions regarding the causal relations between variables $X_1$ and $X_2$; it is only through the additional background knowledge (represented by the distinction between endogenous and exogenous variables) that the causal directionality is fixed. In cases with more than one endogenous variable (like in Example~\ref{ex:bathtub intro}), the causal ordering algorithm can be used to ``propagate'' the causal directionality from exogenous to endogenous variables.
\end{example}

\subsection{Related work and contributions}
\label{sec:introduction:related work}

Graphical models are a popular statistical tool to model probabilistic aspects of complex systems. They represent a set of conditional independences between random variables that correspond to vertices which allows us to learn their graphical structure from data \citep{Lauritzen1996}. These models are often interpreted causally, so that directed edges between vertices are interpreted as direct causal relations between corresponding variables \citep{Pearl2000}. The strong assumptions that are necessary for this viewpoint have been the topic of debate \citep{Dawid2010}. This work contributes to this discussion by revisiting an example in \citet{Iwasaki1994}, and discussing how it seems that, in this case, the presence of vertices representing equations is required to simultaneously express both conditional independences and the effects of interventions in a single graph.

Throughout this work, we discuss the application of the causal ordering algorithm to the equilibrium equations of the bathtub model that we discussed in Example \ref{ex:bathtub intro}. In the literature, feedback processes that have reached equilibrium have been represented by e.g.\ chain graphs \citep{Lauritzen2002} and cyclic directed graphs \citep{Spirtes1995,Mooij2013,Bongers2018}. For the latter it was shown that they imply conditional independences in the equilibrium distribution via the d-separation criterion in the linear or discrete case \citep{Forre2017} but that the directed global Markov property may fail if the underlying model is neither linear nor discrete \citep{Spirtes1995}. The alternative criterion that \citet{Spirtes1995} formulated for the ``collapsed graph'' was recently reformulated in terms of \emph{$\sigma$-separations} and shown to hold in very general settings \citep{Forre2017}. Constraint-based causal discovery algorithms for the cyclic setting under various assumptions are given in \citet{Richardson1996, Forre2018, Strobl2018, Mooij2020a, Mooij2020b}. The causal properties of dynamical systems at equilibrium were previously studied by \citet{Fisher1970, Mooij2013, Hyttinen2012, Lauritzen2002, Mooij2012, Bongers2018}, who consider graphical and causal models that arise from studying the stationary behaviour of dynamical models. For the deterministic case, \citet{Mooij2013} propose to map first-order differential equations to labelled equilibrium equations and then to the structural equations of an SCM. This idea was recently generalized to the stochastic case and higher order differential equations \citep{Bongers2018}. For certain systems, such as the bathtub model in Example \ref{ex:bathtub intro}, this construction may lead to a cyclic SCM with self-cycles \citep{Bongers2018}. The causal and conditional independence properties of cyclic SCMs (possibly with self-cycles) have been studied by \citet{Bongers2020}. In other work assumptions on the underlying dynamical model have been made to avoid the complexities of SCMs with self-cycles. Here, we will consider potential benefits (e.g.\ obtaining a stronger Markov property) of applying the technique of causal ordering to the structural equations of the cyclic SCM for the equilibrium equations of dynamical systems such as the bathtub system.

Our work generalizes the causal ordering algorithm which was introduced by \citet{Simon1953}. Following \citet{Dash2008}, we formally prove that the causal ordering graph that is constructed by the algorithm is unique. One of the novelties of this work is that we also prove that it encodes the effects of soft and certain perfect interventions and, moreover, we show how it can be used to construct a DAG that implies conditional independences via the d-separation criterion. There also exists a different, computationally more efficient, algorithm for causal ordering \citep{Nayak1995, Goncalves2016}. We formally prove that this algorithm is equivalent to the one in \citet{Simon1953}. This approach motivates an alternative representation of the system as a directed graph that may contain cycles. We prove that the \emph{generalized directed global Markov property}, as formulated by \citet{Forre2017}, holds for this graphical representation. Using methods to determine the upper-triangular form of a matrix in \citet{Pothen1990}, we further extend the causal ordering algorithm so that it can be applied to any bipartite graph.

In Section \ref{sec:discussion} we will present a detailed discussion of how our work relates to that of \citet{Iwasaki1994, Bongers2018, Bongers2020, Dash2005}. We show that what \citet{Iwasaki1994} call the ``causal graph'' coincides with the Markov ordering graph in our work. We take a closer look at the intricacies of (possible) causal implications of the Markov ordering graph and notice that it neither represents the effects of soft interventions nor does it have a straightforward interpretation in terms of perfect interventions. Because \citet{Simon1988} assume that a one-to-one correspondence between variables and equations is known in advance, they can use the Markov ordering graph to read off the effects of soft interventions. We argue that the causal ordering graph, and not the Markov ordering graph, should be used to represent causal relations when the matching between variables and equations is not known before-hand. This sheds some new light on the work of \citet{Dash2005} on (causal) structure learning and equilibration in dynamical systems. We further discuss the advantages and disadvantages of our causal ordering approach compared to the SCM framework.

\section{Causal ordering}
\label{sec:causal ordering}

In this section, we adapt the causal ordering algorithm of \citet{Simon1953}, rephrase it in terms of \emph{self-contained} bipartite graphs, and define the output of the algorithm as a \emph{directed cluster graph}.\footnote{The notion of a directed cluster graph corresponds to the box representation of a collapsed graph in \citet{Richardson1996}, Chapter 4.} We then prove that Simon's causal ordering algorithm is well-defined and has a unique output. 

\begin{definition}
\label{def:directed cluster graph}
A \emph{directed cluster graph} is an ordered pair $\tuple{\mathcal{V},\mathcal{E}}$, where $\mathcal{V}$ is a partition $V^{(1)},V^{(2)},\ldots,V^{(n)}$ of a set of vertices $V$ and $\mathcal{E}$ is a set of directed edges $v\to V^{(i)}$ with $v\in V$ and $V^{(i)}\in \mathcal{V}$. For $x\in V$ we let $\mathrm{cl}(x)$ denote the cluster in $\V$ that contains $x$. We say that there is a \emph{directed} path from $x\in V$ to $y\in V$ if either $\mathrm{cl}(x)=\mathrm{cl}(y)$ or there is a sequence of clusters $V_1=\mathrm{cl}(x),V_2,\ldots, V_{k-1}, V_k=\mathrm{cl}(y)$ so that for all $i\in\{1,\ldots,k-1\}$ there is a vertex $z_i\in V_i$ such that $(z_i\to V_{i+1})\in\mathcal{E}$.
\end{definition}

\subsection{Self-contained bipartite graphs}

The causal ordering algorithm in \citet{Simon1953} is presented in terms of a \emph{self-contained} set of equations and variables that appear in them. For bipartite graphs, the notion of \emph{self-containedness} corresponds to the conditions in Definition \ref{def:self-contained}.

\begin{definition}\label{def:self-contained}
	Let $\BG=\tuple{V,F,E}$ be a bipartite graph. A subset $F'\subseteq F$ is said to be \emph{self-contained} if
	\begin{enumerate}
		\item\label{SC1} $|F'|=|\adj{\BG}{F'}|$,
		\item\label{SC2} $|F''|\le|\adj{\BG}{F''}|$ for all $F''\subseteq F'$.\footnote{This condition is also called the Hall Property \citep{Hall1986}.}
	\end{enumerate}

	The bipartite graph $\BG$ is said to be \emph{self-contained} if $|F|=|V|$ and $F$ is self-contained. A non-empty self-contained set $F'\subseteq F$ is said to be a \emph{minimal self-contained set}\footnote{In this case the Strong Hall Property holds, that is $|F''|<|\adj{\BG}{F''}|$ for all $\emptyset \varsubsetneq F''\varsubsetneq F'$ \citep{Hall1986}.} if all its non-empty strict subsets are not self-contained.
\end{definition}

\begin{example}
	\label{ex:self-contained}
	In Figure \ref{fig:self-contained} a bipartite graph is shown with self-contained sets
	\begin{equation*}
	\{f_1\},\{f_1,f_2,f_3,f_4\}, \{f_1,f_2,f_3,f_4,f_5\}
	\end{equation*}
	where $\{f_1\}$ is a minimal self-contained set. Since the set $\{f_1,f_2,f_3,f_4,f_5\}$ is self-contained and $|V|=|F|=5$, we say that this bipartite graph is self-contained.
\end{example}

\begin{figure}[ht]
	\centering
	\begin{tikzpicture}[scale=0.75,every node/.style={transform shape}]
	\GraphInit[vstyle=Normal]
	\SetGraphUnit{1}
	\SetVertexMath
	\Vertex{v_1}
	\EA(v_1){v_2}
	\EA(v_2){v_3}
	\EA(v_3){v_4}
	\EA(v_4){v_5}
	\SO[unit=1.2](v_1){f_1}
	\SO[unit=1.2](v_2){f_2}
	\SO[unit=1.2](v_3){f_3}
	\SO[unit=1.2](v_4){f_4}
	\SO[unit=1.2](v_5){f_5}
	\tikzset{EdgeStyle/.style = {-}}
	\Edge(f_1)(v_1)
	\Edge(f_2)(v_2)
	\Edge(f_3)(v_3)
	\Edge(f_4)(v_4)
	\Edge(f_5)(v_5)
	\Edge(f_2)(v_1)
	\Edge(f_2)(v_3)
	\Edge(f_3)(v_4)
	\Edge(f_4)(v_2)
	\Edge(f_5)(v_4)
	\end{tikzpicture}
	\caption{A self-contained bipartite graph $\BG=\tuple{V,F,E}$ with $V=\{v_1,v_2,v_3,v_4,v_5\}$ and $F=\{f_1,f_2,f_3,f_4,f_5\}$. The sets $\{f_1\}$, $\{f_1,f_2,f_3,f_4\}$, and $\{f_1,f_2,f_3,f_4,f_5\}$ are self-contained, and $\{f_1\}$ is the only minimal self-contained set.}
	\label{fig:self-contained}
\end{figure}
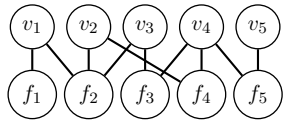

Sets of equations that model systems in the real world often include both \emph{endogenous} and \emph{exogenous} variables. The distinction is that exogenous variables are assumed to be determined outside the system and function as inputs to the model, whereas the endogenous variables are part of the system. The following example illustrates that the associated bipartite graph for a set of equations with both endogenous and exogenous variables is usually not self-contained.

\begin{example}
	\label{ex:system of constraints exogenous}
	Let $V=\{v_1,v_2,w_1,w_2\}$ be an index set for endogenous and exogenous variables $\B{X}=(X_i)_{i\in V}$, $W=\{w_1,w_2\}$ a subset that is an index set for exogenous variables only, and $F=\{f_1,f_2\}$ an index set for equations:
	\begin{alignat*}{3}
	\Phi_{f_1}:&\qquad& X_{v_1}-X_{w_1} &=0,\\
	\Phi_{f_2}:&\qquad& X_{v_2}-X_{v_1}-X_{w_2} &=0.
	\end{alignat*}
	The associated bipartite graph $\BG=\tuple{V,F,E}$ is given in Figure \ref{fig:bipartite graph with exogenous variables}. It has vertices $V$ that correspond to both endogenous variables $X_{v_1}$, $X_{v_2}$ and exogenous variables $X_{w_1}$, $X_{w_2}$. The vertices $F$ correspond to constraints $\Phi_{f_1}$ and $\Phi_{f_2}$. Edges between vertices $v\in V$ and $f\in F$ are present whenever $v\in V(f)$ (i.e.\ when the variable $X_v$ appears in the constraint $\Phi_f$). Since $|V|\neq |F|$, the associated bipartite graph is not self-contained.
\end{example}

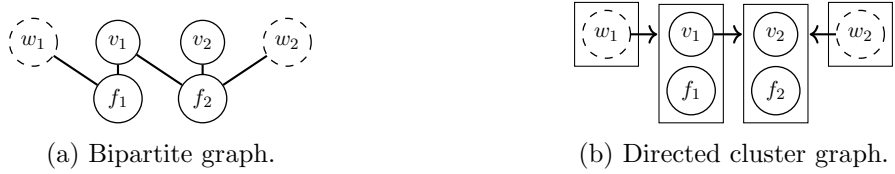
\begin{figure}[ht]
	\begin{subfigure}[b]{.5\textwidth}
		\centering
		\begin{tikzpicture}[scale=0.75,every node/.style={transform shape}]
		\GraphInit[vstyle=Normal]
		\SetGraphUnit{1}
		\SetVertexMath
		\Vertex[style={dashed}]{w_1}
		\EA[unit=1.5](w_1){v_1}
		\EA[unit=1.5](v_1){v_2}
		\EA[unit=1.5,style={dashed}](v_2){w_2}
		\SO(v_1){f_1}
		\SO(v_2){f_2}
		\draw[EdgeStyle, style={-}](w_1) to (f_1);
		\draw[EdgeStyle, style={-}](v_1) to (f_1);
		\draw[EdgeStyle, style={-}](w_2) to (f_2);
		\draw[EdgeStyle, style={-}](v_2) to (f_2);
		\draw[EdgeStyle, style={-}](v_1) to (f_2);
		\end{tikzpicture}
		\caption{Bipartite graph.}
		\label{fig:bipartite graph with exogenous variables}
	\end{subfigure}%
	\begin{subfigure}[b]{.5\textwidth}
		\centering
		\begin{tikzpicture}[scale=0.75,every node/.style={transform shape}]
		\GraphInit[vstyle=Normal]
		\SetGraphUnit{1}
		\SetVertexMath
		\Vertex[style={dashed}]{w_1}
		\EA[unit=1.5](w_1){v_1}
		\EA[unit=1.5](v_1){v_2}
		\EA[unit=1.5,style={dashed}](v_2){w_2}
		\SO(v_1){f_1}
		\SO(v_2){f_2}
		\node[draw=black, fit=(v_1) (f_1), inner sep=0.1cm ]{};
		\node[draw=black, fit=(v_2) (f_2), inner sep=0.1cm ]{};
		\node[draw=black, fit=(w_1), inner sep=0.1cm ]{};
		\node[draw=black, fit=(w_2), inner sep=0.1cm ]{};
		\draw[EdgeStyle, style={->}](w_1) to (0.9,0);
		\draw[EdgeStyle, style={->}](v_1) to (2.4,0);
		\draw[EdgeStyle, style={->}](w_2) to (3.6,0);
		\end{tikzpicture}
		\caption{Directed cluster graph.}
		\label{fig:directed cluster graph exogenous}
	\end{subfigure}%
	\caption{The bipartite graph in Figure \ref{fig:bipartite graph with exogenous variables} is associated with the constraints in Example \ref{ex:system of constraints exogenous}. Exogenous variables are indicated by dashed circles. The directed cluster graph that is obtained by applying Algorithm \ref{alg:causal ordering minimal self-contained exo} is shown in Figure \ref{fig:directed cluster graph exogenous}.}
	\label{fig:exogenous variables}	
\end{figure}

\subsection{Causal ordering algorithm}
\label{sec:causal ordering algorithm}

The causal ordering algorithm, as formulated by \citet{Simon1953}, has as input a self-contained set of equations and as output it has an ordering on clusters of variables that appear in these equations. We reformulate the algorithm in terms of bipartite graphs and minimal self-contained sets. The input of the algorithm is then a self-contained bipartite graph and its output a directed cluster graph that we call the \emph{causal ordering graph}.

The causal ordering algorithm below has been adapted for systems of constraints with exogenous variables. The input is a bipartite graph $\BG=\tuple{V,F,E}$ and a set of vertices $W\subseteq V$ (corresponding to exogenous variables) such that the subgraph $\BG'=\tuple{V',F',E'}$ induced by $(V\setminus W) \cup F$ is self-contained. The algorithm starts out by adding the exogenous vertices as singleton clusters to a cluster set $\V$ during an initialization step. Subsequently, the algorithm searches for a minimal self-contained set $S_F\subseteq F$ in $\BG'$. Together with the set of adjacent variable vertices $S_V=\mathrm{adj}_{\BG'}(S_F)$ a cluster $S_F\cup S_V$ is formed and added to $\mathcal{V}$. For each $v\in V$, an edge $(v\to (S_F\cup S_V))$ is added to $\mathcal{E}$ if $v\notin S_V$ and $v\in\mathrm{adj}_{\BG}(S_F)$. In other words, the cluster has an incoming edge from each variable vertex that is adjacent to the cluster but not in it. These steps are then repeated for the subgraph induced by the vertices $(V'\cup F')\setminus (S_V \cup S_F)$ that are not in the cluster, as long as this is not the null graph. The order in which the self-contained sets are obtained is represented by one of the topological orderings of the clusters in the causal ordering graph $\mathrm{CO}(\BG)=\tuple{\mathcal{V},\mathcal{E}}$.

\SetKwInput{KwData}{Input}
\SetKwInput{KwResult}{Output}

\begin{center}
	\begin{algorithm}[H]
		\DontPrintSemicolon
		\KwData{a set of exogenous vertices $W$, a bipartite graph $\BG=\tuple{V,F,E}$ such that its subgraph induced by $(V\setminus W) \cup F$ is self-contained}
		\KwResult{directed cluster graph $\mathrm{CO}(\BG)=\tuple{\mathcal{V},\mathcal{E}}$}
		$\mathcal{E}\leftarrow \emptyset$ {\color{gray}\tcp*{initialization}}
		$\mathcal{V}\leftarrow \{\{w\}:w\in W\}$ {\color{gray}\tcp*{initialization}}
		$\BG'\leftarrow \tuple{V',F',E'}$ subgraph induced by $(V\setminus W) \cup F$ {\color{gray}\tcp*{initialization}}
		\While{$\BG'$ is not the null graph}{
			$S_F\leftarrow$ a minimal self-contained set of $F'$\;
			$C\leftarrow $ $S_F\cup\mathrm{adj}_{\BG'}(S_F)$ {\color{gray}\tcp*{construct cluster}}
			$\mathcal{V} \leftarrow$ $\mathcal{V}\cup \{C\}$ {\color{gray}\tcp*{add cluster}}
			\For{$v\in \mathrm{adj}_{\BG}(S_F)\setminus \mathrm{adj}_{\BG'}(S_F)$}{
				$\mathcal{E}\leftarrow$ $\mathcal{E}\cup \{(v\to C)\}$ {\color{gray}\tcp*{add edges to cluster}}
			}
			$\BG'\leftarrow$ subgraph of $\BG'$ induced by $(V'\cup F')\setminus C$ {\color{gray}\tcp*{remove cluster}}
		}
		\label{alg:causal ordering minimal self-contained exo}
		\caption{Causal ordering using minimal self-contained sets.}
	\end{algorithm}
\end{center}

Theorem \ref{thm:uniqueness} shows that the output of causal ordering via minimal self-contained sets is well-defined and unique.

\begin{restatable}{theorem}{uniqueness}
	\label{thm:uniqueness}
	The output of Algorithm \ref{alg:causal ordering minimal self-contained exo} is well-defined and unique.
\end{restatable}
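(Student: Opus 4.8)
The plan is to prove three things in sequence: (i) that the algorithm is \emph{well-defined}, i.e.\ that at every iteration a minimal self-contained set of $F'$ exists, that the working graph $\BG'$ stays self-contained, and that the loop terminates and returns a genuine directed cluster graph; (ii) a structural lemma on the lattice of self-contained sets of $\BG'$, from which pairwise disjointness of distinct minimal self-contained sets (and of their neighbourhoods) follows; and (iii) uniqueness, via a confluence argument resting on (ii).

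For (i), the first remark is that inside a self-contained bipartite graph the Hall condition SC2 holds for every subset of $F'$ automatically, so ``self-contained subset of $\BG'$'' reduces to the cardinality equality SC1. Hence, whenever $\BG'=\tuple{V',F',E'}$ is self-contained and non-null, $F'$ is itself a non-empty self-contained subset, and by finiteness it contains an inclusion-minimal one; this is the $S_F$ the loop selects. Next I would verify the loop invariant: if $S_F\subseteq F'$ is self-contained in $\BG'$ and $C=S_F\cup\adj{\BG'}{S_F}$, then the induced subgraph $\BG''$ on $(V'\cup F')\setminus C$ is again self-contained. The computation is that for $F''\subseteq F'\setminus S_F$ we have $\adj{\BG''}{F''}=\adj{\BG'}{F''}\setminus\adj{\BG'}{S_F}$, and applying SC1/SC2 of $\BG'$ to the disjoint union $F''\cup S_F$ gives $|F''|+|S_F|\le|\adj{\BG'}{F''}\cup\adj{\BG'}{S_F}|=|S_F|+|\adj{\BG''}{F''}|$, hence Hall holds in $\BG''$; taking $F''=F'\setminus S_F$ and comparing with the number $|V'|-|\adj{\BG'}{S_F}|=|F'|-|S_F|$ of surviving variable vertices forces equality, so $\BG''$ is self-contained (and, incidentally, has no isolated variable vertices). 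Each iteration deletes the non-empty equation set $S_F$, so the loop stops after at most $|F|$ steps; the clusters produced partition $(V\setminus W)\cup F$ and the singletons $\{w\}$ partition $W$, and every added edge $v\to C$ has $v\in V\setminus C$, so the output is a directed cluster graph in the sense of Definition~\ref{def:directed cluster graph}. This gives well-definedness.

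For (ii), the key point is submodularity of the neighbourhood function: $\adj{\BG'}{A\cup B}=\adj{\BG'}{A}\cup\adj{\BG'}{B}$ while $\adj{\BG'}{A\cap B}\subseteq\adj{\BG'}{A}\cap\adj{\BG'}{B}$, so if $A,B\subseteq F'$ are both self-contained in $\BG'$ then
\[
|A|+|B|=|\adj{\BG'}{A}|+|\adj{\BG'}{B}|\ \ge\ |\adj{\BG'}{A\cup B}|+|\adj{\BG'}{A\cap B}|\ \ge\ |A\cup B|+|A\cap B|=|A|+|B|,
\]
so every inequality is an equality; in particular $A\cup B$ and $A\cap B$ are self-contained and $\adj{\BG'}{A}\cap\adj{\BG'}{B}=\adj{\BG'}{A\cap B}$. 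Applying this to two distinct minimal self-contained sets $M_a\ne M_b$: $M_a\cap M_b$ is self-contained and, if non-empty, is a proper non-empty subset of $M_a$ (it cannot equal $M_a$, since then minimality of $M_b$ would give $M_a=M_b$), contradicting minimality; hence $M_a\cap M_b=\emptyset$ and then $\adj{\BG'}{M_a}\cap\adj{\BG'}{M_b}=\adj{\BG'}{\emptyset}=\emptyset$. A short additional argument using this shows that after deleting the cluster of $M_a$ the set $M_b$ is still a minimal self-contained set, with unchanged neighbourhood, hence with the same associated cluster $C_b=M_b\cup\adj{\BG'}{M_b}$ and the same incoming edges $\{v\to C_b:v\in\adj{\BG}{M_b}\setminus\adj{\BG'}{M_b}\}$ that it would have produced directly in $\BG'$; and symmetrically deleting the clusters of $M_a$ and $M_b$ in either order yields the same induced subgraph.

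For (iii), I would prove by strong induction on $|F'|$ that the collection of clusters and edges produced by the while loop, starting from $\BG'$, depends only on $\BG$ and $\BG'$ and not on the choices of minimal self-contained sets; the base case $F'=\emptyset$ is immediate. For the inductive step, compare two executions whose first picks are $M_a$ and $M_b$: if $M_a=M_b$ they agree after one step and the induction hypothesis on the smaller remaining graph finishes it; if $M_a\ne M_b$, then by (ii) each of $M_a,M_b$ is still minimal self-contained after removing the other's cluster, the two clusters and their edge sets are the same regardless of order, and removing both yields a common graph with strictly fewer equation vertices. Hence each execution can be continued by processing the other set next; by the induction hypothesis applied to the intermediate graph each execution has the same total output as this continuation, and by the induction hypothesis applied to the common smaller graph the two continuations have identical output, so the two original executions agree. (Equivalently, one may phrase this as Newman's lemma: the state-rewriting relation is terminating by (i) and locally confluent by (ii), hence confluent with a unique normal form.) I expect the main obstacle to be exactly the bookkeeping in step (iii) — keeping track that the \emph{accumulated} clusters and edges, not merely the remaining graph, are invariant under reordering — while the conceptual core is the submodularity estimate in (ii).
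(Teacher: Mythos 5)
Your proposal is correct and follows essentially the same route as the paper's proof: the paper likewise establishes (a) that removing a minimal self-contained cluster leaves a self-contained graph in which the remaining minimal self-contained sets persist unchanged, (b) that distinct minimal self-contained sets and their neighbourhoods are pairwise disjoint (via the same counting that your submodularity inequality packages more slickly), and (c) uniqueness by an induction showing any two executions produce the same clusters. Your Newman's-lemma phrasing of step (iii) is just a different bookkeeping of the paper's "every cluster of one run is eventually chosen in the other" induction.
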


The following example shows how the causal ordering algorithm works on the self-contained bipartite graph in Figure \ref{fig:self-contained} and the bipartite graph in Figure \ref{fig:bipartite graph with exogenous variables}.

\begin{example}
	\label{ex:adapted causal ordering with minimal self-contained sets}
	Consider the set of equations in Example \ref{ex:system of constraints exogenous} and its associated bipartite graph in Figure \ref{fig:bipartite graph with exogenous variables}. The subgraph induced by the endogenous variables $v_1,v_2$ and the constraints $f_1,f_2$ is self-contained. We initialize Algorithm \ref{alg:causal ordering minimal self-contained exo} with $\mathcal{E}$ the empty set, $\mathcal{V}=\{ \{w_1\}, \{w_2\}\}$, and $\BG'$ the subgraph induced by $\{v_1,v_2,f_1,f_2\}$. We then first find the minimal self-contained set $\{f_1\}$. Its adjacencies are $\{v_1\}$ in $\BG'$ and $\{v_1,w_1\}$ in $\BG$. We add $\{v_1,f_1\}$ to $\V$ and add the edge $(w_1\to \{v_1,f_1\})$ to $\mathcal{E}$. Finally, we add $\{v_2,f_2\}$ to $\mathcal{V}$ and the edges $(v_1\to \{v_2,f_2\})$ and $(w_2\to\{v_2,f_2\})$ to $\mathcal{E}$. The output of the causal ordering algorithm is the directed cluster graph in Figure \ref{fig:directed cluster graph exogenous}. This reflects how one would solve the system of equations $\Phi_{f_1}$, $\Phi_{f_2}$ with respect to $X_{v_1}$, $X_{v_2}$ in terms of $X_{w_1}$, $X_{w_2}$ by hand.
\end{example}

\section{Extending the causal ordering algorithm}
\label{sec:extending the causal ordering algorithm}

In this section we present an adaptation of an alternative, computationally less expensive, algorithm for causal ordering which uses perfect matchings instead of minimal self-contained sets, similar to the algorithm suggested by \citet{Nayak1995}. \citet{Goncalves2016} proved that Simon's classic algorithm makes use of a subroutine that solves an NP-hard problem, whereas the computational complexity of Nayak's algorithm is bounded by $\mathcal{O}\left(|V|\,|E|\right)$, where $|V|$ is the number of nodes and $|E|$ is the number of edges in the bipartite graph. Here, we provide a proof for the fact that causal ordering via minimal self-contained sets is equivalent to causal ordering via perfect matchings. There are many systems of equations with a unique solution that consist of more equations than there are endogenous variables, most notably in the case of non-linear equations, or in the presence of cycles. In that case the bipartite graph associated with these equations may not be self-contained. In this section, we show how Nayak's algorithm can be extended using maximum matchings so that it can be applied to any bipartite graph.

\subsection{Causal ordering via perfect matchings}
\label{sec:causal ordering via perfect matchings}

Given a bipartite graph $\BG$, the \emph{associated directed graph} can be constructed from a matching $\M$ by \emph{orienting edges}. A directed cluster graph can then be constructed via the operations that \emph{construct clusters} and \emph{merge clusters} in Definition \ref{def:orient, cluster, merge} below.

\begin{definition}
\label{def:orient, cluster, merge}
Let $\BG=\tuple{V,F,E}$ be a bipartite graph and $\M$ a perfect matching for $\BG$.
\begin{enumerate}
	\item \emph{Orient edges}: For each $(v-f)\in E$ the edge set $E_{\mathrm{dir}}$ has an edge $(v\leftarrow f)$ if $(v-f)\in \M$ and an edge $(v\rightarrow f)$ if $(v-f)\notin \M$. $E_{\mathrm{dir}}$ has no additional edges. The \emph{associated directed graph} is $\G(\BG,\M)=\tuple{V\cup F,E_{\mathrm{dir}}}$.
	\item \emph{Construct clusters}: Let $\V'$ be a partition of vertices $V\cup F$ into strongly connected components in $\G(\BG,\M)$. For each $(x\to w)\in E_{\mathrm{dir}}$ the edge set $\E'$ has an edge $(x\to \mathrm{cl}(w))$ if $x\notin \mathrm{cl}(w)$, where $\mathrm{cl}(w)\in\V'$ is the strongly connected component of $w$ in $\G(\BG,\M)$. The edge set $\E'$ has no additional edges. The \emph{associated clustered graph} is $\mathrm{clust}(\G(\BG,\M))=\tuple{\V', \E'}$. 
  \item \emph{Merge clusters}: Let $\V = \{S\cup\M(S): S\in\V'\}$. For each $(x\to S)\in \E'$ with $x\notin \M(S)$ the edge set $\E$ contains an edge $(x\to S\cup M(S))$. The edge set $\E$ has no additional edges. The \emph{associated clustered and merged graph} is $\mathrm{merge}(\mathrm{clust}(\G(\BG,\M)) )=\tuple{\V,\E}$.\footnote{In Theorem \ref{thm:equivalence} we will show that this is the causal ordering graph $\mathrm{CO}(\BG)$.}
\end{enumerate}
\end{definition}

\begin{center}
\begin{algorithm}[H]
\DontPrintSemicolon
\KwIn{a set of exogenous vertices $W$, a bipartite graph $\BG=\tuple{V,F,E}$ such that the subgraph induced by $(V\cup F)\setminus W$ is self-contained}
\KwOut{directed cluster graph $\tuple{\mathcal{V},\mathcal{E}}$}
$\BG'\leftarrow$ subgraph induced by $(V\setminus W)\cup F$ {\color{gray}\tcp*{initialization}}
$\M\leftarrow$ perfect matching for $\BG'$ {\color{gray}\tcp*{initialization}}
$E_{\mathrm{dir}}\leftarrow \emptyset$ {\color{gray}\tcp*{orient edges}}

\For{$(v-f)\in E$ \textbf{\upshape with} $f\in F$}{
	\If{$(v-f)\in \M$}
	{Add $(v\leftarrow f)$ to $E_{\mathrm{dir}}$}
	\Else
	{Add $(v\rightarrow f)$ to $E_{\mathrm{dir}}$}
}

$\V'\leftarrow$ strongly connected components of $\tuple{V\cup F,E_{\mathrm{dir}}}$ {\color{gray}\tcp*{clustering}}
$\E'\leftarrow \emptyset$\;

\For{$(x\to w) \in E_{\mathrm{dir}}$}{
	\For{$S\in\V'$}{
		\If{$w\in S$ \textbf{\upshape and} $x\notin S$}{Add $(x\to S)$ to $\E'$}	
	}
}

$\V,\E\leftarrow \emptyset$ {\color{gray}\tcp*{merge clusters}}

\For{$S\in \V'$}{
	Add $S\cup M(S)$ to $\V$ \;
	\For{$(x\to S)\in\E'$}{
		\If{$x\notin M(S)$}{Add $(x\to S\cup M(S))$ to $\E$}
	}
}
\label{alg:causal ordering perfect matchings}
\caption{Causal ordering via perfect matching.}
\end{algorithm}
\end{center}

For \emph{causal ordering via perfect matching} we require as input a set of exogenous vertices $W$ and a bipartite graph $\BG=\tuple{V,F,E}$, for which the subgraph $\BG'$ induced by the vertices $(V\cup F)\setminus W$ is self-contained. The output is a directed cluster graph. The details can be found in Algorithm \ref{alg:causal ordering perfect matchings}. We see that the algorithm starts out by finding a perfect matching\footnote{Note that a bipartite graph has a perfect matching if and only if it is self-contained \citep{Hall1986}. See also Theorem \ref{thm:hall} and Corollary \ref{cor:perfect matching self contained} in Appendix~\ref{app:equivalence proof}.} for $\BG'$,\footnote{The Hopcraft-Karp-Karzanov algorithm, which runs in $\mathcal{O}(|E|\sqrt{|V\cup F|})$, can be used to find a perfect matching \citep{Hopcroft1973, Karzanov1973}.} which is then used to orient edges in the bipartite graph $\BG$. The algorithm then proceeds by partitioning vertices in the resulting directed graph into strongly connected components to construct the associated clustered graph.\footnote{Tarjan's algorithm, which runs in linear time, can be used to find the strongly connected components in a directed graph \citep{Tarjan1972}.} Finally, the merge operation is applied to construct the causal ordering graph. Theorem \ref{thm:equivalence} below shows that causal ordering via perfect matchings is equivalent to causal ordering via minimal self-contained sets.

\begin{restatable}{theorem}{equivalence}
\label{thm:equivalence}
The output of Algorithm \ref{alg:causal ordering perfect matchings} coincides with the output of Algorithm \ref{alg:causal ordering minimal self-contained exo}.
\end{restatable}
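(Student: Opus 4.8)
The plan is to prove the equivalence by showing that the two algorithms produce the same partition of $V' \cup F'$ into clusters, and that they induce the same edge set. Throughout, write $\BG' = \tuple{V', F', E'}$ for the self-contained subgraph induced by $(V \setminus W) \cup F$, and fix a perfect matching $\M$ for $\BG'$, which exists by Hall's theorem since $\BG'$ is self-contained. The exogenous singletons $\{w\}$ for $w \in W$ are added verbatim by both algorithms and carry no incoming edges, so it suffices to compare the behaviour on $\BG'$. I would phrase the whole argument in terms of $\BG'$ and then note that the edges into clusters from $W$-vertices (added in the \texttt{for}-loop over $\adj{\BG}{S_F} \setminus \adj{\BG'}{S_F}$ in Algorithm~\ref{alg:causal ordering minimal self-contained exo}, and via the orientation of $E \setminus E'$ in Algorithm~\ref{alg:causal ordering perfect matchings}) match for a trivial bookkeeping reason: an edge $(w - f) \in E$ with $w \in W$ is never in $\M$, hence is oriented $w \to f$, and $w$ lies outside every strongly connected component, so it contributes $w \to \mathrm{cl}(f)$ after clustering and $w \to \mathrm{cl}(f) \cup \M(\mathrm{cl}(f))$ after merging.

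The key structural lemma I would isolate is the following: the clusters produced by the merge operation in Algorithm~\ref{alg:causal ordering perfect matchings} coincide with the sets $S_F \cup \adj{\BG'}{S_F}$ where $S_F$ ranges over the \emph{minimal} self-contained subsets of $F'$ that arise during Algorithm~\ref{alg:causal ordering minimal self-contained exo}. To see this, I would first show that in the associated directed graph $\G(\BG', \M)$, a strongly connected component $S \in \V'$ together with its matched partners $\M(S)$ has the property that $S \cap F'$ is self-contained: matched edges $f \to \M(f)$ connect each equation to its partner within $S \cup \M(S)$, and a cycle through $\G(\BG',\M)$ alternates matched and unmatched edges, which forces the set of equations reachable within a strong component to have exactly as many matched variable-partners as equations, giving condition~\ref{SC1}; condition~\ref{SC2} (the Hall property) follows because $\M$ restricted to any subset of these equations is still a matching saturating them. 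Conversely, a minimal self-contained set $S_F$ — by the Strong Hall Property noted in Definition~\ref{def:self-contained} — is matched by $\M$ onto $\adj{\BG'}{S_F}$, and the bipartite subgraph on $S_F \cup \adj{\BG'}{S_F}$ with this matching is strongly connected in the induced orientation (otherwise a proper ``source'' sub-component would be a smaller self-contained set, contradicting minimality). So the clusters-then-merge operation partitions $V' \cup F'$ exactly into the blocks $\{S_F \cup \adj{\BG'}{S_F}\}$, which is precisely the partition Algorithm~\ref{alg:causal ordering minimal self-contained exo} produces — and here I would invoke Theorem~\ref{thm:uniqueness} to assert that the partition returned by Algorithm~\ref{alg:causal ordering minimal self-contained exo} does not depend on the order in which minimal self-contained sets are extracted, so that ``the partition into minimal-self-contained blocks'' is well-defined.

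For the edges, once the vertex partitions agree it remains to check $\E$ agrees. An edge $(u \to C)$ is added by Algorithm~\ref{alg:causal ordering minimal self-contained exo} iff $u \in \adj{\BG}{S_F} \setminus \adj{\BG'}{S_F}$ for the block $C$, i.e.\ iff $u \notin C$ and some $f \in C \cap F$ has $u \in V(f)$; one must also observe that $u$ then lies in an \emph{earlier} block (this is what makes it a legitimate directed edge), which holds because $u$ was removed before $S_F$ was selected. On the matching side, an edge $(u - f) \in E$ with $u \notin C := \mathrm{cl}(f) \cup \M(\mathrm{cl}(f))$ is necessarily unmatched (it can't be the matched edge of $f$, which lands inside $C$), hence oriented $u \to f$, hence contributes $u \to \mathrm{cl}(f)$ after clustering and, since $u \notin \M(\mathrm{cl}(f))$, survives as $u \to C$ after merging; conversely every edge of $\E$ arises this way. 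So the two edge sets coincide, completing the proof.

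The main obstacle is the structural lemma, specifically the two directions relating strong connectivity in $\G(\BG',\M)$ to minimal self-containedness: showing that each strong component (after adding matched partners) has $F$-part self-contained, and that each minimal self-contained set induces a single strong component. Both hinge on the alternating-path structure of $\G(\BG',\M)$ — that every directed cycle alternates one matched and one unmatched edge — and on a ``no proper source sub-block'' argument, which is where one must be careful: I expect to need the observation that if a self-contained block decomposed into several strong components in the orientation, the union of the source components' equations would violate either minimality (in one direction) or give a contradiction with the counting identity $|S_F| = |\adj{\BG'}{S_F}|$ (in the other). A clean way to organize this is to prove that $S \subseteq V' \cup F'$ is of the form $S_F \cup \adj{\BG'}{S_F}$ for a minimal self-contained $S_F$ if and only if $S$ is a strong component of $\G(\BG',\M)$ closed under $\M$, and that this characterization is matching-independent — the last point again leaning on Theorem~\ref{thm:uniqueness}.
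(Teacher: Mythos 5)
Your proposal follows essentially the same route as the paper: the paper's Lemma~\ref{lemma:sccs are msc} establishes exactly your structural lemma --- that the strongly connected components of $\G(\BG',\M)$, taken together with their matched partners, are precisely the blocks $S_F\cup\adj{\BG'}{S_F}$ for the minimal self-contained sets $S_F$ extracted by Algorithm~\ref{alg:causal ordering minimal self-contained exo} --- and then, as you do, invokes the uniqueness result (Theorem~\ref{thm:uniqueness}) to identify the two partitions and verifies the edge sets by a direct chain of equivalences. The one point to tighten in your sketch is that an SCC's equation set is self-contained only in the \emph{successively reduced} graphs obtained by deleting earlier components along a topological ordering (not in the fixed graph $\BG'$, where its adjacency set may include variables from earlier clusters), which is exactly why the paper states and proves its lemma by induction along that ordering.
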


The following example illustrates that the output of causal ordering via perfect matchings does not depend on the choice of perfect matching and coincides with the output of Algorithm \ref{alg:causal ordering minimal self-contained exo}.

\begin{example}
\label{ex:causal ordering perfect matchings}
Consider the bipartite graph $\BG$ in Figure \ref{fig:bipartite exogenous}. The subgraph induced by the vertices $V=\{v_1,\ldots,v_5\}$ and $F=\{f_1,\ldots, f_5\}$ is the self-contained bipartite graph in Figure \ref{fig:self-contained}. We will follow the steps in both Algorithm \ref{alg:causal ordering minimal self-contained exo} and \ref{alg:causal ordering perfect matchings} to construct the causal ordering graph.
	
For causal ordering with minimal self-contained sets we first add the exogenous variables to the cluster set $\V$ as the singleton clusters $\{w_1\}$, $\{w_2\}$, $\{w_3\}$, $\{w_4\}$,  $\{w_5\}$, and $\{w_6\}$. The only minimal self-contained set in the subgraph induced by the vertices $V=\{v_1,\ldots,v_5\}$ and $F=\{f_1,\ldots, f_5\}$ is $\{f_1\}$. Since $f_1$ is adjacent to $v_1$ we add $C_1=\{v_1,f_1\}$ to $\V$. Since $f_1$ is adjacent to $w_1$ in $\BG$ we add $(w_1\to C_1)$ to $\E$. The subgraph $\BG'=\tuple{V',F',E'}$ induced by the remaining nodes $V'=\{v_2,v_3,v_4,v_5\}$ and $F'=\{f_2,f_3,f_4,f_5\}$ has $\{f_2,f_3,f_4\}$ as its only minimal self-contained set. Since the set $\{f_2,f_3,f_4\}$ is adjacent to $\{v_2,v_3,v_4\}$ in $\BG'$, we add $C_2=\{v_2,v_3,v_4,f_2,f_3,f_4\}$ to $\V$. Since $v_1$, $w_2$, $w_3$, $w_4$, and $w_5$ are adjacent to $\{f_2,f_3,f_4\}$ in $\BG$ but not part of $C_2$, we add the edges $(v_1\to C_2)$, $(w_2\to C_2)$, $(w_3\to C_2)$, $(w_4\to C_2)$, and $(w_5\to C_2)$ to $\mathcal{E}$. The subgraph induced by the remaining nodes $v_5$ and $f_5$ has $\{f_5\}$ as its minimal self-contained subset. We add $C_3=\{v_5,f_5\}$ to $\V$ and the edges $(v_4\to C_3)$ and $(w_6\to C_3)$ to $\mathcal{E}$. The directed cluster graph $\mathrm{CO}(\BG)=\tuple{\mathcal{V},\mathcal{E}}$ is given in Figure \ref{fig:cog exogenous}.
	
For causal ordering via perfect matchings, we consider the following two perfect matchings of the self-contained bipartite graph in Figure \ref{fig:self-contained}:
	\begin{align*}
	\M_1 &= \{(v_1-f_1),(v_2-f_2),(v_3-f_3),(v_4-f_4),(v_5-f_5)\},\\
	\M_2 &= \{(v_1-f_1),(v_2-f_4),(v_3-f_2),(v_4-f_3),(v_5-f_5)\}.
	\end{align*}
We use these one-to-one correspondences between endogenous variable vertices and constraint vertices in the orientation step in Definition \ref{def:orient, cluster, merge} to obtain the associated directed graphs $\G(\BG,\M_1)$ and $\G(\BG,\M_2)$  in Figures \ref{fig:directed exogenous} and \ref{fig:directed exogenous2} respectively. Application of the clustering step in Definition \ref{def:orient, cluster, merge} to either $\G(\BG,\M_1)$ or $\G(\BG,\M_2)$ results in the clustered graph $\mathrm{clust}(\G(\BG,\M_2))=\mathrm{clust}(\G(\BG,\M_1))$ in Figure \ref{fig:clustered exogenous}. The final step is to merge clusters in this directed cluster graph. We find that the causal ordering graph $\mathrm{merge}(\mathrm{clust}(\G(\BG,\M_1)))=\mathrm{merge}(\mathrm{clust}(\G(\BG,\M_2)))$ in Figure \ref{fig:cog exogenous} does not depend on the choice of perfect matching, as is implied by Theorem \ref{thm:equivalence}. Note that the output of causal ordering with minimal self-contained sets coincides with the output of causal ordering via perfect matchings.
\end{example}

\begin{figure}[p]
\begin{subfigure}{\textwidth}
	\centering
	\begin{tikzpicture}[scale=0.75,every node/.style={transform shape}]
	\GraphInit[vstyle=Normal]
	\SetGraphUnit{1.2}
	\SetVertexMath
	\Vertex[style={dashed}]{w_1}
	\EA(w_1){v_1}
	\EA[style={dashed}](v_1){w_2}
	\EA(w_2){v_2}
	\EA[style={dashed}](v_2){w_3}
	\EA(w_3){v_3}
	\EA[style={dashed}](v_3){w_4}
	\EA(w_4){v_4}
	\EA[style={dashed}](v_4){w_5}
	\EA(w_5){v_5}
	\EA[style={dashed}](v_5){w_6}
	\SO[unit=1.4](v_1){f_1}
	\SO[unit=1.4](v_2){f_2}
	\SO[unit=1.4](v_3){f_3}
	\SO[unit=1.4](v_4){f_4}
	\SO[unit=1.4](v_5){f_5}
	\tikzset{EdgeStyle/.style = {-}}
	\Edge(f_1)(v_1)
	\Edge(f_2)(v_2)
	\Edge(f_3)(v_3)
	\Edge(f_4)(v_4)
	\Edge(f_5)(v_5)
	\Edge(f_2)(v_1)
	\Edge(f_2)(v_3)
	\Edge(f_3)(v_4)
	\Edge[style={bend left=8}](f_4)(v_2)
	\Edge(f_5)(v_4)
	\Edge(f_1)(w_1)
	\Edge(f_2)(w_2)
	\Edge(f_2)(w_3)
	\Edge(f_3)(w_4)
	\Edge(f_4)(w_5)
	\Edge(f_5)(w_6)
	\end{tikzpicture}
	\caption{Bipartite graph $\BG$ where dashed vertices indicate exogenous variables.}
	\label{fig:bipartite exogenous}
\end{subfigure}%
\vspace{4mm}
\begin{subfigure}{\textwidth}
	\centering
	\begin{tikzpicture}[scale=0.75,every node/.style={transform shape}]
	\GraphInit[vstyle=Normal]
	\SetGraphUnit{1.2}
	\SetVertexMath
	\Vertex[style={dashed}]{w_1}
	\EA(w_1){v_1}
	\EA[style={dashed}](v_1){w_2}
	\EA(w_2){v_2}
	\EA[style={dashed}](v_2){w_3}
	\EA(w_3){v_3}
	\EA[style={dashed}](v_3){w_4}
	\EA(w_4){v_4}
	\EA[style={dashed}](v_4){w_5}
	\EA(w_5){v_5}
	\EA[style={dashed}](v_5){w_6}
	\SO[unit=1.4](v_1){f_1}
	\SO[unit=1.4](v_2){f_2}
	\SO[unit=1.4](v_3){f_3}
	\SO[unit=1.4](v_4){f_4}
	\SO[unit=1.4](v_5){f_5}
	\tikzset{EdgeStyle/.style = {->, blue}}
	\Edge(f_1)(v_1)
	\Edge(f_2)(v_2)
	\Edge(f_3)(v_3)
	\Edge(f_4)(v_4)
	\Edge(f_5)(v_5)
	\tikzset{EdgeStyle/.style = {<-}}
	\Edge(f_2)(v_1)
	\Edge(f_2)(v_3)
	\Edge(f_3)(v_4)
	\Edge[style={bend left=8}](f_4)(v_2)
	\Edge(f_5)(v_4)
	\Edge(f_1)(w_1)
	\Edge(f_2)(w_2)
	\Edge(f_2)(w_3)
	\Edge(f_3)(w_4)
	\Edge(f_4)(w_5)
	\Edge(f_5)(w_6)
	\end{tikzpicture}
	\caption{Associated directed graph $\G(\BG,\M_1)$.}
	\label{fig:directed exogenous}
\end{subfigure}%
\vspace{4mm}
\begin{subfigure}{\textwidth}
	\centering
	\begin{tikzpicture}[scale=0.75,every node/.style={transform shape}]
	\GraphInit[vstyle=Normal]
	\SetGraphUnit{1.2}
	\SetVertexMath
	\Vertex[style={dashed}]{w_1}
	\EA(w_1){v_1}
	\EA[style={dashed}](v_1){w_2}
	\EA(w_2){v_2}
	\EA[style={dashed}](v_2){w_3}
	\EA(w_3){v_3}
	\EA[style={dashed}](v_3){w_4}
	\EA(w_4){v_4}
	\EA[style={dashed}](v_4){w_5}
	\EA(w_5){v_5}
	\EA[style={dashed}](v_5){w_6}
	\SO[unit=1.4](v_1){f_1}
	\SO[unit=1.4](v_2){f_2}
	\SO[unit=1.4](v_3){f_3}
	\SO[unit=1.4](v_4){f_4}
	\SO[unit=1.4](v_5){f_5}
	\tikzset{EdgeStyle/.style = {->, orange}}
	\Edge(f_1)(v_1)
	\Edge(f_2)(v_3)
	\Edge(f_3)(v_4)
	\Edge[style={bend left=8}](f_4)(v_2)
	\Edge(f_5)(v_5)		
	\tikzset{EdgeStyle/.style = {<-}}
	\Edge(f_2)(v_2)
	\Edge(f_3)(v_3)
	\Edge(f_4)(v_4)
	\Edge(f_2)(v_1)
	\Edge(f_5)(v_4)
	\Edge(f_1)(w_1)
	\Edge(f_2)(w_2)
	\Edge(f_2)(w_3)
	\Edge(f_3)(w_4)
	\Edge(f_4)(w_5)
	\Edge(f_5)(w_6)
	\end{tikzpicture}
	\caption{Associated directed graph $\G(\BG,\M_2)$.}
	\label{fig:directed exogenous2}
\end{subfigure}%
\vspace{4mm}
\begin{subfigure}{0.5\textwidth}		
	\centering
	\begin{tikzpicture}[scale=0.75,every node/.style={transform shape}]
	\GraphInit[vstyle=Normal]
	\SetGraphUnit{1.5}
	\SetVertexMath
	\Vertex{v_1}
	\EA[unit=1.5](v_1){v_2}
	\EA[unit=1.0](v_2){v_3}
	\EA[unit=1.0](v_3){v_4}
	\EA[unit=1.5](v_4){v_5}
	\SO[unit=1.35](v_1){f_1}
	\SO[unit=1.35](v_2){f_2}
	\SO[unit=1.35](v_3){f_3}
	\SO[unit=1.35](v_4){f_4}
	\SO[unit=1.35](v_5){f_5}
	\Vertex[style={dashed}, x=-0.75, y=-2.75] {w_1}
	\EA[unit=1.25, style={dashed}](w_1){w_2}
	\EA[unit=1.25, style={dashed}](w_2){w_3}
	\EA[unit=1.25, style={dashed}](w_3){w_4}
	\EA[unit=1.25, style={dashed}](w_4){w_5}
	\EA[unit=1.25, style={dashed}](w_5){w_6}
	\node[draw=black, fit=(w_1), inner sep=0.1cm ]{};
	\node[draw=black, fit=(w_2), inner sep=0.1cm ]{};
	\node[draw=black, fit=(w_3), inner sep=0.1cm ]{};
	\node[draw=black, fit=(w_4), inner sep=0.1cm ]{};
	\node[draw=black, fit=(w_5), inner sep=0.1cm ]{};
	\node[draw=black, fit=(w_6), inner sep=0.1cm ]{};
	\node[draw=black, fit=(v_1), inner sep=0.1cm ]{};
	\node[draw=black, fit=(f_1), inner sep=0.1cm ]{};
	\node[draw=black, fit=(v_2) (f_3) (v_4) (f_2) (f_3) (f_4), inner sep=0.1cm ]{};
	\node[draw=black, fit=(v_5), inner sep=0.1cm ]{};
	\node[draw=black, fit=(f_5), inner sep=0.1cm ]{};
	\draw[EdgeStyle, style={->}](f_5) to (5.0,-0.525);
	\draw[EdgeStyle, style={->}](f_1) to (0,-0.525);
	\draw[EdgeStyle, style={->}](v_1) to (0.975,0);
	\draw[EdgeStyle, style={->}](v_4) to (4.4,-0.765);
	\draw[EdgeStyle, style={->}](w_1) to (-0.3,-1.925);
	\draw[EdgeStyle, style={->}](w_2) to (1.2,-1.925);
	\draw[EdgeStyle, style={->}](w_3) to (1.75,-1.925);
	\draw[EdgeStyle, style={->}](w_4) to (3.0,-1.925);
	\draw[EdgeStyle, style={->}](w_5) to (3.75,-1.925);		
	\draw[EdgeStyle, style={->}](w_6) to (5.25,-1.925);
	\end{tikzpicture}
	\caption{Clustered graph $\mathrm{clust}(\G(\BG,\M_1))$.}
	\label{fig:clustered exogenous}
\end{subfigure}%
\vspace{4mm}
\begin{subfigure}{0.5\textwidth}
	\centering
	\begin{tikzpicture}[scale=0.75,every node/.style={transform shape}]
	\GraphInit[vstyle=Normal]
	\SetGraphUnit{1.5}
	\SetVertexMath
	\Vertex{v_1}
	\EA[unit=1.5](v_1){v_2}
	\EA[unit=1.0](v_2){v_3}
	\EA[unit=1.0](v_3){v_4}
	\EA[unit=1.5](v_4){v_5}
	\SO[unit=1.35](v_1){f_1}
	\SO[unit=1.35](v_2){f_2}
	\SO[unit=1.35](v_3){f_3}
	\SO[unit=1.35](v_4){f_4}
	\SO[unit=1.35](v_5){f_5}
	\Vertex[style={dashed}, x=-0.75, y=-2.75] {w_1}
	\EA[unit=1.25, style={dashed}](w_1){w_2}
	\EA[unit=1.25, style={dashed}](w_2){w_3}
	\EA[unit=1.25, style={dashed}](w_3){w_4}
	\EA[unit=1.25, style={dashed}](w_4){w_5}
	\EA[unit=1.25, style={dashed}](w_5){w_6}
	\node[draw=black, fit=(w_1), inner sep=0.1cm ]{};
	\node[draw=black, fit=(w_2), inner sep=0.1cm ]{};
	\node[draw=black, fit=(w_3), inner sep=0.1cm ]{};
	\node[draw=black, fit=(w_4), inner sep=0.1cm ]{};
	\node[draw=black, fit=(w_5), inner sep=0.1cm ]{};
	\node[draw=black, fit=(w_6), inner sep=0.1cm ]{};
	\node[draw=black, fit=(v_1) (f_1), inner sep=0.1cm ]{};
	\node[draw=black, fit=(v_2) (f_3) (v_4) (f_2) (f_3) (f_4), inner sep=0.1cm ]{};
	\node[draw=black, fit=(v_5) (f_5), inner sep=0.1cm ]{};
	\draw[EdgeStyle, style={->}](v_1) to (0.975,0);
	\draw[EdgeStyle, style={->}](v_4) to (4.4,0);
	\draw[EdgeStyle, style={->}](w_1) to (-0.3,-1.925);
	\draw[EdgeStyle, style={->}](w_2) to (1.2,-1.925);
	\draw[EdgeStyle, style={->}](w_3) to (1.75,-1.925);
	\draw[EdgeStyle, style={->}](w_4) to (3.0,-1.925);
	\draw[EdgeStyle, style={->}](w_5) to (3.75,-1.925);		
	\draw[EdgeStyle, style={->}](w_6) to (5.25,-1.925);
	\end{tikzpicture}
	\caption{Causal ordering graph $\mathrm{CO}(\BG)$.}
	\label{fig:cog exogenous}
\end{subfigure}
\caption{Causal ordering with two different perfect matchings $M_1$ and $M_2$ applied to the bipartite graph in Figure \ref{fig:bipartite exogenous}. The results of subsequently orienting edges, constructing clusters, and merging clusters as in Definition \ref{def:orient, cluster, merge} are given in Figures \ref{fig:directed exogenous} to \ref{fig:cog exogenous}. The edges in $M_1$ that are oriented from variables to equations in Figure \ref{fig:directed exogenous} are indicated with blue edges. Likewise, edges in $M_2$ are indicated with orange edges in Figure \ref{fig:directed exogenous2}. The clustered graph in Figure \ref{fig:clustered exogenous} coincides with $\mathrm{clust}(\G(\BG,\M_2))$ and for the causal ordering graph in Figure \ref{fig:cog exogenous} we have that $\mathrm{CO}(\BG) = \mathrm{merge}(\mathrm{clust}(\G(\BG,\M_1)))=\mathrm{merge}(\mathrm{clust}(\G(\BG,\M_2)))$.}
\label{fig:causal ordering via perfect matchings}
\end{figure}
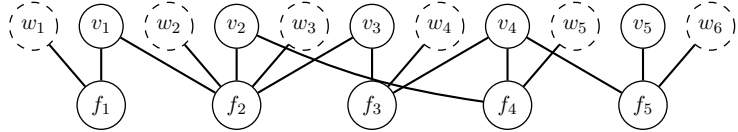
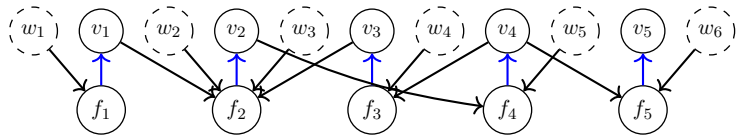
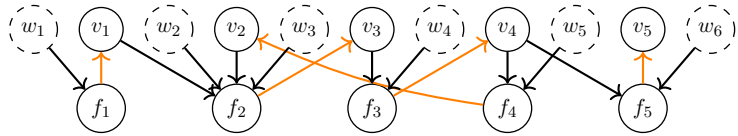
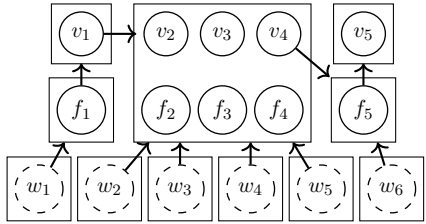
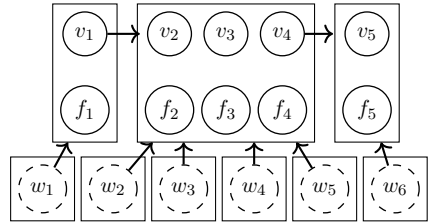

\subsection{Coarse decomposition via maximum matchings}

The extension that we propose relies on the \emph{coarse decomposition} of bipartite graphs in \citet{Pothen1990}, which was originally proposed by \citet{Dulmage1958}. The main idea is that a set of equations (i.e.\ a system of constraints) can be divided into an \emph{incomplete} part that has fewer equations than variables, an \emph{over-complete} part that has more equations than variables, and a part that is self-contained. The \emph{coarse decomposition} in Definition \ref{def:coarse decomposition} below uses the notions of a \emph{maximum matching} and an \emph{alternating path} for a maximum matching. The former is a matching so that there are no matchings with a greater cardinality, while the latter is a sequence of distinct vertices and edges $(v_1,e_1,v_2,e_2,\ldots,e_{n-1},v_n)$ so that edges $e_i$ are alternatingly in and out a maximum matching $M$. Proposition \ref{prop:coarse decomposition unique} by \citet{Pothen1985} shows that the coarse decomposition is unique.\footnote{For completeness, we have included a proof of this theorem in Appendix~\ref{appendix:coarse decomposition proofs}.} In this section we loosely follow the exposition of the coarse decomposition in \citet{Diepen2019} and \citet{Pothen1990}.

\begin{definition}
	\label{def:coarse decomposition}
	Let $M$ be a maximum matching for a bipartite graph $\BG=\tuple{V,F,E}$ and let $V_{\mathrm{un}}$ and $F_{\mathrm{un}}$ denote the unmatched vertices in $V$ and $F$ respectively. The \emph{incomplete set} $T_I\subseteq V\cup F$ and \emph{overcomplete set} $T_O\subseteq V\cup F$ are given by:
	\begin{align*}
	T_I:=\{x\in V\cup F: \text{ there is an alternating path between } x \text{ and some } y\in V_{\mathrm{un}}\},\\
	T_O:=\{x\in V\cup F: \text{ there is an alternating path between } x \text{ and some } y\in F_{\mathrm{un}}\}.
	\end{align*}
	The \emph{complete set} is given by $T_C=V\cup F\setminus (T_I\cup T_O)$. The \emph{coarse decomposition} $\mathrm{CD}(\BG,M)$ is given by $\tuple{T_I,T_C,T_O}$. The \emph{incomplete graph} $\BG_I$ is the subgraph of $\BG$ induced by vertices $T_I$, the \emph{complete graph} $\BG_C$ is the subgraph of $\BG$ induced by vertices $T_C$, and the \emph{overcomplete graph} $\BG_O$ is the subgraph of $\BG$ induced by vertices $T_O$.
\end{definition}

Note that $T_I$ and $T_O$ are necessarily disjoint, for more details see Lemma \ref{lemma:disjoint coarse decomposition} in Appendix~\ref{appendix:coarse decomposition proofs}.

\begin{restatable}{proposition}{coarsedecompositionunique}[\citet{Pothen1985}]
	\label{prop:coarse decomposition unique}
	The coarse decomposition of a bipartite graph $\BG$ is independent of the choice of the maximum matching.
\end{restatable}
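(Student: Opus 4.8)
The plan is to exhibit descriptions of the three parts $T_I$, $T_C$, $T_O$ that make no reference to a particular matching, and then to verify that these descriptions agree with Definition~\ref{def:coarse decomposition} for \emph{every} maximum matching $M$. Write $D_V\subseteq V$ for the set of those $V$-vertices that are left unmatched by at least one maximum matching of $\BG$, and let $D_F\subseteq F$ be defined analogously; both $D_V$ and $D_F$ are patently independent of the choice of maximum matching. The assertion I will establish is that, for every maximum matching $M$, one has $T_I=D_V\cup\{f\in F:V(f)\cap D_V\neq\emptyset\}$ and, symmetrically, $T_O=D_F\cup\{v\in V:v\in V(f)\text{ for some }f\in D_F\}$. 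Granting this, $T_C=(V\cup F)\setminus(T_I\cup T_O)$ is also independent of $M$, which is exactly the proposition. (Disjointness of $T_I$ and $T_O$ is not needed for this; it is Lemma~\ref{lemma:disjoint coarse decomposition}.)

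The core of the argument is the identity $T_I\cap V=D_V$. For ``$\subseteq$'': if $v\in T_I\cap V$, then there is an $M$-alternating path $Q$ from some vertex $v_0\in V$ that is unmatched by $M$ to $v$; since both endpoints lie in $V$, $Q$ has even length, and flipping $M$ along $Q$ (deleting the $M$-edges of $Q$ and adding its non-$M$-edges) produces a matching of the same size that no longer covers $v$, so $v\in D_V$. For ``$\supseteq$'': let $M^\ast$ be a maximum matching not covering $v$; if $M$ also misses $v$ there is nothing to prove, so suppose $v$ is matched by $M$ and consider the connected component $P$ of the graph $\tuple{V\cup F,\,M\triangle M^\ast}$ that contains $v$. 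Every vertex of this graph has degree at most two, and $v$ has degree exactly one, so $P$ is a path with $v$ as an endpoint. Because $M$ and $M^\ast$ are both maximum, $P$ cannot be an augmenting path for either; tracking which of $M,M^\ast$ exposes which endpoint, together with a parity count on the length of $P$, forces the other endpoint $u$ of $P$ to lie in $V$, to be matched by $M^\ast$, and to be unmatched by $M$. Reading $P$ from $u$ to $v$ then displays an $M$-alternating path from the $M$-unmatched vertex $u$ to $v$, whence $v\in T_I\cap V$. This symmetric-difference bookkeeping is the step that requires care; the remaining steps are routine.

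It remains to pass from $V$- to $F$-vertices. If $f\in T_I\cap F$, the witnessing $M$-alternating path from an unmatched $V$-vertex reaches $f$ along a non-$M$-edge from some penultimate vertex $v$, which therefore belongs to $T_I\cap V=D_V$ and is adjacent to $f$, so $V(f)\cap D_V\neq\emptyset$. Conversely, if $f$ is adjacent to some $v\in D_V=T_I\cap V$, append the edge $(v-f)$ to an $M$-alternating path reaching $v$; a brief case check (is $(v-f)$ already on that path, and is it in $M$?) shows $f\in T_I$. This proves the description of $T_I$, and running the same three steps with the roles of $V$ and $F$ (and of $V_{\mathrm{un}}$ and $F_{\mathrm{un}}$) exchanged gives the stated description of $T_O$. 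Hence $T_I$, $T_O$ and $T_C$ are all independent of the chosen maximum matching, completing the proof.
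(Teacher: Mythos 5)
Your proof is correct, but it takes a genuinely different route from the one in Appendix~B.2. The paper fixes a single maximum matching $M$, invokes Lemmas~\ref{lemma:matched vertices} and \ref{lemma:impossible edges} to pin down which blocks any matching edge can live in, and then runs a cardinality argument: a maximum matching $M'$ that matched a vertex of $T_I\cap F$ outside $T_I\cap V$ (or of $T_O\cap V$ outside $T_O\cap F$) would satisfy $|M'|\le |M|-1$, a contradiction; matching-independence of the decomposition is then read off from this block structure. You instead produce an \emph{intrinsic}, matching-free description of the parts --- $T_I\cap V$ is exactly the set of $V$-vertices missed by \emph{some} maximum matching, and $T_I\cap F$ its neighbourhood (dually for $T_O$) --- and verify it against Definition~\ref{def:coarse decomposition} for an arbitrary $M$ via the standard symmetric-difference argument on $M\triangle M^*$. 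I checked the delicate steps: the even-length flip in the ``$\subseteq$'' direction, the parity analysis showing that an odd component of $M\triangle M^*$ ending at $v$ would be augmenting for $M^*$, and the two case checks when appending the edge $(v-f)$; all are sound. The trade-off is roughly this: the paper's route is shorter because it reuses lemmas it needs elsewhere anyway, though its final inference from ``every maximum matching respects the blocks'' to ``the reachability sets coincide'' is left somewhat implicit; your route is self-contained (essentially the classical Dulmage--Mendelsohn characterization in terms of inessential vertices), makes the matching-independence completely explicit rather than indirect, and yields a concrete description of $T_I$, $T_C$, $T_O$ as a by-product.
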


There exist fast algorithms that are able to find a maximum matching in a bipartite graph $\BG=\tuple{V,F,E}$, such as the Hopcraft-Karp-Karzanov algorithm, which runs in $\mathcal{O}(|E|\sqrt{|V\cup F|})$ time \citep{Hopcroft1973, Karzanov1973}. In the following example we manually searched for maximum matchings to illustrate the result in Proposition \ref{prop:coarse decomposition unique} that the coarse decomposition is unique.

\begin{example}
	\label{ex:decomposition}	
	Consider the bipartite graph $\BG'$ in Figure \ref{fig:bipartite graph}, which has the following four maximum matchings.
	\begin{align}
	M_1 = \{(v_1 - f_2), (v_2 - f_3), (v_3 - f_4), (v_4 - f_5)\}, \\
	M_2 = \{(v_1 - f_1), (v_2 - f_3), (v_3 - f_4), (v_5 - f_5)\}, \\
	M_3 = \{(v_1 - f_2), (v_2 - f_3), (v_3 - f_4), (v_5 - f_5)\}, \\
	M_4 = \{(v_1 - f_1), (v_2 - f_3), (v_3 - f_4), (v_4 - f_5)\}.
	\end{align}
	By Proposition \ref{prop:coarse decomposition unique} we know that the coarse decomposition $\mathrm{CD}(\BG',M)$, with $M\in\{M_1,M_2,M_3,M_4\}$, does not depend on the choice of maximum matching. The coarse decomposition is displayed in Figure \ref{fig:coarse decomposition}. It is a straightforward exercise to verify that applying Definition \ref{def:coarse decomposition} to each of the maximum matchings results in the same coarse decomposition. Note that if the vertices $\{f_1,\ldots f_5\}$ are associated with equations, and the vertices $\{v_1,\ldots,v_5\}$ are associated with variables, then the incomplete graph $\BG_I$ has fewer equations than variables, whereas the over-complete graph $\BG_O$ has more equations than variables. The complete graph $\BG_C$ is self-contained. 
\end{example}

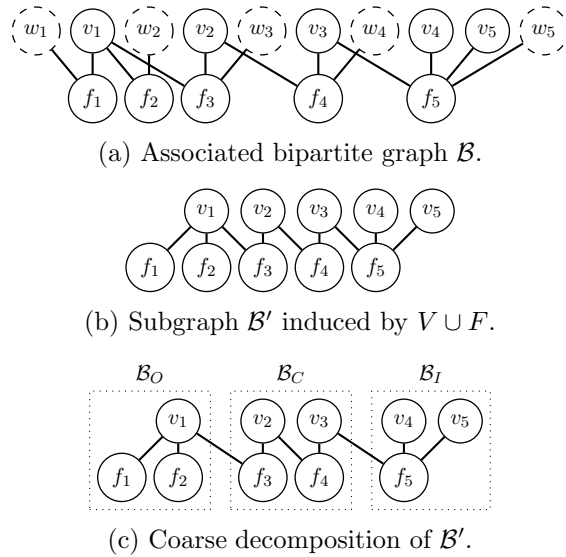
\begin{figure}[ht]
\begin{subfigure}{\textwidth}
	\centering
	\begin{tikzpicture}[scale=0.75,every node/.style={transform shape}]
	\GraphInit[vstyle=Normal]
	\SetGraphUnit{1}
	\Vertex[L=$w_1$, style={dashed}] {w1}
	\Vertex[L=$v_1$,x=1,y=0] {v1}
	\Vertex[L=$w_2$,x=2,y=0, style={dashed}] {w2}
	\Vertex[L=$v_2$,x=3,y=0] {v2}
	\Vertex[L=$w_3$,x=4,y=0, style={dashed}] {w3}
	\Vertex[L=$v_3$,x=5,y=0] {v3}
	\Vertex[L=$w_4$,x=6,y=0, style={dashed}] {w4}
	\Vertex[L=$v_4$,x=7,y=0] {v4}
	\Vertex[L=$v_5$,x=8,y=0] {v5}
	\Vertex[L=$w_5$,x=9,y=0, style={dashed}] {w5}
	\SetVertexMath
	\SO[unit=1.2](v1){f_1}
	\SO[unit=1.2](w2){f_2}
	\SO[unit=1.2](v2){f_3}
	\SO[unit=1.2](v3){f_4}
	\SO[unit=1.2](v4){f_5}
	\tikzset{EdgeStyle/.style = {-}}
	\Edge(f_1)(v1)
	\Edge(f_1)(w1)
	\Edge(f_2)(v1)
	\Edge(f_2)(w2)
	\Edge(f_3)(v1)
	\Edge(f_3)(v2)
	\Edge(f_3)(w3)
	\Edge(f_4)(v2)
	\Edge(f_4)(v3)
	\Edge(f_4)(w4)
	\Edge(f_5)(v3)
	\Edge(f_5)(v4)
	\Edge(f_5)(v5)
	\Edge(f_5)(w5)
	\end{tikzpicture}
	\caption{Associated bipartite graph $\BG$.}
	\label{fig:associated bipartite graph}
\end{subfigure}%
\vspace{2mm}	
\begin{subfigure}{\textwidth}
	\centering
	\begin{tikzpicture}[scale=0.75,every node/.style={transform shape}]
	\GraphInit[vstyle=Normal]
	\SetGraphUnit{1}
	\Vertex[L=$v_1$,x=0,y=0] {v1}
	\Vertex[L=$v_2$,x=1,y=0] {v2}
	\Vertex[L=$v_3$,x=2,y=0] {v3}
	\Vertex[L=$v_4$,x=3,y=0] {v4}
	\Vertex[L=$v_5$,x=4,y=0] {v5}
	\Vertex[L=$f_1$,x=-1,y=-1] {f_1}
	\SetVertexMath
	\SO[unit=1](v1){f_2}
	\SO[unit=1](v2){f_3}
	\SO[unit=1](v3){f_4}
	\SO[unit=1](v4){f_5}
	\tikzset{EdgeStyle/.style = {-}}
	\Edge(f_1)(v1)
	\Edge(f_2)(v1)
	\Edge(f_3)(v1)
	\Edge(f_3)(v2)
	\Edge(f_4)(v2)
	\Edge(f_4)(v3)
	\Edge(f_5)(v3)
	\Edge(f_5)(v4)
	\Edge(f_5)(v5)
	\end{tikzpicture}
	\caption{Subgraph $\BG'$ induced by $V\cup F$.}
	\label{fig:bipartite graph}
\end{subfigure}%
\vspace{2mm}	
\begin{subfigure}{\textwidth}
	\centering
	\begin{tikzpicture}[scale=0.75,every node/.style={transform shape}]
	\GraphInit[vstyle=Normal]
	\SetGraphUnit{1}
	\Vertex[L=$v_1$,x=0,y=0] {v1}
	\Vertex[L=$v_2$,x=1.5,y=0] {v2}
	\Vertex[L=$v_3$,x=2.5,y=0] {v3}
	\Vertex[L=$v_4$,x=4.0,y=0] {v4}
	\Vertex[L=$v_5$,x=5.0,y=0] {v5}
	\Vertex[L=$f_1$,x=-1,y=-1] {f_1}
	\SetVertexMath
	\SO[unit=1](v1){f_2}
	\SO[unit=1](v2){f_3}
	\SO[unit=1](v3){f_4}
	\SO[unit=1](v4){f_5}
	\node[draw=black, style={dotted}, fit=(v1) (f_1) (f_2), inner sep=0.1cm, label=$\BG_O$]{};
	\node[draw=black, style={dotted}, fit=(v2) (v3) (f_3) (f_4), inner sep=0.1cm, label=$\BG_C$]{};
	\node[draw=black, style={dotted}, fit=(v4) (v5) (f_5), inner sep=0.1cm, label=$\BG_I$]{};
	\tikzset{EdgeStyle/.style = {-}}
	\Edge(f_1)(v1)
	\Edge(f_2)(v1)
	\Edge(f_3)(v1)
	\Edge(f_3)(v2)
	\Edge(f_4)(v2)
	\Edge(f_4)(v3)
	\Edge(f_5)(v3)
	\Edge(f_5)(v4)
	\Edge(f_5)(v5)
	\end{tikzpicture}
	\caption{Coarse decomposition of $\BG'$.}
	\label{fig:coarse decomposition}
\end{subfigure}%
\caption{The bipartite graph $\BG$ associated with the system of equations in Example \ref{ex:not self-contained system of equations} is given in Figure \ref{fig:associated bipartite graph}. Its subgraph $\BG'$ induced by $V=\{v_1,\ldots,v_5\}$ and $F=\{f_1,\ldots,f_5\}$ in Figure \ref{fig:bipartite graph} is not self-contained. The coarse decomposition of $\BG'$ is given in Figure \ref{fig:coarse decomposition}.}
\label{fig:decomposition}	
\end{figure}

\subsection{Causal ordering via coarse decomposition}
\label{sec:causal ordering via coarse decomposition}

Here we present the extended causal ordering algorithm. It relies on the unique coarse decomposition of a bipartite graph into its incomplete, complete, and over-complete parts. Lemma \ref{lemma:complete graph self-contained}, due to \citet{Pothen1985}, shows that the complete graph has a perfect matching. Together, Lemma \ref{lemma:complete graph self-contained} and Lemma \ref{lemma:impossible edges} justify the steps in Algorithm \ref{alg:causal ordering coarse decomposition} to construct a causal ordering graph. The proofs are provided in Appendix~\ref{appendix:coarse decomposition proofs}.

\begin{restatable}{lemma}{completegraphselfcontained}[\citet{Pothen1985}]
\label{lemma:complete graph self-contained}	
Let $\BG$ be a bipartite graph with coarse decomposition $\tuple{T_I,T_C,T_O}$. The subgraph $\BG_C$ of $\BG$ induced by vertices in $T_C$ has a perfect matching and is self-contained.	
\end{restatable}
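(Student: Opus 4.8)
The plan is to fix a maximum matching $M$ of $\BG$ (one realizing the coarse decomposition, which by Proposition~\ref{prop:coarse decomposition unique} is any of them) and to show that the restriction of $M$ to $T_C$ is a perfect matching of $\BG_C$; self-containedness of $\BG_C$ then follows from Hall's theorem (Theorem~\ref{thm:hall}/Corollary~\ref{cor:perfect matching self contained}). Write $V_C=T_C\cap V$ and $F_C=T_C\cap F$, and similarly for $T_I$ and $T_O$. As a warm-up, note that every vertex of $T_C$ is $M$-matched: an unmatched $v\in V_{\mathrm{un}}$ lies in $T_I$ via the length-zero alternating path from $v$ to itself, and an unmatched $f\in F_{\mathrm{un}}$ lies in $T_O$ for the same reason, so all unmatched vertices are excluded from $T_C$.

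The heart of the argument is the claim that for every matched edge $(v-f)\in M$ one has $v\in T_C\iff f\in T_C$; since $T_C=(V\cup F)\setminus(T_I\cup T_O)$, it suffices to prove the four implications $f\in T_I\Rightarrow v\in T_I$, $v\in T_I\Rightarrow f\in T_I$, and the analogous two with $T_O$ in place of $T_I$. All four are instances of the same alternating-path surgery. Suppose, say, $f\in T_I$, witnessed by an alternating path $P$ from $f$ to some $y\in V_{\mathrm{un}}$. If the $P$-edge incident to $f$ is the matched edge $(f-v)$, then the subpath of $P$ from $v$ to $y$ is itself an alternating path, witnessing $v\in T_I$. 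Otherwise that edge is some unmatched $(f-v')$ with $v'\neq v$; if $v$ already lies on $P$ then the subpath of $P$ from $v$ to $y$ again works, and if $v$ does not lie on $P$ then prepending the matched edge $(v-f)$ to $P$ produces a sequence of distinct vertices whose first edge is matched and whose second is unmatched, hence an alternating path from $v$ to $y$, so $v\in T_I$. Running the same argument with the roles of $v$ and $f$ interchanged (and with $F_{\mathrm{un}}$ in place of $V_{\mathrm{un}}$ for the $T_O$ statements) yields the remaining implications.

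Granting the claim, $M$ restricted to the edges with both endpoints in $T_C$ is a matching that saturates every vertex of $T_C$, and each such edge lies in the induced edge set of $\BG_C$; hence $M|_{T_C}$ is a perfect matching of $\BG_C$, and in particular a bijection between $V_C$ and $F_C$, so $|V_C|=|F_C|$. Finally, a bipartite graph admitting a perfect matching is self-contained: for any $F''\subseteq F_C$ the $M$-partners of the vertices in $F''$ are $|F''|$ distinct elements of $\adj{\BG_C}{F''}$, which is the Hall property, and taking $F''=F_C$ together with $\adj{\BG_C}{F_C}\subseteq V_C$ forces $|\adj{\BG_C}{F_C}|=|V_C|=|F_C|$; alternatively one simply cites Corollary~\ref{cor:perfect matching self contained}. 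Thus $\BG_C$ has a perfect matching and is self-contained.

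The only delicate point is the alternating-path surgery in the middle step: one must keep track of the parity of $P$ (i.e.\ onto which endpoint the matched edge $(v-f)$ can legitimately be glued) and of whether $v$ already occurs on $P$, since otherwise the concatenated walk need not be a genuine vertex-distinct path. Everything else is bookkeeping. Note in particular that the proof uses only a single fixed maximum matching $M$, so it invokes neither uniqueness of the coarse decomposition (Proposition~\ref{prop:coarse decomposition unique}) nor disjointness of $T_I$ and $T_O$ (Lemma~\ref{lemma:disjoint coarse decomposition}); indeed the latter is a corollary of the boxed claim above, since that claim shows $T_I\cup T_O$ is an $M$-closed set of vertices disjoint from $T_C$.
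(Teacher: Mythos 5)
Your proof is correct and takes essentially the same route as the paper: your central claim that a matched edge has both endpoints in $T_C$ or neither is precisely Lemma~\ref{lemma:matched vertices} (a matched vertex in $T_I$, resp.\ $T_O$, is matched to a vertex in $T_I$, resp.\ $T_O$), established by the same alternating-path extension, after which the observation that unmatched vertices lie in $T_I\cup T_O$ and Hall's theorem (Corollary~\ref{cor:perfect matching self contained}) finish the argument exactly as in the paper. One caveat on your closing aside only: disjointness of $T_I$ and $T_O$ does \emph{not} follow from your claim (which says nothing about a single vertex lying in both sets); it requires the augmenting-path argument of Lemma~\ref{lemma:disjoint coarse decomposition} --- though, as you correctly note, the present lemma never needs that fact.
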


\begin{restatable}{lemma}{impossibleedges}[\citet{Pothen1985}]
\label{lemma:impossible edges}
Let $\BG=\tuple{V,F,E}$ be a bipartite graph with a maximum matching $M$. Let $\mathrm{CD}(\BG,M)=\tuple{T_I,T_C,T_O}$ be the associated coarse decomposition. No edge joins a vertex in $T_I\cap V$ with a vertex in $(T_C\cup T_O)\cap F$ and no edge joins a vertex in $T_C\cap V$ with a vertex in $T_O\cap F$.
\end{restatable}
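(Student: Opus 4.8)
The plan is to re-express both assertions as reachability statements in the directed graph obtained by orienting $\BG$ with respect to $M$, in the spirit of the \emph{orient edges} step of Definition~\ref{def:orient, cluster, merge}: direct each edge $(v-f)\in E$ as $f\to v$ when $(v-f)\in M$ and as $v\to f$ when $(v-f)\notin M$, and call the resulting directed graph $\G$. (The construction in Definition~\ref{def:orient, cluster, merge} is stated for a perfect matching, but the orientation rule applies verbatim to the maximum matching $M$.) Two elementary observations drive the argument. First, $\G$ has a rigid local shape: the only in-edges of a variable vertex $v$ are matching edges at $v$, so $v$ has in-degree $0$ if $v$ is $M$-unmatched and in-degree $1$ (namely its matching edge) if $v$ is $M$-matched; dually, the only out-edges of a constraint vertex $f$ are matching edges at $f$, so $f$ has out-degree $0$ if unmatched and out-degree $1$ if matched. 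Second, the coarse-decomposition sets are precisely reachability sets of $\G$: $x\in T_I$ iff $x$ is reachable in $\G$ from some $y\in V_{\mathrm{un}}$, and $x\in T_O$ iff some $y\in F_{\mathrm{un}}$ is reachable in $\G$ from $x$. This second point is bipartite bookkeeping: an alternating path that ends at an $M$-unmatched vertex necessarily begins, at that endpoint, with a non-matching edge, and because $\BG$ is bipartite and the matching status alternates along the path, reading the path from the unmatched endpoint turns every edge into a forward (resp.\ backward) directed edge of $\G$; conversely, every directed path of $\G$ starting in $V_{\mathrm{un}}$ or ending in $F_{\mathrm{un}}$ is alternating.

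Granting these observations, the first assertion follows quickly. Let $v\in T_I\cap V$ and $(v-f)\in E$; I claim $f\in T_I$. If $(v-f)\notin M$, the edge is oriented $v\to f$, so together with a directed path from $V_{\mathrm{un}}$ to $v$ (which exists since $v\in T_I$) it exhibits $f$ as reachable from $V_{\mathrm{un}}$. If $(v-f)\in M$, then $v$ is matched, hence $v\notin V_{\mathrm{un}}$, so any directed path from $V_{\mathrm{un}}$ to $v$ has positive length, and by the in-degree rigidity its last edge must be $f\to v$; thus $f$ lies on that path and is reachable from $V_{\mathrm{un}}$. Either way $f\in T_I$. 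Since $T_C$ is disjoint from $T_I$ by definition and $T_I$ is disjoint from $T_O$ by Lemma~\ref{lemma:disjoint coarse decomposition} (equivalently, a vertex in $T_I\cap T_O$ would make some $y\in F_{\mathrm{un}}$ reachable in $\G$ from some $y'\in V_{\mathrm{un}}$, i.e.\ yield an $M$-augmenting path, contradicting maximality), it follows that $f\notin(T_C\cup T_O)\cap F$. The second assertion is the exact dual: if $v\in V$, $f\in T_O\cap F$, and $(v-f)\in E$, the same two-case split — now using the out-degree rigidity of $f$ in the case $(v-f)\in M$ — shows $v\in T_O$, hence $v\notin T_C$, which rules out edges between $T_C\cap V$ and $T_O\cap F$.

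The step I expect to require the most care is the matching-edge case in each assertion: there the relevant edge is oriented ``against'' the direction a naive path concatenation would want, so one must instead invoke the in-/out-degree rigidity of $\G$ to argue that the matched endpoint can only be entered (resp.\ left) through its matching edge. Pinning down the correspondence between alternating paths and directed paths in $\G$ — in particular, checking that the distinguished unmatched endpoint forces the first edge to be non-matching and that the endpoints land on the correct side $V$ or $F$ — is the other place where precision is needed, though it is routine once the bipartite structure is kept in mind.
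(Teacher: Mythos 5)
Your proof is correct and is in substance the same as the paper's: both arguments show that any neighbour $f$ of a vertex $v\in T_I\cap V$ must itself lie in $T_I$ (and dually that any neighbour of $f\in T_O\cap F$ lies in $T_O$) by extending an alternating path anchored at an unmatched vertex, and then conclude via the pairwise disjointness of $T_I$, $T_C$, and $T_O$. The only cosmetic difference is that you recast the alternating-path bookkeeping as reachability in the orientation $\G(\BG,M)$ and dispose of the case $(v-f)\in M$ inline via the in-/out-degree rigidity of that orientation, whereas the paper rules that case out up front by citing Lemma~\ref{lemma:matched vertices} (whose content your rigidity argument effectively re-derives).
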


Algorithm \ref{alg:causal ordering coarse decomposition} takes a set of exogenous vertices $W\subseteq V$ and a bipartite graph $\BG=\tuple{V, F, E}$ as input. The output is a causal ordering graph $\tuple{\V,\E}$. The algorithm first uses a maximum matching $M$ for the subgraph $\BG'$ of $\BG$ induced by $(V\setminus W)\cup F$ to construct the coarse decomposition $\tuple{T_I,T_C,T_O}$ of $\BG'$. Since the complete graph $\BG_C$ is self-contained (by Lemma \ref{lemma:complete graph self-contained}) the causal ordering algorithm for self-contained bipartite graphs can be applied to obtain the directed cluster graph $\mathrm{CO}(\BG_C)=\tuple{\V_C,\E_C}$. The cluster set $\V$ consists of the clusters in $\V_C$ and the connected components in $\BG_I$ and $\BG_O$. The edge set $\E$ contains all edges in $\E_C$. For edges between vertices $v\in T_O\cap V$ and $f\in T_C\cap F$ in $\BG$ an edge $(v\to \mathrm{cl}_{\V}(f))$ is added to $\E$.\footnote{Note that $\mathrm{cl}_{\V}(x)$ denotes the cluster in the partition $\V$ that contains the vertex $x$.} Similarly, for edges between vertices $v\in (T_O\cup T_C)\cap V$ and $f\in T_I\cap F$ an edge $(v\to \mathrm{cl}_{\V}(f))$ is also added to $\E$. By Lemma \ref{lemma:impossible edges} there are no other edges between the incomplete, complete, and over-complete graphs. Finally, edges from exogenous vertices $W$ are added to the causal ordering graph. The details can be found in Algorithm \ref{alg:causal ordering coarse decomposition}.

\begin{center}
\begin{algorithm}[H]
	\DontPrintSemicolon
	\KwIn{a set of exogenous vertices $W$, a bipartite graph $\BG=\tuple{V\cup W,F,E}$.}
	\KwOut{directed cluster graph $\tuple{\mathcal{V},\mathcal{E}}$}
  $\BG'\leftarrow$ subgraph of $\BG$ induced by $(V\setminus W)\cup F$\;
	$\M\leftarrow$ maximum matching for $\BG'$\;
	$\tuple{T_I,T_C,T_O} \leftarrow \mathrm{CD}(\BG',\M)$ {\color{gray}\tcp*{coarse decomposition}}
	$\BG_C \leftarrow$ subgraph of $\BG'$ induced by $T_C$ \;
	$\BG_I \leftarrow$ subgraph of $\BG'$ induced by $T_I$ \;
	$\BG_O \leftarrow$ subgraph of $\BG'$ induced by $T_O$ \;
	$\tuple{\V_C,\E_C} \leftarrow $ causal ordering graph for $\BG_C$ {\color{gray}\tcp*{construct clusters}}
	$\V_I \leftarrow$ partition of $T_I$ into connected components in $\BG_I$ \;
	$\V_O \leftarrow$ partition of $T_O$ into connected components in $\BG_O$ \;
	$\V\leftarrow \V_I\cup \V_C\cup \V_O \cup \{\{w\} : w\in W\}$ \;
	$\E \leftarrow \E_C$  {\color{gray}\tcp*{find edges}}
	\For{$(v-f)\in E$} {
		\If{$v\in (T_O\cup T_C)\cap V$ \textbf{\upshape and} $f\in T_I\cap F$}{
			Add $(v\to\mathrm{cl}_{\V}(f))$ to $\E$	
		}
		\ElseIf{$v\in T_O\cap V$ \textbf{\upshape and} $f\in T_C\cap F$}{	
			Add $(v\to\mathrm{cl}_{\V}(f))$ to $\E$
		}
	}
	\For{$w\in W$}{ 	
		add $(w \to \mathrm{cl}_{\V}(f))$ to $\E$ for all $f\in \adj{\BG}{w}$ {\color{gray}\tcp*{exogenous vertices}}
	}
	\label{alg:causal ordering coarse decomposition}
	\caption{Causal ordering via coarse decomposition.}
\end{algorithm}
\end{center}

\begin{corollary}
\label{coro:uniqueness extended causal ordering algorithm}
The output of Algorithm \ref{alg:causal ordering coarse decomposition} is well-defined and unique.
\end{corollary}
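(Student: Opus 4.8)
The plan is to trace through Algorithm~\ref{alg:causal ordering coarse decomposition}, isolate every step at which a choice is made, argue that none of these choices affects the output, and then check that the algorithm always terminates and returns a genuine directed cluster graph. There are exactly two choice points: the selection of a maximum matching $\M$ for $\BG'$, and the internal choices of the subroutine that computes the causal ordering graph of the complete graph $\BG_C$.

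For uniqueness, I would first note that by Proposition~\ref{prop:coarse decomposition unique} the coarse decomposition $\tuple{T_I,T_C,T_O}=\mathrm{CD}(\BG',\M)$ is the same for every maximum matching $\M$; hence the induced subgraphs $\BG_I,\BG_C,\BG_O$ are uniquely determined, and so are the partitions $\V_I,\V_O$ into connected components and the singleton clusters $\{\{w\}:w\in W\}$. By Lemma~\ref{lemma:complete graph self-contained}, $\BG_C$ is self-contained, so the subroutine call ``causal ordering graph for $\BG_C$'' is precisely Algorithm~\ref{alg:causal ordering minimal self-contained exo} applied to $\BG_C$ with empty exogenous set (equivalently, by Theorem~\ref{thm:equivalence}, Algorithm~\ref{alg:causal ordering perfect matchings}); it takes only the now-fixed graph $\BG_C$ as input, and by Theorem~\ref{thm:uniqueness} its output $\tuple{\V_C,\E_C}$ is well-defined and unique. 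Therefore $\V=\V_I\cup\V_C\cup\V_O\cup\{\{w\}:w\in W\}$ is uniquely determined, and the edge set $\E$---built deterministically from $\E_C$, the edge set $E$ of $\BG$, and the cluster map $\mathrm{cl}_{\V}$---is uniquely determined as well.

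For well-definedness I would verify termination and validity of the output. Termination is clear: a maximum matching of $\BG'$ exists and is computable, the coarse decomposition is then determined, the subroutine terminates by Theorem~\ref{thm:uniqueness}, and the loops over $E$ and over $W$ are finite. For validity, note that $T_I$ and $T_O$ are disjoint (Lemma~\ref{lemma:disjoint coarse decomposition}) and $T_C$ is their complement among the vertices of $\BG'$, so $T_I,T_C,T_O$ partition the vertex set of $\BG'$; since $\V_I$ is a partition of $T_I$, $\V_O$ of $T_O$, and $\V_C$ of $T_C$, and the singletons cover the exogenous vertices $W$, the collection $\V$ is a partition of the full vertex set and $\mathrm{cl}_{\V}$ is well-defined. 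Finally, every edge placed in $\E$ has the form $x\to C$ with $x$ a vertex and $C\in\V$, so $\tuple{\V,\E}$ is a directed cluster graph as in Definition~\ref{def:directed cluster graph}; here Lemma~\ref{lemma:impossible edges} ensures that the case distinction in the edge-adding loop is exhaustive, i.e.\ that the only edges of $\BG$ joining distinct parts of the decomposition are of the two types handled there (together with edges incident to $W$, handled separately).

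I expect the argument to be essentially bookkeeping; the one point requiring a little care is confirming that the subroutine is always fed a legitimate (self-contained) input so that Theorem~\ref{thm:uniqueness} applies---which is exactly Lemma~\ref{lemma:complete graph self-contained}---and checking the partition property of $\V$. The key inputs are thus Proposition~\ref{prop:coarse decomposition unique}, Lemma~\ref{lemma:complete graph self-contained}, Lemma~\ref{lemma:impossible edges}, and Theorem~\ref{thm:uniqueness}.
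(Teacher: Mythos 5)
Your proposal is correct and follows essentially the same route as the paper, whose proof simply cites Theorem~\ref{thm:uniqueness} and Proposition~\ref{prop:coarse decomposition unique} as the two key ingredients; you have merely spelled out the bookkeeping (via Lemmas~\ref{lemma:complete graph self-contained} and~\ref{lemma:impossible edges}) that the paper leaves implicit.
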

\begin{proof}
This follows directly from Theorem \ref{thm:uniqueness} and Proposition \ref{prop:coarse decomposition unique}.
\end{proof}

Corollary \ref{coro:uniqueness extended causal ordering algorithm} shows that the output of causal ordering via coarse decomposition does not depend on the choice of the maximum matching (i.e.\ the output is unique). The following example provides a manual demonstration of the causal ordering algorithm via the coarse decomposition.

\begin{example}
\label{ex:not self-contained system of equations}
We apply the causal ordering algorithm via coarse decomposition (i.e.\ Algorithm \ref{alg:causal ordering coarse decomposition}) to the bipartite graph in Figure \ref{fig:associated bipartite graph}. Its subgraph induced by endogenous variables and equations is the bipartite graph in Figure \ref{fig:bipartite graph} and its coarse decomposition is given in Figure \ref{fig:coarse decomposition}.  Since, by Lemma \ref{lemma:complete graph self-contained}, $\BG_C$ is self-contained we can apply the causal ordering algorithm (Algorithm \ref{alg:causal ordering minimal self-contained exo}) to the complete subgraph resulting in the directed cluster graph $\mathrm{CO}(\BG_C)=\tuple{\V_C,\E_C}$ where $\V_C=\{\{v_2,f_3\}, \{v_3,f_4\}\}$ and $\E_C=\{(v_2\to \{v_3,f_4\})\}$. The cluster set is then given by $\V=\V_C\cup \{\{v_4,v_5,f_5\}\} \cup \{ \{v_1,f_1,f_2\} \}$. We then add singleton clusters $\{w_1\}$, $\{w_2\}$, $\{w_3\}$, $\{w_4\}$, $\{w_5\}$ for each exogenous vertex. Next we add the edges $\E_C$, $(v_1\to\{\{v_2,f_3\})$ and $(v_3\to \{v_4,v_5,f_5\})$ to the edge set $\E$. Finally, we add edges $(w_1\to \{v_1,f_1,f_2\})$, $(w_2\to \{v_1,f_1,f_2\})$, $(w_3\to \{v_2, f_3\})$, $(w_4\to \{v_3,f_4\})$ and $(w_5\to \{v_4,v_5,f_5\})$ to the edge set $\E$. The resulting causal ordering graph $\mathrm{CO}(\BG)=\tuple{\V,\E}$ is given in Figure \ref{fig:causal ordering graph}.
\end{example}

\begin{figure}[ht]
\centering
\begin{tikzpicture}[scale=0.75,every node/.style={transform shape}]
\GraphInit[vstyle=Normal]
\SetGraphUnit{1}
\Vertex[L={$w_1$}, x=-2.5,y=-0.5, style={dashed}] {w_1}
\Vertex[L=$v_1$,x=0,y=0] {v1}
\Vertex[L=$v_2$,x=1.5,y=0] {v2}
\Vertex[L=$v_3$,x=3.0,y=0] {v3}
\Vertex[L=$v_4$,x=4.5,y=0] {v4}
\Vertex[L=$v_5$,x=5.5,y=0] {v5}
\Vertex[L=$w_5$,x=7.0,y=-0.5, style={dashed}] {w_5}
\Vertex[L=$f_1$,x=-1,y=-1] {f_1}
\Vertex[L=$w_2$, x=-0.5, y=-2.5, style={dashed}] {w_2}
\Vertex[L=$w_3$, x=1.5, y=-2.5, style={dashed}] {w_3}
\Vertex[L=$w_4$, x=3.0, y=-2.5, style={dashed}] {w_4}
\SetVertexMath
\SO[unit=1](v1){f_2}
\SO[unit=1](v2){f_3}
\SO[unit=1](v3){f_4}
\SO[unit=1](v4){f_5}
\node[draw=black, fit=(v1) (f_1) (f_2), inner sep=0.1cm]{};
\node[draw=black, fit=(v2) (f_3), inner sep=0.1cm]{};
\node[draw=black, fit=(v3) (f_4), inner sep=0.1cm]{};
\node[draw=black, fit=(v4) (v5) (f_5), inner sep=0.1cm]{};
\node[draw=black, fit=(w_1), inner sep=0.1cm]{};
\node[draw=black, fit=(w_2), inner sep=0.1cm]{};
\node[draw=black, fit=(w_3), inner sep=0.1cm]{};
\node[draw=black, fit=(w_4), inner sep=0.1cm]{};
\node[draw=black, fit=(w_5), inner sep=0.1cm]{};
\draw[EdgeStyle, style={->}](w_1) to (-1.6,-0.5);
\draw[EdgeStyle, style={->}](v1) to (0.9,0.0);
\draw[EdgeStyle, style={->}](v2) to (2.4,0.0);
\draw[EdgeStyle, style={->}](v3) to (3.9,0.0);
\draw[EdgeStyle, style={->}](w_2) to (-0.5,-1.6);
\draw[EdgeStyle, style={->}](w_3) to (1.5,-1.6);
\draw[EdgeStyle, style={->}](w_4) to (3.0,-1.6);
\draw[EdgeStyle, style={->}](w_5) to (6.1,-0.5);
\end{tikzpicture}
\caption{Causal ordering graph for the bipartite graph in Figure \ref{fig:associated bipartite graph}.}
\label{fig:causal ordering graph}
\end{figure}
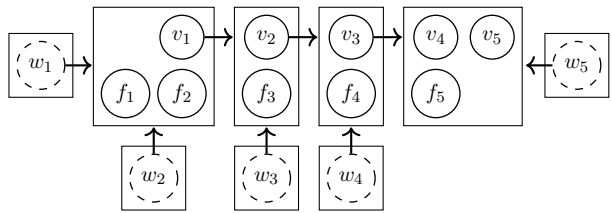

\section{Markov ordering graph}
\label{sec:markov properties}

First we consider (unique) solvability assumptions for systems of constraints. We will then construct the \emph{Markov ordering graph} and prove that it implies conditional independences between variables that appear in constraints. We also apply our method to the model for the filling bathtub in Example \ref{ex:bathtub intro}. Finally, we present a novel result regarding the generalized directed global Markov property for solutions of systems of constraints and an \emph{associated directed graph}. 

\subsection{Solvability for systems of constraints}
\label{sec:solvability for systems of constraints}

In this section, we consider (unique) solutions of systems of constraints with exogenous random variables, and give a sufficient condition under which the output of the causal ordering algorithm can be interpreted as the order in which sets of (endogenous) variables can be solved in a set of equations (i.e.\ constraints).

\begin{definition}
\label{def:solutions of systems of constraints}
We say that a measurable mapping $\B{g} : \B{\X}_W \mapsto \B{\X}_{V\setminus W}$ that maps
values of the exogenous variables to values of the endogenous variables is a 
\emph{solution to a system of constraints $\tuple{\B{\X},\B{X}_W,\B{\Phi}, \BG}$} if
\begin{align*}
  \phi_f(\B{g}_{V(f) \setminus W}(\B{X}_W),\B{X}_{V(f) \cap W})=c_f, \quad \forall\, f\in F, \quad \P_{\B{X}_W}\text{-a.s.}
\end{align*}
We say that the system of constraints is \emph{uniquely} solvable (or ``has a unique solution'') if all its solutions are $\P_{\B{X}_W}$-a.s.\ equal.
\end{definition}

The system of constraints in the example below is solvable but not uniquely solvable. The example illustrates that the dependence or independence between solution components (i.e.\ endogenous variables) is not the same for all solutions. 

\begin{example}
	Consider a system of constraints $\tuple{\B{\X},\B{X}_W,\B{\Phi}, \BG}$ with $\B{\X}=\R^4$ and independent exogenous random variables $\B{X}_W=(X_w)_{w\in \{w_1,w_2\}}$ taking value in $\R^2$. Suppose that $\B{\Phi}$ consists of the constraints 
	\begin{align}
	\Phi_{f_1}&=\tuple{X_{V(f_1)} \mapsto X_{v_1}-X_{w_1}, 0 ,\{v_1,w_1\}}, \\
	\Phi_{f_2}&=\tuple{X_{V(f_2)} \mapsto X_{v_2}^2 - |X_{w_2}|, 0, \{v_2,w_2\}}.
	\end{align}
  This system of constraints has solutions with different distributions. One solution is given by $(X_{v_1}^*,X_{v_2}^*) = (X_{w_1}, \sqrt{|X_{w_2}|})$ and another solution is $(X'_{v_1}, X'_{v_2}) = (X_{w_1}, \mathrm{sgn}(X_{w_1})\sqrt{|X_{w_2}|})$. Note that the solution components $X^*_{v_1}$ and $X^*_{v_2}$ are independent, whereas the solution components $X'_{v_1}$ and $X'_{v_2}$ may be dependent.
\end{example}

Underspecified (and overspecified) systems of constraints can be avoided by the requirement that it is \emph{uniquely solvable}. In Definition \ref{def:unique solvability} below we give a sufficient condition under which a unique solution can be obtained by solving variables in clusters from equations in these clusters.

\begin{definition}
	\label{def:solvability}
	A system of constraints $\mathcal{M}=\tuple{\B{\X}, \B{X}_W,\B{\Phi}, \BG}$ is \emph{solvable w.r.t.\ constraints $S_F\subseteq F$ and endogenous variables $S_V\subseteq V(S_F)\setminus W$} if there exists a measurable function $\B{g}_{S_V}:\B{\X}_{V(S_F)\setminus S_V}\to \B{\X}_{S_V}$ s.t.\ $\P_{\B{X}_W}$-a.s., for all $\B{x}_{V(S_F)\setminus W}\in \B{\X}_{V(S_F)\setminus W}$:
	\begin{align*}
    \phi_f(\B{x}_{V(f) \setminus W},\B{X}_{V(f) \cap W})=c_f, \;\; \forall\, f\in S_F \;\impliedby\; \B{x}_{S_V}=\B{g}_{S_V}(\B{x}_{V(S_F)\setminus (S_V \cup W)},\B{X}_{V(S_F) \cap W}).
	\end{align*}
	\label{def:unique solvability}
  $\mathcal{M}$ is \emph{uniquely solvable w.r.t.\ constraints $S_F$ and endogenous variables $S_V$} if the converse implication also holds.
\end{definition}

The following condition suffices for the existence of a unique solution that can be obtained by solving for variables from equations in their cluster along a topological ordering of the clusters in the causal ordering graph. This weakens the assumptions made in \citet{Simon1953} who requires both unique solvability w.r.t.\ every subset of equations (and the endogenous variables that appear in them) and self-containedness of the bipartite graph.

\begin{definition}
	\label{def:solvability wrt co graph}
	We say that $\mathcal{M}$ is \emph{uniquely solvable w.r.t.\ the causal ordering graph} $\mathrm{CO}(\BG)=\tuple{\mathcal{V},\mathcal{E}}$ if it is uniquely solvable w.r.t.\ $S\cap F$ and $S\cap V$ for all $S\in\mathcal{V}$ with $S\cap W=\emptyset$.
\end{definition}

For systems of constraints for cyclic models or with non-linear equations, for which the incomplete subgraph is not the empty graph, the condition of unique solvability with respect to the causal ordering graph is too strong. This is illustrated by Example \ref{ex:solvability incomplete graph} below.

\begin{example}
\label{ex:solvability incomplete graph}
Let $V= \{v_1,\ldots v_5\}$ be an index set for endogenous variables $X_{v_1},\ldots,X_{v_5}$ taking value in $\mathbb{R}$, $W=\{w_1,\ldots,w_5\}$ an index set for independent exogenous random variables $U_{w_1},\ldots,U_{w_5}$ taking value in $\mathbb{R}$, and $p_1,p_2$ parameters with values in $\mathbb{R}$. Consider the following non-linear system of constraints:
\begin{align}
\Phi_{f_1}:&& X_{v_1}^2 - U_{w_1} &= 0, \\
\Phi_{f_2}:&& \mathrm{sgn}(X_{v_1}) - \mathrm{sgn}(U_{w_2}) &= 0, \\
\Phi_{f_3}:&& X_{v_2} - p_1 X_{v_1} - U_{w_3} &= 0, \\
\Phi_{f_4}:&& X_{v_3} - p_2 X_{v_2} - U_{w_4} &= 0, \\
\Phi_{f_5}:&& X_{v_3} + X_{v_4} + X_{v_5} - U_{w_5} &= 0.
\end{align}
The associated bipartite graph $\BG$ is given in Figure \ref{fig:associated bipartite graph} and the corresponding causal ordering graph is given in Figure \ref{fig:causal ordering graph}. It is easy to check that the system of constraints is uniquely solvable with respect to the clusters $\{v_1,f_1,f_2\}$, $\{v_2,f_3\}$, and $\{v_3,f_4\}$ in the causal ordering graph. Equation $f_5$ does not provide a unique solution for the variables $v_4$ and $v_5$ and hence the system is not uniquely solvable with respect to the cluster $\{v_4,v_5,f_5\}$.
\end{example}

Generally speaking, systems of constraints are not uniquely solvable with respect to the clusters in the incomplete set of vertices in the associated bipartite graph. In order to derive a Markov property for the complete and overcomplete sets of vertices in the associated bipartite graph, we use the condition in Definition \ref{def:solvability overcomplete} below, which is slightly weaker than the one in Definition \ref{def:solvability}. Since self-contained bipartite graphs do not have an incomplete part there is no difference between the two conditions in that case.

\begin{definition}
\label{def:solvability overcomplete}
Let $\mathcal{M}=\tuple{\B{\X}, \B{X}_W,\B{\Phi}, \BG}$ be a system of constraints. Denote its coarse decomposition by $\mathrm{CD}(\BG)=\tuple{T_I,T_C,T_O}$ and its causal ordering graph by $\mathrm{CO}(\BG) = \tuple{\mathcal{V},\mathcal{E}}$. We say that $\mathcal{M}$ is \emph{maximally uniquely solvable} if it is 
  \begin{enumerate}
    \item uniquely solvable w.r.t.\ $S \cap F$ and $S \cap V$ for all $S \in \mathcal{V}$ with $S\cap W=\emptyset$ and $S\cap T_I=\emptyset$, and
  	\item solvable with respect to $T_I \cap F$ and $(T_I \cap V) \setminus W$.
  \end{enumerate}
\end{definition}
This condition suffices to guarantee the existence of a solution, and that it is unique on the (over)complete part $(T_O \cup T_C) \cap V \setminus W$.

\subsection{Directed global Markov property via causal ordering}
\label{sec:directed global markov property via causal ordering}

The \emph{Markov ordering graph} is constructed from a causal ordering graph by \emph{declustering} and then marginalizing out the vertices that correspond to constraints.

\begin{definition}
\label{def:declustering}
Let $\G=\tuple{\mathcal{V},\mathcal{E}}$ be a directed cluster graph. The \emph{declustered} graph is given by $D(\G)=\tuple{V,E}$ with $V=\cup_{S\in\mathcal{V}} S$ and $(v\to w)\in E$ if and only if $(v\to \mathrm{cl}(w))\in \mathcal{E}$. For a system of constraints $\mathcal{M}=\tuple{\B{\X}, \B{X}_W, \B{\Phi}, \BG}$ with $\BG=\tuple{V,F,E}$, we say that $\mathrm{MO}(\BG)=D(\mathrm{CO}(\BG))_{\mathrm{mar}(F)}$ is the \emph{Markov ordering graph}.
\end{definition}

Under the assumption that systems of constraints are uniquely solvable with respect to the (over)complete part of their causal ordering graph, Theorem \ref{thm:markov property} relates d-separations between vertices in the Markov ordering graph to conditional independences between the corresponding components of a solution of the system of constraints.

\begin{restatable}{theorem}{markovproperty}
\label{thm:markov property}
Let $\B{X}^*=\B{h}(\B{X}_W)$ with $\B{h} : \B{\X}_W \to \B{\X}_{V\setminus W}$ be a solution of a system of constraints $\mathcal{M}=\tuple{\B{\X}, \B{X}_W, \B{\Phi}, \BG}$ with coarse decomposition $\mathrm{CD}(\BG)=\tuple{T_I,T_C,T_O}$. Let $\mathrm{MO}_{\mathrm{CO}}(\BG)$ denote the subgraph of the Markov ordering graph induced by $T_C\cup T_O$ and let $\B{X}_{\mathrm{CO}}^{*}$ denote the corresponding solution components. If $\mathcal{M}$ is maximally uniquely solvable then the pair $(\mathrm{MO}_{\mathrm{CO}}(\BG), \P_{\B{X}_{\mathrm{CO}}^{*}})$ satisfies the directed global Markov property (see Definition~\ref{def:markov property}).
\end{restatable}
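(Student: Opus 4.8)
The plan is to reduce the directed global Markov property for $(\mathrm{MO}_{\mathrm{CO}}(\BG), \P_{\B{X}_{\mathrm{CO}}^*})$ to a statement about a structural causal model, for which the Markov property is already known. First I would use the maximal unique solvability assumption together with Lemma~\ref{lemma:complete graph self-contained} to conclude that $\BG_C$ is self-contained and hence that $\mathrm{CO}(\BG_C)$ is a well-defined directed cluster graph whose clusters, taken in a topological order, can be solved one at a time: for each cluster $S$ with $S \cap W = S \cap T_I = \emptyset$ there is a measurable map expressing the variables $S \cap V$ as a function of the variables feeding into that cluster (its parents in the declustered graph) and of the exogenous variables. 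Concatenating these maps along the topological order of clusters in $T_C \cup T_O$ yields a system of \emph{structural} assignments: one equation per endogenous variable vertex $v \in (T_C \cup T_O) \cap V \setminus W$, with right-hand side depending only on the parents of $v$ in $\mathrm{MO}_{\mathrm{CO}}(\BG)$ and on exogenous variables. This is exactly the data of an SCM whose graph is $\mathrm{MO}_{\mathrm{CO}}(\BG)$ (possibly cyclic, because clusters can contain several variables, producing 2-cycles among those variables in the Markov ordering graph).

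Next I would invoke the known Markov property for SCMs of this restricted type. The key point is that although $\mathrm{MO}_{\mathrm{CO}}(\BG)$ may contain cycles, those cycles are confined \emph{inside} clusters: any strongly connected component of $\mathrm{MO}_{\mathrm{CO}}(\BG)$ is contained in a single cluster $S \cap V$, and the induced subgraph on $S \cap V$ is complete (every ordered pair is an edge, since the whole cluster is solved jointly). For such an SCM — one that is ``solvable'' in the sense that each strongly connected component admits a measurable solution function of its inputs, and whose exogenous variables are mutually independent — the directed global Markov property with respect to $d$-separation holds; this is the content of the Markov property established for the relevant SCM class (cf.\ the discussion of \citet{Bongers2020, Forre2017} referenced earlier). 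Concretely, one checks the factorization/Markov condition cluster by cluster along the topological order: conditioning on the parents of a cluster, the freshly introduced exogenous variables entering that cluster are independent of everything already determined, which gives the local Markov statement, and the global statement follows by the standard equivalence of local and global Markov properties for these graphs.

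There is one subtlety that I expect to be the main obstacle, namely the \emph{overcomplete} part $T_O$. Here there are more equations than variables, so the ``solution function'' for a cluster in $T_O$ is not simply a function of its parents and fresh exogenous variables in the naive way — the extra equations impose constraints that may couple the cluster's output to its inputs in a non-functional manner. I would handle this by using the second clause of maximal unique solvability only where it is actually needed: maximal unique solvability requires unique solvability for clusters with $S \cap T_I = \emptyset$, which \emph{includes} the overcomplete clusters, so each overcomplete cluster does admit a genuine measurable solution function $\B{g}_{S \cap V}$ of the remaining variables in $V(S \cap F)$ and the exogenous variables. The care needed is to verify that the ``remaining variables'' appearing as arguments of $\B{g}_{S\cap V}$ are exactly the parents of $S \cap V$ in the Markov ordering graph — this is where Lemma~\ref{lemma:impossible edges} is essential, since it rules out edges from $T_I \cap V$ into $T_C \cup T_O$ on the $F$-side, so no incomplete-part variables can sneak in as arguments. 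Once the argument sets are pinned down to be exactly the graphical parents, the SCM reduction goes through and the Markov property transfers.

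Finally, I would assemble the pieces: (i) maximal unique solvability $\Rightarrow$ per-cluster solution functions on $T_C \cup T_O$ with arguments equal to graphical parents (using Lemma~\ref{lemma:impossible edges} and Lemma~\ref{lemma:complete graph self-contained}); (ii) these functions define an SCM with graph $\mathrm{MO}_{\mathrm{CO}}(\BG)$ whose strongly connected components are the within-cluster complete subgraphs; (iii) such SCMs satisfy the directed global Markov property w.r.t.\ $d$-separation; hence (iv) $\P_{\B{X}_{\mathrm{CO}}^*}$, being the pushforward distribution of this SCM's solution, satisfies the directed global Markov property w.r.t.\ $\mathrm{MO}_{\mathrm{CO}}(\BG)$, as claimed. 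Standard measure-theoretic care is needed throughout because all equalities hold only $\P_{\B{X}_W}$-almost surely, but this does not affect conditional independence statements.
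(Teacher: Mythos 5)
Your overall strategy matches the paper's: use maximal unique solvability to write each endogenous variable in $(T_C\cup T_O)\cap V\setminus W$, cluster by cluster along a topological order, as a measurable function of exactly its parents in the Markov ordering graph (the paper establishes $V(S_F)\setminus S_V = \pa{\mathrm{MO}(\BG)}{v}$ by unwinding Definitions~\ref{def:system of constraints} and~\ref{def:declustering} and the edge-construction in the algorithm), thereby obtaining a system of structural equations whose graph is $\mathrm{MO}_{\mathrm{CO}}(\BG)$, and then invoke a known Markov property for that system. Your treatment of the overcomplete part and your appeal to Lemma~\ref{lemma:impossible edges} to keep incomplete-part variables out of the argument lists are both sensible and consistent with what the paper does implicitly.

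However, there is a genuine error in the middle of your argument. You claim that $\mathrm{MO}_{\mathrm{CO}}(\BG)$ may be cyclic, with each cluster $S\cap V$ inducing a complete subgraph (``2-cycles among those variables''). This is false: by Definition~\ref{def:declustering}, $(v\to w)$ is an edge of the declustered graph if and only if $(v\to\mathrm{cl}(w))\in\mathcal{E}$, and the causal ordering algorithm only ever adds an edge $(v\to C)$ for $v\notin C$. Hence two variables in the same cluster share the same parent set but are \emph{not} joined by edges, and $\mathrm{MO}(\BG)$ and $\mathrm{MO}_{\mathrm{CO}}(\BG)$ are DAGs --- a fact the paper's proof states and relies on explicitly. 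This matters because the final step of your proof invokes ``the directed global Markov property with respect to $d$-separation'' for a cyclic SCM whose strongly connected components are solvable; no such result holds in general (as the paper itself stresses, only the $\sigma$-separation version survives cycles outside the linear/discrete case), and the special case you gesture at is not established anywhere. The conclusion is salvageable precisely because the graph is in fact acyclic, so the correct last step is the classical Markov property for DAGs with independent exogenous roots and deterministic structural assignments \citep{Lauritzen1996,Pearl2000}, which is what the paper uses. As written, though, your justification for the crucial final implication rests on a misreading of the declustering construction and on an unproved (and generally false) Markov property, so the proof does not go through without repair.
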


In particular, when the incomplete and overcomplete sets are empty (i.e.\ when $T_I=\emptyset$ and $T_C=\emptyset$) and the system is uniquely solvable with respect to the causal ordering graph, Theorem \ref{thm:markov property} tells us that the pair $(\mathrm{MO}(\BG), \mathbb{P}_{\B{X}^*})$ satisfies the directed global Markov property.

\begin{example}
\label{ex:markov ordering incomplete}
Consider the system of constraints in Example \ref{ex:solvability incomplete graph}. The Markov ordering graph for the associated bipartite graph in Figure \ref{fig:associated bipartite graph} can be constructed from the causal ordering graph in Figure \ref{fig:causal ordering graph} and is given in Figure \ref{fig:coarse decomposition:markov ordering graph}. One can check that the system of constraints is uniquely solvable with respect to the clusters in the complete and overcomplete sets. The Markov ordering graph can be used to read off conditional independences from d-separations between vertices that are not in the incomplete part. For example, since $v_1$ is d-separated from $v_3$ given $v_2$, we deduce that $X_{v_1}\indep X_{v_3}\given X_{v_2}$, for any solution of the constraints.
\end{example}

\begin{figure}[ht]
\centering
\begin{subfigure}[b]{0.5\textwidth}%
\begin{tikzpicture}[scale=0.75,every node/.style={transform shape}]
\GraphInit[vstyle=Normal]
\SetGraphUnit{1.5}
\Vertex[L=$w_1$, style={dashed}] {w_1}
\SetVertexMath
\EA[unit=1.5](w_1){v_1}
\EA[unit=1.5](v_1){v_2}
\EA[unit=1.5](v_2){v_3}
\EA[unit=1.5](v_3){v_4}
\SO[unit=1.2](v_4){v_5}
\EA[unit=1.5, style={dashed}](v_4){w_5}
\SO[unit=1.2, style={dashed}](v_1){w_2}
\SO[unit=1.2, style={dashed}](v_2){w_3}
\SO[unit=1.2, style={dashed}](v_3){w_4}
\node[draw=black, style={dotted}, fit=(v_4) (v_5), inner sep=0.2cm, label=$T_I\cap V$]{};
\tikzset{EdgeStyle/.style = {->}}
\Edge(w_1)(v_1)
\Edge(w_2)(v_1)
\Edge(w_3)(v_2)
\Edge(w_4)(v_3)
\Edge(v_2)(v_3)
\Edge(v_1)(v_2)
\Edge(v_3)(v_4)
\Edge(v_3)(v_5)
\Edge(w_5)(v_4)
\Edge(w_5)(v_5)
\end{tikzpicture}
\caption{$\mathrm{MO}(\BG)$.}
\end{subfigure}
\begin{subfigure}[b]{0.4\textwidth}
\begin{tikzpicture}[scale=0.75,every node/.style={transform shape}]
\GraphInit[vstyle=Normal]
\SetGraphUnit{1.5}
\Vertex[L=$w_1$, style={dashed}] {w_1}
\SetVertexMath
\EA[unit=1.5](w_1){v_1}
\EA[unit=1.5](v_1){v_2}
\EA[unit=1.5](v_2){v_3}
\EA[unit=1.5, style={dashed}](v_3){w_5}
\SO[unit=1.2, style={dashed}](v_1){w_2}
\SO[unit=1.2, style={dashed}](v_2){w_3}
\SO[unit=1.2, style={dashed}](v_3){w_4}
\tikzset{EdgeStyle/.style = {->}}
\Edge(w_1)(v_1)
\Edge(w_2)(v_1)
\Edge(w_3)(v_2)
\Edge(w_4)(v_3)
\Edge(v_2)(v_3)
\Edge(v_1)(v_2)
\end{tikzpicture}
\caption{$\mathrm{MO}_{\mathrm{CO}}(\BG)$.}
\end{subfigure}
\caption{(a) The Markov ordering graph associated with the system of constraints in Example \ref{ex:solvability incomplete graph}. It can be constructed from the causal ordering graph in Figure \ref{fig:causal ordering graph}. The vertices in the incomplete graph are indicated by the dashed rectangle. (b) Its subgraph induced by $T_C \cap T_O$. Theorem \ref{thm:markov property} shows that d-separations in $\mathrm{MO}_{\mathrm{CO}}(\BG)$ imply conditional independences.}
\label{fig:coarse decomposition:markov ordering graph}
\end{figure}
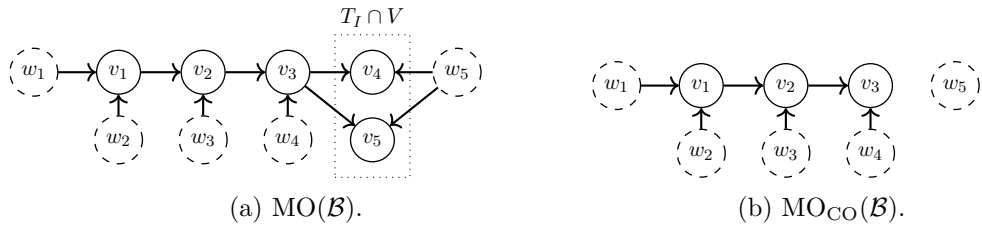

\subsection{Application to the filling bathtub}
\label{sec:application to bathtub}

In Example \ref{ex:bathtub intro} we informally described an equilibrium model for a filling bathtub. The endogenous variables of the system are the diameter $X_{v_K}$ of the drain, the rate $X_{v_I}$ at which water flows from the faucet, the water pressure $X_{v_P}$, the rate $X_{v_O}$ at which the water goes through the drain and the water level $X_{v_D}$. The model is formally represented by a system of constraints $\mathcal{M} = \tuple{ \B{\X}, \B{X}_W, \Phi, \BG}$ where:

\begin{enumerate}
	\item $\B{\X}=\mathbb{R}_{>0}^{12}$ is a product of standard measurable spaces corresponding to the domain of variables that are indexed by $\{v_K,v_I,v_P,v_O,v_D,w_K,w_I,w_1,\ldots, w_5\}$,
	\item $\B{X}_W = \{U_{w_I},U_{w_K},U_{w_1},\ldots,U_{w_5}\}$ is a family of independent exogenous random variables,
	\item $\B{\Phi}$ is a family of constraints:
	{\small
		\begin{align*}
      \Phi_{f_K} &= \langle X_{V(f_K)} \mapsto X_{v_K}-U_{w_K}, && 0, && V(f_K) = \{v_K,w_K\}\rangle, \\
      \Phi_{f_I} &= \langle X_{V(f_I)} \mapsto X_{v_I}-U_{w_I}, && 0, && V(f_I) = \{v_I,w_I\}\rangle, \\
      \Phi_{f_P} &= \langle  X_{V(f_P)} \mapsto U_{w_1}(g U_{w_2} X_{v_D} - X_{v_P}), && 0 , && V(f_P) = \{v_D,v_P,w_1,w_2\}\rangle, \\
      \Phi_{f_O} &= \langle  X_{V(f_O)} \mapsto U_{w_3}(U_{w_4} X_{v_K} X_{v_P} - X_{v_O}), && 0 , && V(f_O) = \{v_K, v_P, v_O, w_3, w_4\}\rangle, \\
      \Phi_{f_D} &= \langle  X_{V(f_D)} \mapsto U_{w_5} (X_{v_I}-X_{v_O}), && 0, && V(f_D) = \{v_I,v_O,v_5\}\rangle,		
		\end{align*}
	}
	\item The associated bipartite graph $\BG=\tuple{V,F,E}$ is as in Figure \ref{fig:bathtub full bipartite}. The vertices $F=\{f_K,f_I,f_P,f_O,f_D\}$ correspond to constraints and the vertices $V\setminus W = \{v_K,v_I,v_P,v_O,$ $v_D\}$ and $W=\{w_I, w_K, w_{1}, \ldots, w_{5}\}$ correspond to endogenous and exogenous variables respectively. Note that the subgraph induced by the endogenous vertices $V\setminus W$ is the self-contained bipartite graph presented in Figure \ref{fig:bathtub bipartite graph}.
\end{enumerate}

\begin{figure}[ht]
\centering
\begin{tikzpicture}[scale=0.75,every node/.style={transform shape}]
\GraphInit[vstyle=Normal]
\SetGraphUnit{1}
\Vertex[L=$w_{I}$, style={dashed}] {wI}
\Vertex[L=$v_I$,x=1,y=0] {v_I}
\Vertex[L=$w_{5}$,x=2,y=0, style={dashed}] {w5}
\Vertex[L=$v_O$,x=3,y=0] {v_O}
\Vertex[L=$w_{4}$,x=4,y=0, style={dashed}] {w4}
\Vertex[L=$v_D$,x=5,y=0] {v_D}
\Vertex[L=$w_{3}$,x=6,y=0, style={dashed}] {w3}
\Vertex[L=$w_{2}$,x=7,y=0, style={dashed}] {w2}
\Vertex[L=$v_P$,x=8,y=0] {v_P}
\Vertex[L=$w_{1}$,x=9,y=0, style={dashed}] {w1}
\Vertex[L=$v_K$,x=10,y=0] {v_K}
\Vertex[L=$w_{K}$,x=11,y=0, style={dashed}] {wK}
\Vertex[L=$f_O$, x=5.5, y=-1.5] {f_O}
\SetVertexMath
\SO[unit=1.5](v_I){f_I}
\SO[unit=1.5](v_O){f_D}
\SO[unit=1.5](v_P){f_P}
\SO[unit=1.5](v_K){f_K}
\tikzset{EdgeStyle/.style = {-}}
\Edge(f_I)(v_I)
\Edge(f_D)(v_I)
\Edge(f_D)(v_O)
\Edge[style={bend right=10}](f_O)(v_K)
\Edge(f_O)(v_P)
\Edge(f_O)(v_O)
\Edge(f_P)(v_D)
\Edge(f_P)(v_P)
\Edge(f_K)(v_K)
\Edge(wI)(f_I)
\Edge(w5)(f_D)
\Edge(w4)(f_O)
\Edge(w3)(f_O)
\Edge(w2)(f_P)
\Edge(w1)(f_P)
\Edge(wK)(f_K)
\end{tikzpicture}
\caption{The bipartite graph associated with the equilibrium equations of the bathtub system.}
\label{fig:bathtub full bipartite}
\end{figure}
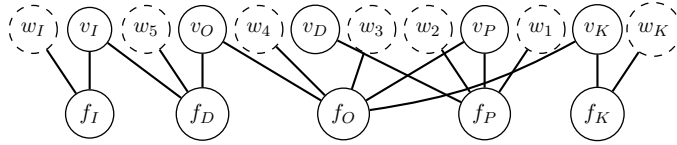

\paragraph{Solvability with respect to the causal ordering graph:}

Applying Algorithm \ref{alg:causal ordering minimal self-contained exo} to the bipartite graph results in the causal ordering graph $\mathrm{CO}(\BG)$ in Figure \ref{fig:bathtub full causal ordering}. Since the bipartite graph induced by the endogenous variables and equations is self-contained, there is no incomplete or overcomplete subgraph. The assumption of maximal unique solvability in Theorem \ref{thm:markov property} then reduces to the assumption of unique solvability with respect to the causal ordering graph. Through explicit calculations, it is easy to verify that $\mathcal{M}$ is (maximally) uniquely solvable with respect to $\mathrm{CO}(\BG)$, whenever $g\neq 0$:
\begin{enumerate}
	\item For the cluster $\{f_K,v_K\}$ we have that $X_{v_K}-U_{w_K} = 0 \iff X_{v_K} = U_{w_K}$.
	\item For the cluster $\{f_I,v_I\}$ we have that $X_{v_I}-U_{w_I} = 0 \iff X_{v_I}=U_{w_I}$.
	\item For $\{f_O,v_P\}$ we have that $U_{w_3}(U_{w_4} X_{v_K} X_{v_P} - X_{v_O})=0 \iff X_{v_P}=\frac{X_{v_O}}{U_{w_3} X_{v_K}}$.
	\item For $\{f_D,v_O\}$ we have that $U_{w_5} (X_{v_I}-X_{v_O}) \iff X_{v_O} = X_{v_I}$.
	\item For $\{f_P,v_D\}$ we have that $U_{w_1} (g U_{w_2} X_{v_D} - X_{v_P}) \iff X_{v_D} = \frac{X_{v_P}}{g U_{w_2}}$.
\end{enumerate}

In practice, we do not always need to manually check the assumption of unique solvability with respect to the causal ordering graph. For example, in linear systems of equations of the form $\B{A} \B{X} = \B{Y}$, we may use the fact that this assumption is satisfied when the matrix of coefficients $\B{A}$ is invertible. More generally, global implicit function theorems give conditions under which (non-linear) systems of equations have a unique solution \citep{Krantz2013}.\footnote{In particular, Hadamard's global implicit function theorem in \citet{Krantz2013} states the following \citep{Hadamard1906}. Let $\B{f}:\mathbb{R}^n\mapsto \mathbb{R}^n$ be a $C^2$ mapping. Suppose that $\B{f}(\B{0})=\B{0}$ and that the Jacobian determinant is non-zero at each point. Further suppose that whenever $K\subseteq \mathbb{R}^n$ is compact then $\B{f}^{-1}(K)$ is compact (i.e.\ $\B{f}$ is proper). Then $\B{f}$ is one-to-one and onto. In the literature, several conditions have been formulated yielding global inverse theorems in different or more general settings, see for example \citet{Idczak2016, Gutu2017}.} We consider detailed analysis of conditions under which (maximal) unique solvability is guaranteed to be outside the scope of this paper. Note that, under the assumption of (maximal) unique solvability, the conditional independences can be read off from the Markov ordering graph without the requirement of calculating explicit solutions.

\begin{figure}[ht]
\centering
\begin{tikzpicture}[scale=0.75,every node/.style={transform shape}]
\GraphInit[vstyle=Normal]
\SetGraphUnit{1}
\Vertex[L=$w_{I}$, style={dashed}, x=-1.5, y=0] {wI}
\Vertex[L=$w_{1}$, style={dashed}, x=1.5, y=0.2] {w1}
\Vertex[L=$w_{2}$, style={dashed}, x=1.5, y=-1.3] {w2}
\Vertex[L=$w_{3}$, style={dashed}, x=6.7, y=0.2] {w3}
\Vertex[L=$w_{4}$, style={dashed}, x=6.7, y=-1.3] {w4}
\Vertex[L=$w_{5}$, style={dashed}, x=6.7, y=1.7] {w5}
\Vertex[L=$w_{K}$, style={dashed}, x=6.7, y=-2.8] {wK}
\node[draw=black, fit=(wI), inner sep=0.1cm ]{};
\node[draw=black, fit=(w1), inner sep=0.1cm ]{};
\node[draw=black, fit=(w2), inner sep=0.1cm ]{};
\node[draw=black, fit=(w3), inner sep=0.1cm ]{};
\node[draw=black, fit=(w4), inner sep=0.1cm ]{};
\node[draw=black, fit=(w5), inner sep=0.1cm ]{};
\node[draw=black, fit=(wK), inner sep=0.1cm ]{};
\draw[EdgeStyle, style={->}] (wI) to (-0.6,0);
\draw[EdgeStyle, style={->}] (w1) to (2.6,0.2);
\draw[EdgeStyle, style={->}] (w2) to (2.6,-1.3);
\draw[EdgeStyle, style={->}] (w3) to (5.6,0.2);
\draw[EdgeStyle, style={->}] (w4) to (5.6,-1.3);
\draw[EdgeStyle, style={->}] (w5) to (5.6,1.7);
\draw[EdgeStyle, style={->}] (wK) to (5.6,-2.8);	
\SetVertexMath
\Vertex{v_I}
\EA[unit=3.2](v_I){v_D}
\EA[unit=1.8](v_D){v_P}
\SO(v_I){f_I}
\SO(v_D){f_P}
\SO(v_P){f_O}
\NO[unit=1.5](v_P){v_O}
\WE(v_O){f_D}
\SO[unit=2.5](v_P){v_K}
\WE(v_K){f_K}
\node[draw=black, fit=(v_I) (f_I), inner sep=0.1cm ]{};
\node[draw=black, fit=(v_D) (f_P), inner sep=0.1cm ]{};
\node[draw=black, fit=(v_P) (f_O), inner sep=0.1cm ]{};
\node[draw=black, fit=(v_O) (f_D), inner sep=0.1cm ]{};
\node[draw=black, fit=(v_K) (f_K), inner sep=0.1cm ]{};	
\draw[EdgeStyle, style={-}](v_I) to (0,1.5);
\draw[EdgeStyle, style={->}](0,1.5) to (3.4,1.5);
\draw[EdgeStyle, style={->}](v_O) to (5,0.55);
\draw[EdgeStyle, style={->}](v_P) to (3.8,0);
\draw[EdgeStyle, style={->}](v_K) to (5,-1.6);
\end{tikzpicture}
\caption{The causal ordering graph for the equilibrium equations of the bathtub system.}
\label{fig:bathtub full causal ordering}
\end{figure}
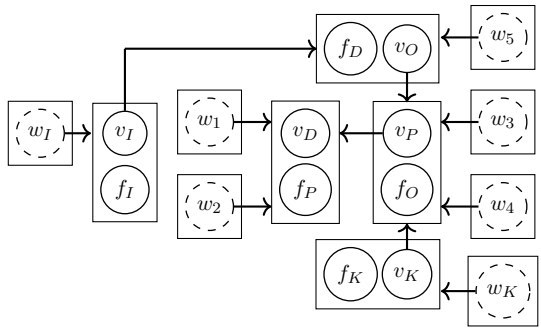

\paragraph{Markov ordering graph:}

Application of declustering and marginalization of vertices in $F$, as in Definition \ref{def:declustering}, to the causal ordering graph in Figure \ref{fig:bathtub full causal ordering} results in the Markov ordering graph in Figure \ref{fig:bathtub full markov ordering}. Since $\mathcal{M}$ is uniquely solvable with respect to $\mathrm{CO}(\BG)$, Theorem \ref{thm:markov property} tells us that the pair $(\mathrm{MO}(\BG), \mathbb{P}_{\B{X}^*})$ satisfies the directed global Markov property, where $\B{X}^*$ is a solution of $\mathcal{M}$.\footnote{Recall that there is no incomplete and overcomplete part of the bipartite graph. Therefore we have that $\mathrm{MO}_{\mathrm{CO}}(\BG) = \mathrm{MO}(\BG)$.}

\begin{figure}[ht]
\begin{subfigure}[b]{0.5\textwidth}
	\centering
	\begin{tikzpicture}[scale=0.75,every node/.style={transform shape}]
	\GraphInit[vstyle=Normal]
	\SetGraphUnit{1}
	\Vertex[L=$w_{I}$, style={dashed}, x=-1.2, y=0] {wI}
	\Vertex[L=$w_{1}$, style={dashed}, x= 1.2, y=0.6] {w1}
	\Vertex[L=$w_{2}$, style={dashed}, x=1.2, y=-0.6] {w2}
	\Vertex[L=$w_{3}$, style={dashed}, x=4.8, y=0.6] {w3}
	\Vertex[L=$w_{4}$, style={dashed}, x=4.8, y=-0.6] {w4}
	\Vertex[L=$w_{5}$, style={dashed}, x=4.8, y=1.8] {w5}
	\Vertex[L=$w_{K}$, style={dashed}, x=4.8, y=-1.8] {wK}
	\SetVertexMath
	\Vertex{v_I}
	\EA[unit=2.4](v_I){v_D}
	\EA[unit=1.2](v_D){v_P}
	\NO[unit=1.2](v_P){v_O}
	\SO[unit=1.2](v_P){v_K}
	\draw[EdgeStyle, style={-}](v_I) to (0,1.2);
	\draw[EdgeStyle, style={->}](0,1.2) to (v_O);
	\draw[EdgeStyle, style={->}](v_O) to (v_P);
	\draw[EdgeStyle, style={->}](v_P) to (v_D);
	\draw[EdgeStyle, style={->}](v_K) to (v_P);
	\draw[EdgeStyle, style={->}](wI) to (v_I);
	\draw[EdgeStyle, style={->}](w1) to (v_D);
	\draw[EdgeStyle, style={->}](w2) to (v_D);
	\draw[EdgeStyle, style={->}](w3) to (v_P);
	\draw[EdgeStyle, style={->}](w4) to (v_P);
	\draw[EdgeStyle, style={->}](w5) to (v_O);
	\draw[EdgeStyle, style={->}](wK) to (v_K);
	\end{tikzpicture}
	\caption{Markov ordering graph.}
	\label{fig:bathtub full markov ordering}
\end{subfigure}%
\begin{subfigure}[b]{0.5\textwidth}
	\centering
	\begin{tikzpicture}[scale=0.75,every node/.style={transform shape}]
	\GraphInit[vstyle=Normal]
	\SetGraphUnit{1}
	\SetVertexMath
	\Vertex{v_I}
	\EA[unit=1.2](v_I){v_D}
	\EA[unit=1.2](v_D){v_P}
	\NO[unit=1.2](v_P){v_O}
	\NO[unit=1.2](v_O){v_K}
	\draw[EdgeStyle, style={->}](v_I) to (v_D);
	\draw[EdgeStyle, style={->}](v_D) to (v_P);
	\draw[EdgeStyle, style={->}](v_P) to (v_O);
	\draw[EdgeStyle, style={->}](v_O) to (v_D);
	\draw[EdgeStyle, style={->}](v_K) to (v_O);
	\draw[EdgeStyle, style={->,loop above}](v_D) to (v_D);
	\end{tikzpicture}
	\caption{Graph of the SCM.}
	\label{fig:bathtub SCM}
\end{subfigure}
\caption{The Markov ordering graph for the equilibrium equations of the filling bathtub system, obtained by applying Definition \ref{def:declustering} to the causal ordering graph in Figure \ref{fig:bathtub full causal ordering} is given in Figure \ref{fig:bathtub full markov ordering}. The d-separations in the Markov ordering graph imply conditional independences between corresponding endogenous variables. Most of these conditional independences cannot be read off from the graph for the SCM of the bathtub system in Figure \ref{fig:bathtub SCM}, except for $X_{v_I}\indep X_{v_K}$.}
\label{fig:scm compared with mo}
\end{figure}
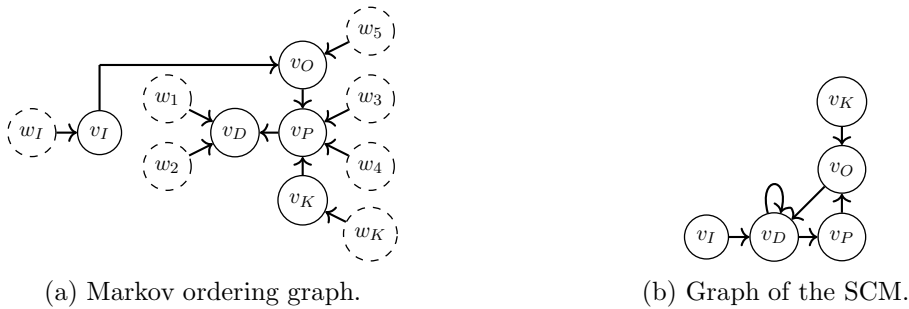

\paragraph{Encoded conditional independences:}

Since the assumption of unique solvability with respect to the causal ordering graph holds for this particular example, we can read off conditional independences between endogenous variables from the Markov ordering graph. More precisely, the d-separations in $\mathrm{MO}(\BG)$ between vertices in $V\setminus W$ imply conditional independences between the corresponding endogenous variables. For example:
\begin{align*}
v_K\dsep{\mathrm{MO}(\BG)} v_O &\;\implies\; X_{v_K}\indep X_{v_O}, \\
v_K\dsep{\mathrm{MO}(\BG)} v_D \given v_P &\;\implies\; X_{v_K}\indep X_{v_D} \given X_{v_P}, \\
v_I\dsep{\mathrm{MO}(\BG)} v_P \given v_O &\;\implies\; X_{v_I}\indep X_{v_P}\given X_{v_O}, \\
v_O\dsep{\mathrm{MO}(\BG)} v_D \given v_P &\;\implies\; X_{v_O}\indep X_{v_D} \given X_{v_P}.
\end{align*}
For $g>0$, every solution to the system of constraints has the same distribution, and this distribution is d-faithful to the Markov ordering graph. When $g=0$, the system of constraints only has a solution if $U_{w_I} = 0$ almost surely; in that case the corresponding distribution is not d-faithful w.r.t.\ the Markov ordering graph in Figure~\ref{fig:bathtub full markov ordering}.

\paragraph{Comparison to SCM representation:} 

The (random) differential equations that describe the system of a bathtub can be \emph{equilibrated to an SCM} that has a self-cycle. \citet{Bongers2020} show that the model has the following structural equations:
\begin{align*}
X_{v_K} &= U_{w_K}, \\
X_{v_I} &= U_{w_I}, \\
X_{v_P} &= g U_{w_3} X_{v_D}, \\
X_{v_O} &= U_{w_5} X_{v_K} X_{v_P}, \\
X_{v_D} &= X_{v_D} + U_{w_1} (X_{v_I} - X_{v_O}).
\end{align*}

The graph of this SCM is depicted in Figure~\ref{fig:bathtub SCM}. Because the SCM is uniquely solvable w.r.t.\ the strongly connected components $\{v_I\}, \{v_D, v_P, v_O\}$ and $\{v_K\}$, the $\sigma$-separations in this graph imply conditional independences \citep[Theorem 6.3 in][]{Bongers2020}. Most of the conditional independences implied by the Markov ordering graph cannot be read off from the graph of this SCM in Figure \ref{fig:bathtub SCM} via the $\sigma$-separation criterion, except for $X_{v_I}\indep X_{v_K}$. Clearly the distribution of a solution to the system of constraints is not faithful to the graph of the SCM and causal ordering on the equilibrium equations provides a stronger Markov property than equilibration to an SCM.

An important difference between SCMs and systems of constraints is that while the former require a particular one-to-one correspondence between endogenous variables and structural equations, the latter do not require a similar correspondence between endogenous variables and constraints. Interestingly, in the case of the bathtub model, a one-to-one correspondence between variables and constraints is obtained automatically by the causal ordering algorithm. In general, the bipartite graph of a set of structural equations is self-contained and perfect matchings connect each variable to an equation. If the SCM is acyclic then the associated bipartite graph has a unique perfect matching that retrieves the correspondence between variables and equations in the SCM. We further discuss applications of the technique of causal ordering to structural equations in Section \ref{sec:discussion:relation to other causal models}.

\subsection{Generalized directed global Markov property}
\label{sec:generalized directed global markov property}

For systems of constraints with no over- or incomplete parts, the associated directed graph that is constructed in the causal ordering algorithm via perfect matchings also yields a Markov property. Theorem \ref{theo:generalized markov property} below shows that for systems that are uniquely solvable with respect to the causal ordering graph, the $\sigma$-separations between variable vertices in the directed graph $\G(\BG,\M)_{\mathrm{mar}(F)}$ imply conditional independences between the corresponding solution components.

\begin{restatable}{theorem}{gdgmp}
\label{theo:generalized markov property}
Let $\B{X}^*=\B{g}(\B{X}_W)$ be a solution of a system of constraints $\tuple{\B{\X}, \B{X}_W, \B{\Phi}, \BG}$, where the subgraph of $\BG=\tuple{V,F,E}$ induced by $(V\cup F)\setminus W$ has a perfect matching $\M$. If for each strongly connected component $S$ in $\G(\BG,\M)$ with $S\cap W=\emptyset$, the system $\mathcal{M}$ is uniquely solvable w.r.t.\ $S_V=(S\cup \M(S))\cap V$ and $S_F=(S\cup \M(S))\cap F$ then the pair $(\G(\BG,\M)_{\mathrm{mar}(F)}, \P_{\B{X}^*})$ satisfies the generalized directed global Markov property (Definition~\ref{def:markov property}).
\end{restatable}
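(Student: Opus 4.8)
The plan is to reduce the statement to the generalized directed global Markov property for (possibly cyclic) structural causal models that are uniquely solvable with respect to their strongly connected components, which is known to hold at this level of generality \citep{Forre2017}; see also \citep[Theorem 6.3 in][]{Bongers2020}. Accordingly, I would build an SCM out of the system of constraints and the matching $\M$, relate its graph to $\G(\BG,\M)_{\mathrm{mar}(F)}$, and transport the known Markov property back.

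First I would rephrase the hypothesis. By Theorem~\ref{thm:equivalence}, the sets $C:=S\cup\M(S)$ obtained from the strongly connected components $S$ of $\G(\BG,\M)$ are exactly the clusters of the causal ordering graph $\mathrm{CO}(\BG)$, and the assumption of the theorem says precisely that $\mathcal{M}$ is uniquely solvable with respect to $C_V:=C\cap V$ and $C_F:=C\cap F$ for every cluster $C$ with $C\cap W=\emptyset$. Unique solvability supplies, for each such cluster, a measurable map $\B{g}_{C_V}$ that writes $\B{X}^*_{C_V}$ as a function of $\B{X}^*_{\mathrm{In}(C)}$, where $\mathrm{In}(C):=V(C_F)\setminus C_V$ is the set of variables feeding into the cluster; by Definition~\ref{def:directed cluster graph} these feeder variables sit in strictly earlier clusters or in $W$, so $\mathrm{CO}(\BG)$ is acyclic on clusters. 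Composing the maps $\B{g}_{C_V}$ along a topological order of the clusters then exhibits $\B{X}^*$ as \emph{the} (unique, given $\B{X}_W$) simultaneous solution of the structural equations $X_v=\bigl(\B{g}_{C_V}(X_{\mathrm{In}(C)})\bigr)_v$ for $v\in C_V$, with $C$ ranging over the clusters.

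Next I would promote these equations to an SCM $\tilde{\mathcal{M}}$ whose graph matches $\G(\BG,\M)_{\mathrm{mar}(F)}$. The point that requires care is that a single constraint $\phi_{\M(v)}=c_{\M(v)}$ need not be solvable for the single variable $X_v$, so in general one cannot read a structural equation directly off the matched constraint; instead one keeps the cluster-solved maps $\B{g}_{C_V}$ but lets the structural equation for $v\in C_V$ formally depend on all of $(C_V\cup\mathrm{In}(C))\setminus\{v\}$, so that $\mathcal{G}(\tilde{\mathcal{M}})$ has an arc $u\to v$ exactly when $u\in V(\M(v))\setminus\{v\}$, i.e.\ exactly the arcs of $\G(\BG,\M)_{\mathrm{mar}(F)}$, possibly together with extra arcs inside clusters. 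Those extra within-cluster dependencies are vacuous, so $\B{X}^*$ still solves $\tilde{\mathcal{M}}$ and $\tilde{\mathcal{M}}$ stays uniquely solvable with respect to each of its strongly connected components; and using the orientations induced by $\M$ — an arc $v'\to v$ of $\G(\BG,\M)_{\mathrm{mar}(F)}$ arises from a path $v'\to\M(v)\to v$, with $V(\M(v))\setminus C_V\subseteq\mathrm{In}(C)$ and $V(\M(v))\cap C_V\subseteq C_V$ — one checks that the strongly connected components among the variable vertices in $\G(\BG,\M)_{\mathrm{mar}(F)}$ coincide with those in $\mathcal{G}(\tilde{\mathcal{M}})$. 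Since adding arcs inside a strongly connected component cannot destroy a $\sigma$-separation, every $\sigma$-separation between variable vertices in $\G(\BG,\M)_{\mathrm{mar}(F)}$ is also one in $\mathcal{G}(\tilde{\mathcal{M}})$; applying the SCM Markov property to $\tilde{\mathcal{M}}$ — with the $W$-vertices acting as mutually independent source nodes carrying $\P_{\B{X}_W}$, and with $\tilde{\mathcal{M}}$ and $\G(\BG,\M)_{\mathrm{mar}(F)}$ on the same vertex set $V$ — the asserted conditional independences for $\B{X}^*$ follow.

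The step I expect to be the main obstacle is the compatibility bookkeeping in the third paragraph: checking rigorously that the cluster structure produced by the matching (Definition~\ref{def:orient, cluster, merge}, Theorem~\ref{thm:equivalence}) is compatible with the strongly connected components of $\G(\BG,\M)_{\mathrm{mar}(F)}$ restricted to the variables, and that $\sigma$-separation is indeed monotone under adding within-component arcs. A route that avoids the monotonicity argument is to pass through the acyclification: take $\tilde{\mathcal{M}}$ acyclic with $\mathcal{G}(\tilde{\mathcal{M}})$ a subgraph of $\mathrm{acy}(\G(\BG,\M)_{\mathrm{mar}(F)})$, invoke the identity that $\sigma$-separation in a directed graph equals $d$-separation in its acyclification \citep{Forre2017}, and conclude from the ordinary directed global Markov property for acyclic SCMs (which itself follows from Theorem~\ref{thm:markov property} applied to $\tilde{\mathcal{M}}$ viewed as its own system of constraints); in that route the only nontrivial work is identifying which arcs of the acyclification are present.
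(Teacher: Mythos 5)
Your proposal is correct, and the fallback route you sketch in the final paragraph is essentially the paper's own proof: the paper identifies the strongly connected components $S$ of $\G(\BG,\M)$ with the clusters $S\cup\M(S)$ of $\mathrm{CO}(\BG)$ (Lemma \ref{lemma:sccs are msc}), concludes that $\mathcal{M}$ is uniquely solvable w.r.t.\ $\mathrm{CO}(\BG)$, applies Theorem \ref{thm:markov property} to obtain the $d$-separation Markov property for $\mathrm{MO}(\BG)$, and then uses the purely graphical identity that $\sigma$-separation in $\G(\BG,\M)_{\mathrm{mar}(F)}$ equals $d$-separation in its acyclification, which equals $d$-separation in $D(\mathrm{clust}(\G(\BG,\M)_{\mathrm{mar}(F)}))=\mathrm{MO}(\BG)$ (Lemma \ref{lemma:cluster decluster} and Proposition \ref{prop:acyclification dsep}). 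Your primary route --- building a cyclic SCM $\tilde{\mathcal{M}}$ and importing the generalized directed global Markov property for SCMs uniquely solvable w.r.t.\ their strongly connected components --- is a genuinely different reduction: it buys a shorter argument at the price of a heavier external theorem \citep[Theorem 6.3 of][]{Bongers2020}, whereas the paper needs only the classical DAG Markov property plus the acyclification identity. Two bookkeeping points in your primary route are slightly off, though neither is fatal. First, under Definition \ref{def:scm} the graph of an SCM records only \emph{essential} functional dependence, so the structural equations $X_v=\bigl(\B{g}_{C_V}(\B{X}_{\mathrm{In}(C)})\bigr)_v$ do not actually produce within-cluster arcs and $\mathcal{G}(\tilde{\mathcal{M}})$ is acyclic; this silently collapses your primary route into the fallback one. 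Second, the component $\bigl(\B{g}_{C_V}\bigr)_v$ may essentially depend on all of $\mathrm{In}(C)$ rather than only on $V(\M(v))\setminus C_V$, so the surplus arcs are not only ``inside clusters'': they can also run from $u\in\mathrm{In}(C)\setminus V(\M(v))$ into $v$. These are harmless --- any such $u$ already has an arc into some $v'\in\scc{\G(\BG,\M)_{\mathrm{mar}(F)}}{v}$, so the arc $u\to v$ is already present in the acyclification and neither the acyclification nor the $\sigma$-separations change --- but your stated justification (monotonicity of $\sigma$-separation under adding within-component arcs) does not cover them, so this case needs to be added to the compatibility check you flag as the main obstacle.
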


\begin{example}
Consider a system of constraints $\mathcal{M}=\tuple{\B{\X}, \B{X}_W, \B{\Phi}, \BG}$ with $W=\{w_1,\ldots,w_6\}$, $V\setminus W=\{v_1,\ldots,v_5\}$, $F=\{f_1,\ldots,f_5\}$, and $\BG=\tuple{V,F,E}$ as in Figure \ref{fig:bipartite exogenous}. Suppose that $\B{\X}=\R^{11}$ and $\B{\Phi}$ consists of constraints:
\begin{alignat*}{3}
\Phi_{f_1}:&\qquad&\,\, X_{v_1} - X_{w_1} &= 0,\\
\Phi_{f_2}:&\qquad&\,\, X_{v_2} - X_{v_1} + X_{v_3} + X_{w_2} - X_{w_3} &=0,\\
\Phi_{f_3}:&\qquad&\,\, X_{w_4} - X_{v_3} + X_{v_4} &=0,\\
\Phi_{f_4}:&\qquad&\,\, X_{w_5} + X_{v_2} - X_{v_4} &=0,\\
\Phi_{f_5}:&\qquad&\,\, X_{w_6} - X_{v_4} + X_{v_5} &=0.
\end{alignat*}
It is easy to check that this linear system of equations can be uniquely solved in the order prescribed by the causal ordering graph $\mathrm{CO}(\BG)$ in Figure \ref{fig:cog exogenous}. Therefore, according to Theorem \ref{thm:markov property} the d-separations among endogenous variables in the corresponding Markov ordering graph $\mathrm{MO}(\BG)$ imply conditional independences between the corresponding endogenous variables. It follows that d-separations in the  Markov ordering graph $\mathrm{MO}(\BG)_{\mathrm{mar}(W)}$ for the endogenous variables in Figure \ref{fig:markov ordering graph} imply conditional independences between the corresponding variables. For example, we see that $v_1$ and $v_5$ are d-separated by $v_4$ and deduce that for a solution $\B{X}^*$ to the system of constraints it holds that $X^*_{v_1} \indep X^*_{v_5} \given X^*_{v_4}$. One may note that d-separations in $\mathrm{MO}(\BG)_{\mathrm{mar}(W)}$ coincide with $\sigma$-separations in both associated directed graphs $\G(\BG,\M_1)_{\mathrm{mar}(F\cup W)}$ and $\G(\BG,\M_2)_{\mathrm{mar}(F\cup W)}$, which are depicted in Figures \ref{fig:cyclic1} and \ref{fig:cyclic2} respectively. It can be seen from the proof of Theorem \ref{theo:generalized markov property} in Appendix~\ref{app:acyclification proof} that this result holds in general. It follows from Theorem \ref{theo:generalized markov property} that the $\sigma$-separations in $\G(\BG,\M_1)_{\mathrm{mar}(F\cup W)}$ and $\G(\BG,\M_2)_{\mathrm{mar}(F\cup W)}$ imply conditional independences between the corresponding variables. For example, we see that $v_1$ and $v_5$ are $\sigma$-separated by $v_4$ in both graphs, and hence $X^*_{v_1} \indep X^*_{v_5} \given X^*_{v_4}$ for a solution $\B{X}^*$.
\end{example}

\begin{figure}[ht]
\begin{subfigure}[b]{.5\textwidth}
	\centering
	\begin{tikzpicture}[scale=0.75,every node/.style={transform shape}]
	\GraphInit[vstyle=Normal]
	\Vertex[L=$v_1$, x=0.0, y=0.0] {v1}
	\Vertex[L=$w_1$, x=0.0, y=1.2, style=dashed] {w1}
	\Vertex[L=$v_5$, x=0.0, y=-1.2] {v5}
	\Vertex[L=$v_2$, x=1.2, y=1.2] {v2}
	\Vertex[L=$v_3$, x=1.2, y=0.0] {v3}
	\Vertex[L=$v_4$, x=1.2, y=-1.2] {v4}
	\Vertex[L=$w_2$, x=2.7, y=1.5, style=dashed] {w2}
	\Vertex[L=$w_3$, x=2.7, y=0.5, style=dashed] {w3}
	\Vertex[L=$w_4$, x=2.7, y=-0.5, style=dashed] {w4}
	\Vertex[L=$w_5$, x=2.7, y=-1.5, style=dashed] {w5}
	\Vertex[L=$w_5$, x=-1.2, y=-1.2, style=dashed] {w6}
	\tikzset{EdgeStyle/.style = {->, thick}}
	\draw[EdgeStyle] (w1) to (v1);
	\draw[EdgeStyle] (w2) to (v2);
	\draw[EdgeStyle] (w3) to (v3);
	\draw[EdgeStyle] (w4) to (v4);
	
	\draw[EdgeStyle] (w5) to (v2);
	\draw[EdgeStyle] (w5) to (v3);
	\draw[EdgeStyle] (w5) to (v4);
	\draw[EdgeStyle] (w6) to (v5);
	
	\draw[EdgeStyle] (v1) to (v2);
	\draw[EdgeStyle] (v1) to (v3);
	\draw[EdgeStyle] (v1) to (v4);
	\draw[EdgeStyle] (v4) to (v5);
	
	\draw[EdgeStyle] (w2) to (v3);
	\draw[EdgeStyle] (w2) to (v4);
	
	\draw[EdgeStyle] (w3) to (v2);
	\draw[EdgeStyle] (w3) to (v4);

	\draw[EdgeStyle] (w4) to (v2);
	\draw[EdgeStyle] (w4) to (v3);
	\end{tikzpicture}
	\caption{$\mathrm{MO}(\BG)$.}
	\label{fig:markov ordering graph augmented}
\end{subfigure}%
\begin{subfigure}[b]{.5\textwidth}
	\centering
	\begin{tikzpicture}[scale=0.75,every node/.style={transform shape}]
	\GraphInit[vstyle=Normal]
	\SetGraphUnit{1.0}
	\SetVertexMath
	\Vertex{v_1}
	\EA[unit=1.2](v_1){v_2}
	\NOEA(v_2){v_3}
	\SOEA(v_3){v_4}
	\NOEA(v_4){v_5}
	\tikzset{EdgeStyle/.style = {->}}
	\Edge(v_1)(v_2)
	\tikzset{EdgeStyle/.style = {->, bend left}}
	\Edge(v_1)(v_3)
	\tikzset{EdgeStyle/.style = {->, bend right=30}}
	\Edge(v_1)(v_4)
	\tikzset{EdgeStyle/.style = {->}}
	\Edge(v_4)(v_5)
	\tikzset{EdgeStyle/.style = {<->}}
	\Edge(v_2)(v_4)
	\Edge(v_2)(v_3)
	\Edge(v_3)(v_4)
	\end{tikzpicture}
	\caption{$\mathrm{MO}(\BG)_{\mathrm{mar}(W)}$.}
	\label{fig:markov ordering graph}
\end{subfigure}
\vskip3mm
\begin{subfigure}[b]{.5\textwidth}
	\centering
	\begin{tikzpicture}[scale=0.75,every node/.style={transform shape}]
	\GraphInit[vstyle=Normal]
	\SetGraphUnit{1.0}
	\SetVertexMath
	\Vertex{v_1}
	\EA[unit=1.2](v_1){v_2}
	\NOEA(v_2){v_3}
	\SOEA(v_3){v_4}
	\NOEA(v_4){v_5}
	\tikzset{EdgeStyle/.style = {->}}
	\Edge(v_1)(v_2)
	\tikzset{EdgeStyle/.style = {->}}
	\Edge(v_3)(v_2)
	\tikzset{EdgeStyle/.style = {->}}
	\Edge(v_2)(v_4)
	\tikzset{EdgeStyle/.style = {->}}
	\Edge(v_4)(v_3)
	\tikzset{EdgeStyle/.style = {->}}
	\Edge(v_4)(v_5)
	\end{tikzpicture}
	\caption{$\G(\BG,\M_1)_{\mathrm{mar}(F\cup W)}$.}
	\label{fig:cyclic1}
\end{subfigure}%
\begin{subfigure}[b]{.5\textwidth}
	\centering
	\begin{tikzpicture}[scale=0.75,every node/.style={transform shape}]
	\GraphInit[vstyle=Normal]
	\SetGraphUnit{1.0}
	\SetVertexMath
	\Vertex{v_1}
	\EA[unit=1.2](v_1){v_2}
	\NOEA(v_2){v_3}
	\SOEA(v_3){v_4}
	\NOEA(v_4){v_5}
	\tikzset{EdgeStyle/.style = {->, bend left}}
	\Edge(v_1)(v_3)
	\tikzset{EdgeStyle/.style = {->}}
	\Edge(v_3)(v_4)
	\tikzset{EdgeStyle/.style = {->}}
	\Edge(v_4)(v_2)
	\tikzset{EdgeStyle/.style = {->}}
	\Edge(v_2)(v_3)
	\tikzset{EdgeStyle/.style = {->}}
	\Edge(v_4)(v_5)
	\end{tikzpicture}
	\caption{$\G(\BG,\M_2)_{\mathrm{mar}(F\cup W)}$.}
	\label{fig:cyclic2}
\end{subfigure}
\caption{The Markov ordering graph of the causal ordering graph in Figure \ref{fig:cog exogenous} is given in Figure \ref{fig:markov ordering graph augmented}. Marginalization of the exogenous vertices $W$ results in the directed mixed graph in  Figure \ref{fig:markov ordering graph}. The directed graphs in Figures \ref{fig:cyclic1} and \ref{fig:cyclic2} are obtained by marginalizing out the constraint vertices $F$ and exogenous vertices $W$ from the directed graphs $\G(\BG,\M_1)$ and $\G(\BG,\M_2)$ in Figures \ref{fig:directed exogenous} and \ref{fig:directed exogenous2} respectively. Note that d-separations in the Markov ordering graph correspond to $\sigma$-separations in the associated directed graphs in Figures \ref{fig:directed exogenous} and \ref{fig:directed exogenous2}.}
\label{fig:markov ordering graph and cycles}
\end{figure}

\section{Causal implications of sets of equations}
\label{sec:causal implications for sets of equations}

Nowadays, it is common to relate causation directly to the effects of manipulation \citep{Woodward2003, Pearl2000}. In the context of sets of equations there are many ways to model manipulations on these equations. Assuming that the manipulations correspond to feasible actions in the real world that is modelled by the equations, the effects of manipulations correspond to causal relations. In order to derive causal implications from systems of constraints, we explicitly define two types of manipulation. We consider the notions of both \emph{soft} and \emph{perfect} interventions on sets of equations.\footnote{Our definitions in the context of systems of constraints may deviate from conventional definitions of interventions on SCMs. In an SCM, each variable is associated with a single structural equation. The notion of a perfect intervention on an SCM does not carry over for systems of constraints because there is no imposed one-to-one correspondence between equations and variables.} We prove that the causal ordering graph represents the effects of both \emph{soft interventions on equations} and \emph{perfect interventions on clusters} in the causal ordering graph. We also show that these interventions commute with causal ordering.

\subsection{The effects of soft interventions}
\label{sec:effects soft interventions}

A \emph{soft intervention}, also known as a ``mechanism change'', acts on a constraint. It replaces the targeted constraint by a constraint in which the same variables appear as in the original one. This type of intervention does not change the bipartite graph that represents the structure of the constraints.

\begin{definition}
\label{def:soft intervention}
Let $\mathcal{M}=\tuple{\B{\X}, \B{X}_W, \B{\Phi}, \BG}$ be a system of constraints, $\Phi_f=\tuple{\phi_f,c_f,V(f)} \in \B{\Phi}$ a constraint, $c'_f$ a constant taking value in a measurable space $\B{\Y}$, and $\phi'_f:\B{\X}_{V(f)}\to \B{\Y}$ a measurable function. A \emph{soft intervention $\mathrm{si}(f,\phi'_f, c'_f)$ targeting $\Phi_f$} results in the intervened system $\mathcal{M}_{\mathrm{si}(f,\phi'_f,c'_f)}=\tuple{\B{\X},\B{X}_W,\B{\Phi}_{\mathrm{si}(f, \phi'_f, c'_f)},\BG}$ where $\B{\Phi}_{\mathrm{si}(f, \phi'_f, c'_f)}=\left(\B{\Phi}\setminus\{\Phi_f\}\right)\cup\{\Phi'_f\}$ with $\Phi'_f=\tuple{\phi'_f,c'_f,V(f)}$.
\end{definition}

For systems of constraints that are maximally uniquely solvable w.r.t.\ the causal ordering graph, both before and after a soft intervention, Theorem \ref{thm:soft interventions} shows that such a soft intervention does not have an effect on variables that cannot be reached by a directed path from that constraint in the causal ordering graph, while it may have an effect on other variables.\footnote{Our result generalizes Theorem 6.1 in \citet{Simon1953} for linear self-contained systems of equations. The proof of our theorem is similar.}

\begin{restatable}{theorem}{softinterventions}
\label{thm:soft interventions}
  Let $\mathcal{M}=\tuple{\B{\X}, \B{X}_W, \B{\Phi}, \BG}$ be a system of constraints with coarse decomposition $\mathrm{CD}(\BG)=\tuple{T_I,T_C,T_O}$. Suppose that $\mathcal{M}$ is maximally uniquely solvable w.r.t.\ the causal ordering graph $\mathrm{CO}(\BG)$ and let $\B{X}^*=\B{g}(\B{X}_W)$ be a solution of $\mathcal{M}$. Let $f\in (T_C\cup T_O)\cap F$ and assume that the intervened system $\mathcal{M}_{\mathrm{si}(f, \phi'_f,c'_f)}$ is also maximally uniquely solvable w.r.t.\ $\mathrm{CO}(\BG)$. Let $\B{X}' = \B{h}(\B{X}_W)$ be a solution of $\mathcal{M}_{\mathrm{si}(f, \phi'_f,c'_f)}$. If there is no directed path from $f$ to $v\in (T_C\cup T_O) \cap V$ in $\mathrm{CO}(\BG)$ then $X^*_v=X'_v$ almost surely. On the other hand, if there is a directed path from $f$ to $v$ in $\mathrm{CO}(\BG)$ then $X^*_v$ may have a different distribution than $X'_v$, depending on the details of the model $\mathcal{M}$.
\end{restatable}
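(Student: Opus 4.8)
The plan is to exploit that a soft intervention leaves the bipartite graph $\BG$ unchanged, and hence leaves the coarse decomposition $\mathrm{CD}(\BG)=\tuple{T_I,T_C,T_O}$ and the causal ordering graph $\mathrm{CO}(\BG)=\tuple{\V,\E}$ unchanged as well. One then reconstructs the solution restricted to $(T_C\cup T_O)\cap V$ cluster by cluster along a topological ordering of $\V$, and argues that this reconstruction does not see the intervention for any cluster that $f$ cannot reach.

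First I would fix one topological ordering of the clusters of $\V$, with the exogenous singletons $\{w\}$, $w\in W$, placed first, and set $R\subseteq\V$ to be the set of clusters $C$ such that there is a directed path from $f$ to some (equivalently, by Definition~\ref{def:directed cluster graph}, every) vertex of $C$; explicitly, $R$ consists of $\mathrm{cl}(f)$ together with all clusters reachable from it along directed edges of $\mathrm{CO}(\BG)$. Two structural facts are used repeatedly. First, by Lemma~\ref{lemma:impossible edges} no constraint of a cluster $S\in\V$ with $S\cap T_I=\emptyset$ is adjacent to a variable in $T_I$; hence, writing $S_F=S\cap F$ and $S_V=S\cap V$, the parent variables $V(S_F)\setminus(S_V\cup W)$ of such a cluster all lie in $(T_C\cup T_O)\cap V$, and by the construction of $\mathrm{CO}(\BG)$ in Algorithms~\ref{alg:causal ordering minimal self-contained exo} and~\ref{alg:causal ordering coarse decomposition} they are exactly the sources of the incoming edges of $S$ in $\E$ and sit in clusters strictly earlier in the topological ordering. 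Second, if $C\notin R$ then every ancestor of $C$ in $\mathrm{CO}(\BG)$ is outside $R$ too, since concatenating a directed path from $f$ to an ancestor of $C$ with a directed path from that ancestor to $C$ would put $C$ in $R$; combined with the first fact, this shows that every parent cluster of a $(T_C\cup T_O)$-cluster $C\notin R$ is outside $R$ and is either an exogenous singleton or a $(T_C\cup T_O)$-cluster, hence disjoint from $T_I$.

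The heart of the proof is an induction along the topological ordering establishing that for every cluster $C\in\V$ with $C\cap T_I=\emptyset$ and $C\notin R$ one has $\B{X}^*_{C\cap V}=\B{X}'_{C\cap V}$ $\P_{\B{X}_W}$-a.s. Exogenous singletons are the trivial base case, since $X^*_w=X_w=X'_w$. For a general such cluster $C=S_F\cup S_V$ one first notes $f\notin S_F$ (because $\mathrm{cl}(f)\in R$ while $C\notin R$, so $C\neq\mathrm{cl}(f)$, and $f$ lies only in $\mathrm{cl}(f)$), so the constraints indexed by $S_F$ are literally identical in $\mathcal{M}$ and in $\mathcal{M}_{\mathrm{si}(f,\phi'_f,c'_f)}$; and by the two structural facts all parent clusters of $C$ are outside $R$ and disjoint from $T_I$, so the inductive hypothesis gives $\B{X}^*_{V(S_F)\setminus(S_V\cup W)}=\B{X}'_{V(S_F)\setminus(S_V\cup W)}$ a.s. Unique solvability w.r.t.\ $S_F$ and $S_V$ — available for both systems because both are maximally uniquely solvable w.r.t.\ $\mathrm{CO}(\BG)$ (Definition~\ref{def:solvability overcomplete}) and $C\cap(W\cup T_I)=\emptyset$ — then forces $\B{X}^*_{S_V}$ and $\B{X}'_{S_V}$, which satisfy the same constraints $S_F$ given the same parent values and the same exogenous inputs $\B{X}_{V(S_F)\cap W}$, to be equal a.s. (substitute the random point $\B{X}^*_{V(S_F)\setminus W}$, resp.\ $\B{X}'_{V(S_F)\setminus W}$, into the ``$\P_{\B{X}_W}$-a.s., for all $\B{x}$'' formulation of Definition~\ref{def:unique solvability}). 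This closes the induction. Finally, any $v\in(T_C\cup T_O)\cap V$ with no directed path from $f$ to $v$ lies in a cluster $C$ with $C\cap T_I=\emptyset$ and $C\notin R$ (otherwise such a path would exist), so the induction yields $X^*_v=X'_v$ a.s.

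For the converse assertion — that $X^*_v$ may have a different distribution from $X'_v$ when a directed path from $f$ to $v$ does exist — it suffices to exhibit a single model: in the bathtub system of Example~\ref{ex:bathtub intro} a soft intervention on $f_P$ that changes $g$ generically changes the distribution of $X_{v_D}$, and already replacing a constraint of the form $X_v-U_w=0$ by $X_v-2U_w=0$ changes the distribution of $X_v$ unless $U_w=0$ almost surely. I expect the only real work to be the bookkeeping behind the first structural fact, namely reading off from the two causal ordering algorithms together with Lemma~\ref{lemma:impossible edges} that the parent variables of a $(T_C\cup T_O)$-cluster are precisely the sources of its incoming edges in $\mathrm{CO}(\BG)$ and occur strictly earlier in every topological ordering; this is exactly what licenses both the induction and the reduction from ``no directed path from $f$ to $v$'' to ``$\mathrm{cl}(v)\notin R$ with all its parent clusters outside $R$''. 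The measure-theoretic substitution step is routine.
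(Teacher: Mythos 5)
Your proposal is correct and follows essentially the same route as the paper's proof: both exploit that a soft intervention leaves $\BG$ and hence $\mathrm{CO}(\BG)$ unchanged, invoke Lemma~\ref{lemma:impossible edges} to keep $T_I$ out of the way, and solve cluster by cluster along a topological ordering so that clusters not reachable from $f$ never see the altered constraint (the paper phrases this by choosing an ordering in which $\mathrm{cl}(v)$ precedes $\mathrm{cl}(f)$, whereas you run an explicit induction over the complement of the reachable set $R$ — the same argument, packaged slightly more carefully). Your treatment of the ``may differ'' direction by pointing to the bathtub example matches the spirit of the paper's (more abstract) justification.
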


Example \ref{ex:bathtub soft interventions} shows that the presence of a directed path in the causal ordering graph for the equilibrium equations of the bathtub system implies a causal effect for almost all parameter values. This illustrates that non-effects and \emph{generic} effects can be read off from the causal ordering graph.\footnote{If a directed path from an equation vertex $f$ to a variable vertex $v$ implies that an intervention on $f$ changes the distribution of the solution component $X_v$ for almost all values (w.r.t.\ Lebesgue measure) of the parameters, then we say that there is a \emph{generic} causal effect of $f$ on $v$.}

\begin{example}
\label{ex:bathtub soft interventions}
Recall the system of constraints for the filling bathtub in Section \ref{sec:application to bathtub}. Think of an experiment where the gravitational constant $g$ is changed so that it takes on a different value $g'$ without altering the other equations that describe the bathtub system. Such an experiment is, at least in theory, feasible. For example, it can be accomplished by accelerating the bathtub system or by moving the bathtub system to another planet. We can model the effect on the equilibrium distribution in such an experiment by a soft intervention targeting $f_P$ that replaces the constraint $\Phi_{f_P}$ by
\begin{align}
\tuple{X_{V(f_P)}\mapsto U_{w_1}(g'U_{w_2} X_{v_D} - X_{v_P}),\,\, 0,\,\, V(f_P) = \{v_D,v_P,w_1,w_2\}}.
\end{align}
Which variables are and which are not affected by this soft intervention? We can read off the effects of this soft intervention from the causal ordering graph in Figure \ref{fig:bathtub full causal ordering}. There is no directed path from $f_P$ to $v_K, v_I, v_P$ or $v_O$. Therefore, perhaps surprisingly, Theorem \ref{thm:soft interventions} tells us that the soft intervention targeting $f_P$ neither has an effect on the pressure $X_{v_P}$ at equilibrium nor on the outflow rate $X_{v_O}$ at equilibrium. Since there is a directed path from $f_P$ to $v_D$, the water level $X_{v_D}$ at equilibrium may be different after a soft intervention on $f_P$. If the gravitational constant $g$ is equal to zero, then the system of constraints for the bathtub is not maximally uniquely solvable w.r.t.\ the causal ordering graph (except if $U_{w_I} = 0$ almost surely). For all other values of the parameter $g$ the generic effects and non-effects of soft interventions on other constraints of the bathtub system can be read off from the causal ordering graph and are presented in Table \ref{tab:soft interventions cog}.
\end{example}

\begin{table}
	\caption{The effects of soft interventions on constraints in the causal ordering graph for the bathtub system in Figure \ref{fig:bathtub full causal ordering}.}
	\label{tab:soft interventions cog}
	\begin{center}
		\begin{tabular}{l l l}
			\toprule
			target & generic effect & non-effect \\
			\midrule
			$f_K$ & $X_{v_K}$, $X_{v_P}$, $X_{v_D}$ & $X_{v_I}$, $X_{v_O}$ \\
			$f_I$ & $X_{v_I}$, $X_{v_P}$, $X_{v_O}$, $X_{v_D}$ & $X_{v_K}$ \\
			$f_P$ & $X_{v_D}$ & $X_{v_K}$, $X_{v_I}$, $X_{v_P}$, $X_{v_O}$ \\
			$f_O$ & $X_{v_P}$, $X_{v_D}$ & $X_{v_K}$, $X_{v_I}$, $X_{v_O}$ \\
			$f_D$ & $X_{v_P}$, $X_{v_O}$, $X_{v_D}$ & $X_{v_K}$, $X_{v_I}$ \\
			\bottomrule
		\end{tabular}
	\end{center}
\end{table}

\subsection{The effects of perfect interventions}
\label{sec:effects perfect interventions}

A \emph{perfect intervention} acts on a variable and a constraint. Definition \ref{def:perfect intervention} shows that it replaces the targeted constraint by a constraint that sets the targeted variable equal to a constant. Note that this definition of perfect interventions is very general and allows interventions for which the intervened system of constraints is not maximally uniquely solvable w.r.t.\ the causal ordering graph. In this work, we will only consider the subset of perfect interventions that target clusters in the causal ordering graph, for which the intervened system is also maximally uniquely solvable w.r.t.\ the causal ordering graph. We consider an analysis of necessary conditions on interventions for the intervened system to be consistent beyond the scope of this work.

\begin{definition}
\label{def:perfect intervention}
Let $\mathcal{M}=\tuple{\B{\X}, \B{X}_W, \B{\Phi}, \BG=\tuple{V,F,E} }$ be a system of constraints and let $\xi_v\in\X_v$. A \emph{perfect intervention $\mathrm{do}(f,v,\xi_v)$ targeting the variable $v\in V\setminus W$ and the constraint $f\in F$} results in the intervened system $\mathcal{M}_{\mathrm{do}(f,v,\xi_v)} = \tuple{\B{\X}, \B{X}_W, \B{\Phi}_{\mathrm{do}(f,v,\xi_v)}, \BG_{\mathrm{do}(f,v)}}$ where
\begin{enumerate}
	\item $\B{\Phi}_{\mathrm{do}(f,v,\xi_v)} = \left(\B{\Phi}\setminus \Phi_f\right) \cup \{\Phi'_f\}$ with $\Phi'_f=\tuple{X_v \mapsto X_v, \xi_v, \{v\}}$,
	\item $\BG_{\mathrm{do}(f,v)}=\tuple{V,F,E'}$ with $E'=\{(i-j)\in E: i,j \neq f \}\cup \{(v-f)\}$.
\end{enumerate}
\end{definition}

Perfect interventions on a set of variable-constraint pairs $\{(f_1,v_1), \ldots, (f_n,v_n)\}$ in a system of constraints are denoted by $\mathrm{do}(S_F,S_V,\B{\xi}_{S_V})$ where $S_F=\tuple{f_1,\ldots, f_n}$ and $S_V=\tuple{v_1,\ldots,v_n}$ are tuples. For a bipartite graph $\BG$ so that its subgraph induced by $(V\cup F)\setminus W$ is self-contained, Lemma \ref{lemma:self-contained after intervention} shows that the subgraph of the intervened bipartite graph $\BG_{\mathrm{do}(S_F,S_V)}$ induced by $(V\cup F)\setminus W$ is also self-contained when $S=(S_F\cup S_V)$ is a cluster in $\mathrm{CO}(\BG)$ with $S\cap W=\emptyset$.

\begin{restatable}{lemma}{selfcontainedafterintervention}
\label{lemma:self-contained after intervention}
Let $\BG=\tuple{V,F,E}$ be a bipartite graph and $W\subseteq V$, so that the subgraph of $\BG$ induced by $(V\cup F)\setminus W$ is self-contained. Consider an intervention $\mathrm{do}(S_V,S_F)$ on a cluster $S= S_F\cup S_V$ with $S\cap W=\emptyset$ in the causal ordering graph $\mathrm{CO}(\BG)$. The subgraph of $\BG_{\mathrm{do}(S_F,S_V)}$ induced by $(V\cup F)\setminus W$ is self-contained.
\end{restatable}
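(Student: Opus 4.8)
The plan is to exhibit a perfect matching of the intervened subgraph and then invoke the Hall-type characterisation of self-containedness. Write $\BG''$ for the subgraph of $\BG_{\mathrm{do}(S_F,S_V)}$ induced by $(V\cup F)\setminus W$, and $\BG'$ for the subgraph of $\BG$ induced by $(V\cup F)\setminus W$, which is self-contained by hypothesis. A perfect intervention alters only edges, not vertices, so $|F|=|V\setminus W|$ continues to hold for $\BG''$; hence by Corollary~\ref{cor:perfect matching self contained} it is enough to show that $\BG''$ has a perfect matching.

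First I would extract a perfect matching of $\BG'$ that respects the clustering of $\mathrm{CO}(\BG)$. By construction of Algorithm~\ref{alg:causal ordering minimal self-contained exo}, every non-exogenous cluster of $\mathrm{CO}(\BG)$ has the form $S_F\cup S_V$ with $S_F$ a minimal self-contained set of the subgraph reached at some iteration of the loop and $S_V$ its set of adjacent variable vertices in that subgraph; in particular $|S_F|=|S_V|$, the edges between $S_F$ and $S_V$ are the same in that subgraph and in $\BG'$, and the subgraph of $\BG'$ induced by $S_F\cup S_V$ is itself self-contained and therefore (Corollary~\ref{cor:perfect matching self contained}) carries a perfect matching $M_S$. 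Since the non-exogenous clusters partition $F\cup(V\setminus W)$, the union $M:=\bigcup_S M_S$ is a perfect matching of $\BG'$, and it satisfies $M(S_F)=S_V$ for the particular cluster $S=S_F\cup S_V$ being intervened on (which is non-exogenous precisely because $S\cap W=\emptyset$). Alternatively, this compatibility is immediate from Theorem~\ref{thm:equivalence}: in the perfect-matching formulation each cluster equals $S'\cup\M(S')$, a vertex set closed under $\M$.

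Next I would modify $M$ on the intervened cluster. Let $\pi:S_F\to S_V$ be the bijection encoded by the tuples in $\mathrm{do}(S_F,S_V)$ (a bijection since $|S_F|=|S_V|$), and set
\[
  M' \;:=\; \bigl(M\setminus\{\,(f - M(f)) : f\in S_F\,\}\bigr)\;\cup\;\{\,(f - \pi(f)) : f\in S_F\,\}.
\]
Because $M(S_F)=S_V$, the retained edges $M\setminus\{(f-M(f)):f\in S_F\}$ form a perfect matching between $F\setminus S_F$ and $(V\setminus W)\setminus S_V$, and each of them survives in $\BG''$ since the intervention modifies only the constraints in $S_F$. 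The new edges $(f-\pi(f))$ lie in $\BG''$ by the definition of $\BG_{\mathrm{do}(f,\pi(f))}$, which inserts exactly the edge $(\pi(f)-f)$, and as $\pi$ is a bijection they match $S_F$ onto $S_V$. Hence $M'$ is a perfect matching of $\BG''$, and since $\BG''$ has equal-sized sides, Corollary~\ref{cor:perfect matching self contained} yields that $\BG''$ is self-contained.

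The only genuinely non-routine step is the compatibility claim $M(S_F)=S_V$; the rest is bookkeeping. This is exactly where the structure of causal ordering is used — each cluster is a self-contained block over its own variable vertices — so I would be careful to justify it either via the blockwise perfect matchings $M_S$ of Algorithm~\ref{alg:causal ordering minimal self-contained exo} or via the $S'\cup\M(S')$ description coming from Theorem~\ref{thm:equivalence}.
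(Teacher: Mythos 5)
Your proposal is correct and follows essentially the same route as the paper: obtain a perfect matching $\M$ of the induced subgraph with $\M(S_F)=S_V$ (via the perfect-matching formulation of causal ordering) and conclude by Corollary~\ref{cor:perfect matching self contained} that the intervened subgraph, which still has a perfect matching, is self-contained. If anything, your explicit replacement of the within-cluster matching edges by $\{(f-\pi(f)):f\in S_F\}$ is a touch more careful than the paper's one-line assertion that $\M$ itself remains a perfect matching after the intervention, since the intervention deletes all edges at each $f\in S_F$ other than the newly designated one.
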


Theorem \ref{theo:effects of perfect interventions on clusters} shows how the causal ordering graph can be used to read off the (generic) effects and non-effects of \emph{perfect interventions on clusters} in the complete and overcomplete sets of the associated bipartite graph under the assumption of unique solvability with respect to the complete and overcomplete sets in the causal ordering graph.

\begin{restatable}{theorem}{perfectinterventions}
\label{theo:effects of perfect interventions on clusters}
Let $\mathcal{M}=\tuple{\B{\X}, \B{X}_W, \B{\Phi}, \BG=\tuple{V,F,E}}$ with coarse decomposition $\mathrm{CD}(\BG)=\tuple{T_I,T_C,T_O}$. Assume that $\mathcal{M}$ is maximally uniquely solvable w.r.t.\ $\mathrm{CO}(\BG)=\tuple{\V,\E}$ and let $\B{X}^*$ be a solution of $\mathcal{M}$. Let $S_F\subseteq (T_C\cup T_O)\cap F$ and $S_V\subseteq (T_C\cup T_O)\cap (V\setminus W)$ be such that $(S_F\cup S_V)\in\V$. Consider the intervened system $\mathcal{M}_{\mathrm{do}(S_F,S_V,\B{\xi}_{S_V})}$ with coarse decomposition $\mathrm{CD}(\BG_{\mathrm{do}(S_F,S_V)})=\tuple{T'_I,T'_C,T'_O}$. Let $\B{X}'$ be a solution of $\mathcal{M}_{\mathrm{do}(S_F,S_V,\B{\xi}_{S_V})}$. If there is no directed path from any $x\in S_V$ to $v\in (T_C\cup T_O)\cap V$ in $\mathrm{CO}(\BG)$ then $X^*_v=X'_v$ almost surely. On the other hand, if there is $x\in S_V$ such that there is a directed path from $x$ to $v$ in $\mathrm{CO}(\BG)$ then $X^*_v$ may have a different distribution than $X'_v$.
\end{restatable}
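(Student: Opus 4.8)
The plan is to prove the two halves of the statement separately: the \emph{non-effect} part (no directed path from any $x\in S_V$ to $v$ in $\mathrm{CO}(\BG)$ forces $X^*_v=X'_v$ a.s.) and the \emph{possible-effect} part (a directed path may change the distribution). For the non-effect part the idea is to isolate the sub-collection of clusters of $\mathrm{CO}(\BG)$ that are \emph{not} reachable from $S_V$ by a directed path, and to show that $\B{X}^*$ and $\B{X}'$ solve this sub-collection identically, cluster by cluster, along a topological ordering. To this end set $A := \{v\in (T_C\cup T_O)\cap V : \text{some } x\in S_V \text{ has a directed path to } v \text{ in } \mathrm{CO}(\BG)\}$ and $U := ((T_C\cup T_O)\cap V)\setminus A$; the goal is to show $X^*_v=X'_v$ a.s.\ for every $v\in U$.

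First I would record three structural facts about $\mathrm{CO}(\BG)=\tuple{\V,\E}$, which is a directed \emph{acyclic} cluster graph, so that topological orderings of $\V$ exist. (i) For a cluster $S'\in\V$ with $S'\subseteq T_C\cup T_O$ one has either $S'\cap V\subseteq A$ or $S'\cap V\subseteq U$, since by Definition~\ref{def:directed cluster graph} a directed path to one vertex of $S'$ is a directed path to every vertex of $S'$. (ii) If moreover $S'\cap V\subseteq U$ and $(u\to S')\in\E$, then $u\notin S_V$ (otherwise $S'$ would be reachable from $S_V$) and $\mathrm{cl}(u)$ is again an unaffected cluster strictly earlier than $S'$ in any topological ordering; furthermore, by Lemma~\ref{lemma:impossible edges} such a source vertex cannot lie in $T_I$, so $u\in(T_C\cup T_O)\cap V$ or $u\in W$, and maximal unique solvability applies to $S'$. (iii) The constraints in $S'\cap F$ are untouched by $\mathrm{do}(S_F,S_V,\B{\xi}_{S_V})$, because the only modified constraints lie in $S_F$, and $S_F$ is contained in the cluster $S=S_F\cup S_V\in\V$, which is disjoint from the distinct cluster $S'$. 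With these in hand, enumerate the unaffected clusters contained in $T_C\cup T_O$ as $S'_1,\dots,S'_m$ along a topological ordering of $\mathrm{CO}(\BG)$, so that by (i) and (ii) every edge into $S'_j$ originates in $W$ or in $\bigcup_{i<j}S'_i$. Since $\mathcal{M}$ is maximally uniquely solvable (Definition~\ref{def:solvability overcomplete}), for each $j$ there is a measurable map $\B{g}_{S'_j\cap V}$ such that, $\P_{\B{X}_W}$-a.s., any assignment satisfying the constraints $S'_j\cap F$ has $\B{X}_{S'_j\cap V}$ equal to $\B{g}_{S'_j\cap V}$ evaluated at the remaining variables of $V(S'_j\cap F)$. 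Both $\B{X}^*$ and $\B{X}'$ satisfy all these constraints --- $\B{X}^*$ because it solves $\mathcal{M}$, and $\B{X}'$ because it solves $\mathcal{M}_{\mathrm{do}(S_F,S_V,\B{\xi}_{S_V})}$ and by (iii) these particular constraints coincide in the two systems --- and both restrict to $\B{X}_W$ on $W$. An induction on $j$ then yields $\B{X}^*_{S'_j\cap V}=\B{X}'_{S'_j\cap V}$ a.s., hence $X^*_v=X'_v$ a.s.\ for every $v\in U$. Note this transports only the solving maps of $\mathcal{M}$ and never uses unique solvability of the intervened system.

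For the possible-effect part it suffices to exhibit a single model. Given $\BG$ and a directed path $x=z_0,z_1,\dots$ in $\mathrm{CO}(\BG)$ that reaches $v$, I would take a linear system of constraints realizing $\BG$ with coefficients chosen generically, so that $\mathcal{M}$ is maximally uniquely solvable w.r.t.\ $\mathrm{CO}(\BG)$ and so that, solving along the topological ordering, each cluster's solved variables genuinely depend (with nonzero coefficient) on every input variable appearing in the cluster's equations. Tracing the chosen path then produces a nonzero sensitivity $\partial X_v/\partial X_x$, so fixing $X_x=\xi_x$ at a value differing from its almost-sure value on a positive-probability event changes the distribution of $X_v$. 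This parallels the argument behind Theorem~6.1 of \citet{Simon1953} and Theorem~\ref{thm:soft interventions}; in a parametrized family the same computation shows the effect is present for Lebesgue-almost-every parameter, i.e.\ it is generic.

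The hard part will be the non-effect direction, and within it facts (i)--(iii): one must verify that the family of unaffected clusters is closed under taking $\mathrm{CO}(\BG)$-parents, that inputs to complete and overcomplete clusters cannot reach back into the incomplete part (this is exactly where Lemma~\ref{lemma:impossible edges} is needed), and that the maps driving the recursion are literally those of $\mathcal{M}$, so that feeding the same exogenous values into the same topological recursion returns the same solution components for $\B{X}^*$ and for $\B{X}'$. Once this scaffolding is in place the induction is routine, and the possible-effect part is a short explicit construction.
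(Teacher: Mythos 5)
Your proof is correct, and its backbone is the same as the paper's: an induction along a topological ordering of the acyclic cluster graph $\mathrm{CO}(\BG)$, using maximal unique solvability to transport solution components cluster by cluster, with Lemma~\ref{lemma:impossible edges} ensuring that no inputs come from $T_I$. The difference lies in the scaffolding of the non-effect half. The paper fixes a topological ordering adapted to $\mathrm{cl}(v)$ so that its predecessors are exactly its ancestors, invokes Proposition~\ref{prop:perfect interventions on clusters commute} to identify $\mathrm{CO}(\BG_{\mathrm{do}(S_F,S_V)})$ with $\mathrm{CO}(\BG)_{\mathrm{do}(S_F,S_V)}$, asserts that maximal unique solvability carries over to the intervened system, and then runs the solving recursion in both systems in parallel. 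You instead isolate the family of unaffected clusters, prove it is closed under taking parents, and note that on those clusters the constraints --- and hence the solving maps of $\mathcal{M}$ itself --- are untouched by the intervention, so only the solvability of the \emph{original} system is ever used. This is a slightly leaner dependency structure (you genuinely do not need the intervened system's solvability, nor the commutation proposition, to conclude $X^*_v=X'_v$), at the price of spelling out the closure facts (i)--(iii) that the paper obtains implicitly from its choice of ordering. For the ``may differ'' half, the paper only observes that the recursion for $X'_v$ now passes through the re-set cluster, which suffices for the weak modal claim in the statement; your explicit generic linear witness proves more than is required but is harmless and consistent with the paper's footnoted notion of generic effects.
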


One way to determine whether a perfect intervention has an effect on a certain variable is to explicitly solve the system of constraints before and after the intervention and check which solution components are altered. In particular, when the distribution of a solution component is different for almost all parameter values, then we say that there is a generic effect. This way, we can establish the generic effects of a perfect intervention without solving the equations by relying on a solvability assumption. Example \ref{ex:bathtub perfect interventions} illustrates this notion of perfect intervention on the system of constraints for the filling bathtub that we first introduced in Example \ref{ex:bathtub intro} and shows how the generic effects and non-effects of perfect interventions on clusters can be read off from the causal ordering graph.

\begin{table}[!htb]
	\caption{Solutions for system of constraints describing the bathtub system in Section \ref{sec:application to bathtub} without interventions (i.e.\ the observed system) and after perfect interventions $\mathrm{do}(f_P,v_D,\xi_D)$, $\mathrm{do}(f_D,v_O,\xi_O)$, and $\mathrm{do}(f_D,v_D,\xi_D)$.}
	\label{tab:bathtub solutions}
	\vspace{-6pt}
	\begin{center}
		\begin{tabular}{l  l  l  l  l }
			\toprule
			& observed & $\mathrm{do}(f_P,v_D,\xi_D)$ & $\mathrm{do}(f_D,v_O,\xi_O)$ & $\mathrm{do}(f_D,v_D,\xi_D)$ \\
			\midrule
			$X^*_{v_K}$ & $U_{w_K}$ & $U_{w_K}$ & $U_{w_K}$ & $U_{w_K}$\\
			$X^*_{v_I}$ & $U_{w_I}$ & $U_{w_I}$ & $U_{w_I}$ & $U_{w_I}$ \\
			$X^*_{v_P}$ & $\frac{U_{w_I}}{(U_{w_4}U_{w_K})}$ & $\frac{U_{w_I}}{(U_{w_4}U_{w_K})}$ & $\frac{\xi_O}{ (U_{w_4}U_{w_K})}$ & $gU_{w_2}\xi_D$ \\
			$X^*_{v_O}$ & $U_{w_I}$ & $U_{w_I}$ & $\xi_O$ & $U_{w_4}U_{w_K}g U_{w_2} \xi_D$ \\
			$X^*_{v_D}$ & $\frac{U_{w_I}}{(U_{w_4}U_{w_K}g U_{w_2})}$ & $\xi_D$ & $\frac{\xi_O}{(U_{w_4}U_{w_K}g U_{w_2})}$ & $\xi_D$\\
			\bottomrule
		\end{tabular}
	\end{center}
	%
	\caption{The effects of perfect interventions on clusters of variables and constraints in the causal ordering graph for the bathtub system in Figure \ref{fig:bathtub full causal ordering} obtained by Theorem \ref{theo:effects of perfect interventions on clusters}. Since $\{f_D,v_D\}$ is not a cluster in the causal ordering graph, the effects of this intervention cannot be read off from the causal ordering graph.}
	\label{tab:perfect interventions cog}
	\vspace{-6pt}
	\begin{center}
		\begin{tabular}{l l l}
			\toprule
			target & generic effect & non-effect \\
			\midrule
			$f_K, v_K$ & $X_{v_K}$, $X_{v_P}$, $X_{v_D}$ & $X_{v_I}$, $X_{v_O}$ \\
			$f_I, v_I$ & $X_{v_I}$, $X_{v_P}$, $X_{v_O}$, $X_{v_D}$ & $X_{v_K}$ \\
			$f_P, v_D$ & $X_{v_D}$ & $X_{v_K}$, $X_{v_I}$, $X_{v_P}$, $X_{v_O}$ \\
			$f_O, v_P$ & $X_{v_P}$, $X_{v_D}$ & $X_{v_K}$, $X_{v_I}$, $X_{v_O}$ \\
			$f_D, v_O$ & $X_{v_P}$, $X_{v_O}$, $X_{v_D}$ & $X_{v_K}$, $X_{v_I}$ \\
			$f_P, f_D, f_O, v_P, v_D, v_O$ & $X_{v_P}$, $X_{v_O}$, $X_{v_D}$ & $X_{v_K}$, $X_{v_I}$\\
			\bottomrule
		\end{tabular}
	\end{center}
\end{table}

\begin{example}
\label{ex:bathtub perfect interventions}
Recall the system of constraints $\mathcal{M}$ for the filling bathtub at equilibrium in Section \ref{sec:application to bathtub} and consider the perfect interventions $\mathrm{do}(f_P,v_D,\xi_D)$, $\mathrm{do}(f_D,v_O,\xi_O)$, and $\mathrm{do}(f_D,v_D,\xi_D)$. These interventions model experiments that can, at least in principle, be conducted in practice:
\begin{enumerate}
	\item The intervention $\mathrm{do}(f_P,v_D,\xi_D)$ replaces the constraint $f_P$ by a constraint that sets the water level $X_{v_D}$ equal to a constant and leaves all other constraints unaffected. This could correspond to an experimental set-up where the constant $g$ in the constraint $\Phi_{f_P}$ is controlled by accelerating and decelerating the bathtub system precisely in such a way that the water level $X_{v_D}$ is forced to take on a constant value $\xi_D$ both in time and across the ensemble of bathtubs. We observe the system once it has reached equilibrium.
	\item The interventions $\mathrm{do}(f_D,v_O,\xi_O)$ and $\mathrm{do}(f_D,v_D,\xi_D)$, may correspond to an experiment where a hose is added to the system that can remove or add water precisely in such a way that either the outflow rate $X_{v_O}$ or the water level $X_{v_D}$ is kept at a constant level both in time and across the ensemble of bathtubs. The system is observed when it has reached equilibrium.
\end{enumerate}
Note that the cluster $\{f_D,v_D\}$ is not a cluster in the causal ordering graph in Figure \ref{fig:bathtub full causal ordering}. However, the system of constraints $\mathcal{M}_{\mathrm{do}(f_D,v_D,\xi_D)}$ is maximally uniquely solvable with respect to the causal ordering graph $\mathrm{CO}(\BG_{\mathrm{do}(f_D,v_D)})$, and therefore the effects of the intervention are well-defined.\footnote{The intervention on $\{f_D,v_D\}$ is interesting because it removes the constraint that the water flowing through the faucet $X_{v_I}$ must be equal to the water flowing through the drain $X_{v_O}$. This can be accomplished by adding a hose to the system through which additional water can flow in and out of the bathtub to ensure that $X_{v_D}$ remains at a constant level. Notice that, in this example, the \emph{total} inflow and \emph{total} outflow of water remain equal, while the inflow \emph{through the faucet} and the outflow \emph{through the drain} may differ.} By explicit calculation we obtain the (unique) solutions in Table \ref{tab:bathtub solutions} for the observed and intervened bathtub systems. By comparing with the solutions in the observed column we read off that the perfect intervention $\mathrm{do}(f_P,v_D,\xi_D)$ does not change the solution for the variables $X_{v_K}, X_{v_I},X_{v_P},X_{v_O}$, but it generically does change the solution for $X_{v_D}$. We further find that $\mathrm{do}(f_D,v_D,\xi_D)$ and $\mathrm{do}(f_D,v_O,\xi_D)$ affect the solution for the variables $X_{v_P},X_{v_O},X_{v_D}$ but not of $X_{v_K}$ and $X_{v_I}$.

The causal ordering graph $\mathrm{CO}(\BG)=\tuple{\V,\E}$ for the bathtub system is given in Figure \ref{fig:bathtub full causal ordering}. It has clusters $\mathcal{V}=\{\{f_K,v_K\}, \{f_I,v_I\}, \{f_P, v_D\}, \{f_O, v_P\}, \{f_D, v_O\} \}$. Under the assumption that the (intervened) system is maximally uniquely solvable w.r.t.\ its causal ordering graph, we can apply Theorem \ref{theo:effects of perfect interventions on clusters} and read off the effects and non-effects of perfect interventions on clusters, which are presented in Table \ref{tab:perfect interventions cog}. This illustrates the fact that we can establish the generic effects and non-effects of the perfect interventions $\mathrm{do}(f_P,v_D,\xi_D)$ and $\mathrm{do}(f_D,v_O,\xi_O)$, which act on clusters in the causal ordering graph, without explicitly solving the system of equations. We will discuss differences between causal implications of the causal ordering graph and the graph of the SCM in Figure \ref{fig:bathtub SCM} in Section \ref{sec:discussion}.
\end{example}

\subsection{Interventions commute with causal ordering}
\label{sec:interventions commute with causal ordering}

Given a system of constraints we can obtain the causal ordering graph after a perfect intervention on one of its clusters in the original causal ordering graph by running the causal ordering algorithm on the bipartite graph in the intervened system of constraints. In this section we will define an operation of ``perfect intervention'' directly on the clusters in a causal ordering graph and show that the causal ordering graph that is obtained after a perfect intervention coincides with the causal ordering graph of the intervened system (i.e.\ perfect interventions on clusters in the causal ordering graph commute with the causal ordering algorithm). Roughly speaking, a perfect intervention on a cluster in a directed cluster graph removes all incoming edges to that cluster and separates all variable vertices and constraint vertices in the targeted cluster into separate clusters in a specified way.

\begin{definition}
	\label{def:perfect intervention on a directed cluster graph}
	Let $\BG=\tuple{V,F,E}$ be a bipartite graph and $W$ a set of exogenous variables. Let $\mathrm{CO}(\BG)=\tuple{\mathcal{V},\mathcal{E}}$ be the corresponding causal ordering graph and consider $S\in\mathcal{V}$ with $S\cap W = \emptyset$. Let $S_F=\tuple{f_i: i=1,\ldots, n}$ and $S_V = \tuple{v_i: i=1,\ldots, n}$ with $n=|S\cap V|=|S\cap F|$ be tuples consisting of all the vertices in $S\cap F$ and $S\cap V$ respectively. A \emph{perfect intervention $\mathrm{do}(S_F, S_V)$ on a cluster $\{S_F,S_V\}$} results in the directed cluster graph $\mathrm{CO}(\BG)_{\mathrm{do}(S_F,S_V)}=\tuple{\V',\E'}$ where\footnote{A perfect intervention $\mathrm{do}(S_F,S_V,\B{\xi}_V)$ replaces constraints $\Phi_{f_i}$ with causal constraints $\Phi'_{f_i}=\tuple{X_{v_i}\mapsto X_{v_i}, \xi_{v_i}, \{v_i\}}$. Notice that the labels $f_i$ of the constraints are unaltered, and therefore only the edges in the bipartite graph and causal ordering graph change after an intervention, as well as the clusters in the causal ordering graph, while the labels of vertices are preserved.}
	\begin{enumerate}
		\item $\mathcal{V}'= \left(\V\setminus\{S\}\right)\cup\{\{v_i,f_i\}: i=1,\ldots,n\}$,\nopagebreak
		\item $\mathcal{E}'=\{(x\to T)\in\mathcal{E}: T\neq S\}$.
	\end{enumerate}
	\vspace{-6pt}
\end{definition}

A soft intervention on a system of constraints has no effect on the bipartite graphical structure of the constraints and the variables that appear in them. Since the bipartite graph of the system is the same before and after soft interventions, it trivially follows that soft interventions commute with causal ordering. The following proposition shows that perfect interventions on clusters also commute with causal ordering.

\begin{restatable}{proposition}{commute}
\label{prop:perfect interventions on clusters commute}
Let $\BG=\tuple{V,F,E}$ be a bipartite graph and $W$ a set of exogenous variables. Let $\mathrm{CO}(\BG)=\tuple{\mathcal{V},\mathcal{E}}$ be the corresponding causal ordering graph. Let $S_F\subseteq F$ and $S_V\subseteq V\setminus W$ be such that $(S_F\cup S_V)\in\V$. Then:
\begin{align*}
\mathrm{CO}(\BG_{\mathrm{do}(S_F,S_V)}) = \mathrm{CO}(\BG)_{\mathrm{do}(S_F,S_V)}.
\end{align*}
\end{restatable}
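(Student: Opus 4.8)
The plan is to prove the two directed cluster graphs coincide by comparing how each is built from the causal ordering algorithm, using the characterisation of $\mathrm{CO}(\BG)$ via minimal self-contained sets (Algorithm \ref{alg:causal ordering minimal self-contained exo}) together with its uniqueness (Theorem \ref{thm:uniqueness}). Write $\BG' $ for the subgraph of $\BG$ induced by $(V\cup F)\setminus W$; by hypothesis $\BG'$ is self-contained, and by Lemma \ref{lemma:self-contained after intervention} the analogous subgraph $\BG'_{\mathrm{do}(S_F,S_V)}$ of $\BG_{\mathrm{do}(S_F,S_V)}$ is self-contained as well, so $\mathrm{CO}(\BG_{\mathrm{do}(S_F,S_V)})$ is well-defined. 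The key structural observation is that $S = S_F\cup S_V$ is a cluster of $\mathrm{CO}(\BG)$, so there is a topological order of the clusters in which $S$ appears, and all clusters preceding $S$ in that order together form a self-contained set of equations whose adjacencies (within $\BG'$) are exactly the variables in those clusters; call this ``downward-closed'' union of preceding clusters $A$, with $A\cap F$ self-contained in $\BG'$ and $V(A\cap F)\setminus W = A\cap V$. After the intervention, every equation $f_i\in S_F$ is replaced by $\langle X_{v_i}\mapsto X_{v_i},\xi_{v_i},\{v_i\}\rangle$, so in $\BG'_{\mathrm{do}(S_F,S_V)}$ each $f_i$ has the single neighbour $v_i$; thus each $\{f_i\}$ is by itself a minimal self-contained set in $\BG'_{\mathrm{do}(S_F,S_V)}$ once $A$ has been removed.

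The argument then runs in three stages, mirroring the run of Algorithm \ref{alg:causal ordering minimal self-contained exo} on the intervened graph. First, on the part of the graph ``above'' $S$ (i.e. the clusters in $A$ and the exogenous singletons) nothing changes: the intervention only alters edges incident to vertices of $F\setminus S_F$ if those edges went to $f_i$, but by Definition \ref{def:perfect intervention} only edges incident to the targeted equations $f_i$ are modified, and the edges among the other equations and all variables are untouched. Hence the algorithm, run on $\BG'_{\mathrm{do}(S_F,S_V)}$, peels off exactly the same sequence of minimal self-contained sets, producing exactly the same clusters and incoming edges, until it has consumed $A$. I would make this precise by induction on the algorithm's iterations, invoking uniqueness (Theorem \ref{thm:uniqueness}) to say the choice of which minimal self-contained set to peel is immaterial, so it suffices to exhibit one valid run. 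Second, once $A$ is removed, the remaining graph (restricted to endogenous vertices) consists of the intervened block, in which $\{f_i\}$ with neighbour $\{v_i\}$ is minimal self-contained for each $i$, plus everything that was ``below or incomparable to'' $S$ in $\mathrm{CO}(\BG)$; the algorithm peels off the singletons $\{v_i,f_i\}$, giving clusters $\{v_i,f_i\}$ with no incoming edges from other endogenous vertices (all former incoming edges to $S$ are gone because their sources lay in $A$ or were the $v_j$'s themselves, now absorbed), matching exactly item (1)–(2) of Definition \ref{def:perfect intervention on a directed cluster graph}. Third, for the clusters strictly below or incomparable to $S$, I claim the minimal-self-contained-set structure of the graph-with-$A\cup S$-removed is identical in the original and intervened cases: removing the cluster $S$ from $\BG'$ leaves the same residual bipartite graph as removing the $n$ singleton clusters $\{v_i,f_i\}$ from $\BG'_{\mathrm{do}(S_F,S_V)}$, because in both cases precisely the vertex set $S_F\cup S_V$ is deleted and the edges among the surviving vertices are the same (the intervention changed only edges touching $S_F$). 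Edges into these lower clusters from vertices of $S_V$ are recorded identically: in $\mathrm{CO}(\BG)$ an edge $v\to T$ with $v\in S_V$ arises when $v\in\adj{\BG}{S_F^{(T)}}\setminus\adj{\BG'}{S_F^{(T)}}$ for the cluster $T$'s generating set, and the adjacencies $\adj{\BG}{\cdot}$ of the non-intervened equations are unchanged by the intervention; likewise edges from exogenous $w$ are unchanged. Hence the remaining clusters and all their incoming edges coincide, which is exactly what $\mathrm{do}(S_F,S_V)$ on $\mathrm{CO}(\BG)$ prescribes (it leaves all clusters $T\neq S$ and all edges $x\to T$ with $T\neq S$ untouched).

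The main obstacle I anticipate is the bookkeeping in the third stage: one must check carefully that an edge $x\to T$ recorded in $\mathcal{E}$ for a cluster $T$ below $S$ does not secretly depend on an edge $(v_i - f_i)$ that the intervention deleted, i.e. that no such edge is ``lost.'' This is where Definition \ref{def:perfect intervention}'s precise edge rule $E' = \{(i-j)\in E : i,j\neq f\}\cup\{(v-f)\}$ is essential — it guarantees that the only edges removed are those incident to the targeted equations $f_i$, and an edge $f_i\to T$ into a strictly-lower cluster would contradict $S_F$ being part of a cluster that precedes $T$ in topological order (equations never point ``downward'' into later clusters through a variable that is solved later, by the construction of the algorithm). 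A secondary subtlety is confirming that after deleting $A$ the sets $\{f_i\}$ really are minimal self-contained and that no larger minimal self-contained set straddling the intervened block and the lower part can be peeled first; this follows because each intervened $f_i$ has out-degree zero into the rest of the graph (its sole neighbour $v_i$ is also only adjacent, among equations, to $f_i$ after $A$ is removed, since any other original equation adjacent to $v_i$ lay in a cluster $\succeq S$ and either is one of the $f_j$'s or lies below), so $\{f_i\}$ is forced. Once these two points are nailed down, uniqueness of the causal ordering graph (Theorem \ref{thm:uniqueness}) closes the argument: both sides are the output of Algorithm \ref{alg:causal ordering minimal self-contained exo} on $\BG'_{\mathrm{do}(S_F,S_V)}$ for some valid run, hence they are equal.
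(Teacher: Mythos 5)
Your proposal is correct and follows essentially the same route as the paper's proof: exhibit a valid run of Algorithm \ref{alg:causal ordering minimal self-contained exo} on $\BG_{\mathrm{do}(S_F,S_V)}$ that peels off the same clusters before and after $S$ but replaces $S$ by the parentless singletons $\{v_i,f_i\}$, then invoke uniqueness (Theorem \ref{thm:uniqueness}) and match the result against Definition \ref{def:perfect intervention on a directed cluster graph}. The paper states this run is valid by citing Lemmas \ref{lemma:disjoint} and \ref{lemma:ind-selfcont} without elaboration; your third-stage bookkeeping (that deleted edges are only those incident to $S_F$, whose remaining adjacencies all lie in earlier clusters or $W$) is precisely the detail the paper leaves implicit.
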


The bipartite graph in Figure \ref{fig:intervened causal ordering} (left upper graph) has the causal ordering graph depicted in Figure \ref{fig:intervened causal ordering} (right upper graph). The perfect intervention $\mathrm{do}(S_F,S_V)$ with $S_F=\tuple{f_2,f_3}$ and $S_V=\{v_2,v_3\}$ on this causal ordering graph results in the directed cluster graph in Figure \ref{fig:intervened causal ordering} (right lower graph). Since perfect interventions on clusters commute with causal ordering this graph can also be obtained by applying the causal ordering algorithm to the intervened bipartite graph in Figure \ref{fig:intervened causal ordering} (left lower graph). Proposition \ref{prop:perfect interventions on clusters commute} shows that perfect interventions on the graphical level can be used to draw conclusions about dependencies and causal implications of the underlying intervened system of constraints. We will use this result in Section \ref{sec:equilibration in dynamical causal models} to elucidate the commutation properties of equilibration in dynamical models and interventions in \citet{Dash2005} and \citet{Bongers2018}.

\begin{figure}[ht]
  \centering
  \begin{tikzpicture}[scale=0.75,every node/.style={transform shape}]
  \begin{scope}
    \GraphInit[vstyle=Normal]
    \SetGraphUnit{1.2}
    \SetVertexMath
    \Vertex{v_1}
    \EA(v_1){v_2}
    \EA(v_2){v_3}
    \EA(v_3){v_4}
    \SO[unit=1](v_1){f_1}
    \SO[unit=1](v_2){f_2}
    \SO[unit=1](v_3){f_3}
    \SO[unit=1](v_4){f_4}
    \tikzset{EdgeStyle/.style = {-}}
    \Edge(f_1)(v_1)
    \Edge(f_2)(v_2)
    \Edge(f_3)(v_3)
    \Edge(f_4)(v_4)
    \Edge(f_2)(v_1)
    \Edge(f_2)(v_3)
    \Edge(f_3)(v_2)
    \Edge(f_4)(v_3)
    \node at (2,1) {(a) Bipartite graph $\BG$.};
  \end{scope}
  \begin{scope}[xshift=10cm,yshift=0cm]
    \GraphInit[vstyle=Normal]
    \SetGraphUnit{1.5}
    \SetVertexMath
    \Vertex{v_1}
    \EA(v_1){v_2}
    \EA(v_2){v_3}
    \EA(v_3){v_4}
    \SO[unit=1](v_1){f_1}
    \SO[unit=1](v_2){f_2}
    \SO[unit=1](v_3){f_3}
    \SO[unit=1](v_4){f_4}
    \node[draw=black, fit=(v_1) (f_1), inner sep=0.1cm ]{};
    \node[draw=black, fit=(v_2) (f_2) (v_3) (f_3), inner sep=0.1cm ]{};
    \node[draw=black, fit=(v_4) (f_4), inner sep=0.1cm ]{};
    \draw[EdgeStyle, style={->}](v_3) to (3.9,0);
    \draw[EdgeStyle, style={->}](v_1) to (0.9,0);
    \node at (2,1) {(b) Causal ordering graph $\mathrm{CO}(\BG)$.};
  \end{scope}
  \begin{scope}[xshift=0cm,yshift=-5cm]
    \GraphInit[vstyle=Normal]
    \SetGraphUnit{1.2}
    \SetVertexMath
    \Vertex{v_1}
    \EA(v_1){v_2}
    \EA(v_2){v_3}
    \EA(v_3){v_4}
    \SO[unit=1](v_1){f_1}
    \SO[unit=1](v_2){f_2}
    \SO[unit=1](v_3){f_3}
    \SO[unit=1](v_4){f_4}
    \tikzset{EdgeStyle/.style = {-}}
    \Edge(f_1)(v_1)
    \Edge(f_2)(v_2)
    \Edge(f_3)(v_3)
    \Edge(f_4)(v_4)
    \Edge(f_4)(v_3)
    \node at (2,-2) {(c) Intervened bipartite graph $\BG_{\mathrm{do}(S_F,S_V)}$.};
  \end{scope}
  \begin{scope}[xshift=10cm,yshift=-5cm]
    \GraphInit[vstyle=Normal]
    \SetGraphUnit{1.5}
    \SetVertexMath
    \Vertex{v_1}
    \EA[unit=1.5](v_1){v_2}
    \EA(v_2){v_3}
    \EA(v_3){v_4}
    \SO[unit=1](v_1){f_1}
    \SO[unit=1](v_2){f_2}
    \SO[unit=1](v_3){f_3}
    \SO[unit=1](v_4){f_4}
    \node[draw=black, fit=(v_1) (f_1), inner sep=0.1cm ]{};
    \node[draw=black, fit=(v_2) (f_2), inner sep=0.1cm ]{};
    \node[draw=black, fit=(v_3) (f_3), inner sep=0.1cm ]{};
    \node[draw=black, fit=(v_4) (f_4), inner sep=0.1cm ]{};
    \draw[EdgeStyle, style={->}](v_3) to (3.9,0);
    \node at (2,-2) {(d) $\mathrm{CO}(\BG_{\mathrm{do}(S_F,S_V)}) = \mathrm{CO}(\BG)_{\mathrm{do}(S_F,S_V)}$.};
  \end{scope}
  \draw[|->,thick] (5,-0.5) -- node[anchor=south] {Causal Ordering} (8.5,-0.5);
  \draw[|->,thick] (5,-5.5) -- node[anchor=south] {Causal Ordering} (8.5,-5.5);
  \draw[|->,thick] (2,-2) -- node[anchor=east] {$\mathrm{do}(S_F,S_V)$} (2,-4);
  \draw[|->,thick] (12,-2) -- node[anchor=west] {$\mathrm{do}(S_F,S_V)$} (12,-4);
  \end{tikzpicture}
 \caption{The intervention $\mathrm{do}(S_F,S_V)$ with ordered sets $S_F=\tuple{f_2,f_3}$ and $S_V=\tuple{v_2,v_3}$ commutes with causal ordering. Application of causal ordering and the intervention to the bipartite graph (a) results in the causal ordering graph (b) and the intervened bipartite graph (c) respectively. The directed cluster graph (d) can be obtained either by applying causal ordering to the intervened bipartite graph or by intervening on the causal ordering graph.}
 \label{fig:intervened causal ordering}
\end{figure}
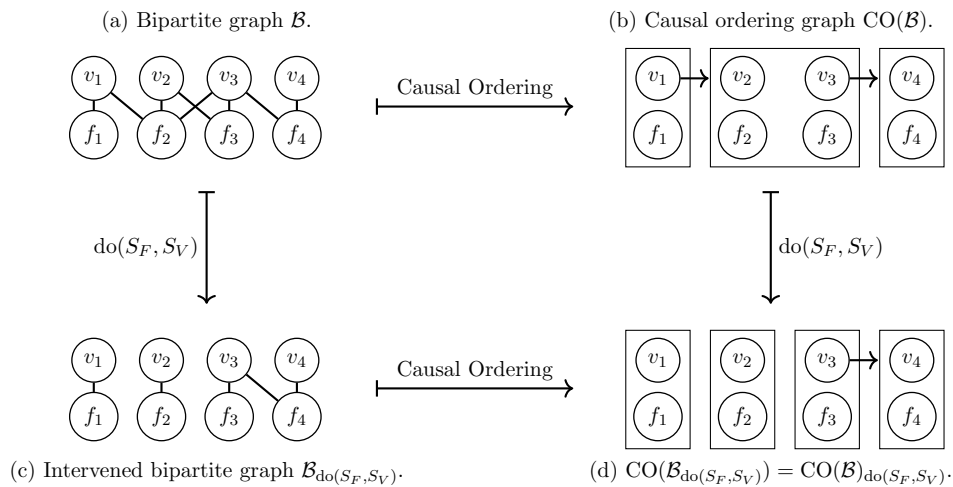

\section{Discussion}
\label{sec:discussion}

In this section we give a detailed account of how our work relates to some of the existing literature on causal ordering and causal modelling.

\subsection{``The causal graph'': A misnomer?}
\label{sec:the causal graph a misnomer}

Our work extends the work of \citet{Simon1953} who introduced the causal ordering algorithm. We extensively discussed the example of a bathtub that first appeared in \citet{Iwasaki1994}, in which the authors refer to the Markov ordering graph as ``the causal graph'' and claim that this graph represents the effects of ``manipulations''. We observe here that the Markov ordering graph in Figure \ref{fig:bathtub full markov ordering} does not have an unambiguous causal interpretation, contrary to claims in the literature. In this work we have formalized soft and perfect interventions, which are two common types of manipulation. This allows us to show that the Markov ordering graph, unlike the causal ordering graph, neither represents the effects of soft interventions nor does it have a straightforward interpretation in terms of perfect interventions. \citet{Iwasaki1994} do not clarify what the correct causal interpretation of the Markov ordering graph should be and therefore we believe that the term ``causal graph'' is a misnomer from a contemporary perspective on interventions and causality.

\paragraph{Markov ordering.} To support this claim, we consider the bathtub system in \citet{Iwasaki1994} that we presented in Example \ref{ex:bathtub intro}. The structure of the equations and the endogenous variables that appear in them can be represented by the bipartite graph in Figure \ref{fig:bathtub bipartite observed}. The corresponding Markov ordering graph in Figure \ref{fig:bathtub markov ordering observed} corresponds to the graph that \citet{Iwasaki1994} call the ``causal graph'' for the bathtub system. Note that \citet{Iwasaki1994} do not make a distinction between variable vertices and equation vertices like we do. Their ``causal graph'' therefore has vertices $K,I,P,O,D$ instead of $v_K,v_I,v_P,v_O,v_D$. An aspect that is not discussed at all by \citet{Iwasaki1994}, is that the Markov ordering graph implies conditional independences between components of solutions of equations.\footnote{\citet{Iwasaki1994} consider deterministic systems of equations and therefore it would not have made sense to consider Markov properties. In earlier work, the vanishing partial correlations implied by linear systems with three variables and normal errors were studied by \citet{Simon1954}.}

\paragraph{Soft interventions.} We first consider the representation of soft interventions. Table \ref{tab:soft interventions cog} shows that a soft intervention on $f_D$ has a generic effect on the solution for the variables $v_P,v_O$, and $v_D$. This soft intervention cannot be read off from the Markov ordering graph in Figure \ref{fig:bathtub markov ordering observed} because there is no vertex $f_D$. Since \citet{Iwasaki1994} make no distinction between variable vertices and equation vertices, a manipulation on $D$ should perhaps be interpreted as a soft intervention on the vertex $D$ in the Markov ordering graph in Figure \ref{fig:bathtub markov ordering observed} instead. However, the graphical structure would lead us to erroneously conclude that the soft intervention on $D$ \emph{only} has an effect on the variable $D$. In earlier work, \citet{Simon1988} assumed that a matching between variable and equation vertices is known in advance, allowing them to read off effects of soft interventions. We conclude that the Markov ordering graph, by itself, does not represent the effects of soft interventions on equations in general.

\paragraph{Perfect interventions.} In Example \ref{ex:bathtub perfect interventions} we found that a perfect intervention $\mathrm{do}(f_D,v_D,\xi_D)$ has an effect on the solution of the variables $v_P, v_O$ and $v_D$. If we would interpret this manipulation as a perfect intervention on $D$ in the Markov ordering graph in Figure \ref{fig:bathtub markov ordering observed} then we would mistakenly find that this intervention only affects the variable $D$. Since \citet{Iwasaki1994} do not make a distinction between variable vertices and equation vertices we could also interpret a manipulation on $D$ as the perfect intervention $\mathrm{do}(f_P,v_D,\xi_D)$ or $\mathrm{do}(f_D,v_O,\xi_O)$. From Table \ref{tab:bathtub solutions} we see that these perfect interventions would change the solution of the variables $\{v_D\}$ and $\{v_P, v_O, v_D\}$ respectively. Only the perfect intervention $\mathrm{do}(f_P,v_D,\xi_D)$ which targets the cluster containing $v_D$ corresponds to a perfect intervention on $D$ in the Markov ordering graph in Figure \ref{fig:bathtub markov ordering observed}. Since it is not clear from the Markov ordering graph what type of experiment a perfect intervention on one of its vertices should correspond to, we conclude that the Markov ordering graph cannot be used to read off the effects of perfect interventions.

\paragraph{Causal ordering graph.} The causal ordering graph for the bathtub system is given in Figure \ref{fig:bathtub directed cluster graph}. We proved that the causal ordering graph, contrary to the Markov ordering graph, represents the effects of soft interventions on equations and perfect interventions on clusters (see Theorems \ref{thm:markov property} and \ref{theo:effects of perfect interventions on clusters}). To derive causal implications from sets of equations we therefore propose to use the notion of the causal ordering graph instead. The distinction between variable vertices and equations vertices is also made by \citet{Simon1953} who shows how, for linear systems of equations, the principles of causal ordering can be used to qualitatively assess the effects of soft interventions on equations. A different, but closely related, notion of the causal ordering graph is used by \citet{Hautier2004} in the context of control systems modelling.

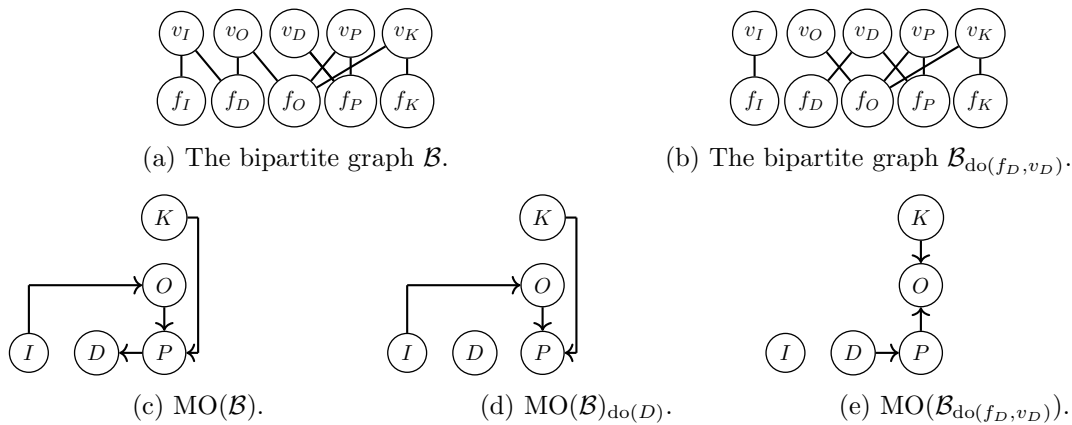
\begin{figure}[ht]
	\begin{subfigure}{0.5\textwidth}
		\centering
		\begin{tikzpicture}[scale=0.75,every node/.style={transform shape}]
		\GraphInit[vstyle=Normal]
		\SetGraphUnit{1}
		\SetVertexMath
		\Vertex{v_I}
		\EA(v_I){v_O}
		\EA(v_O){v_D}
		\EA(v_D){v_P}
		\EA(v_P){v_K}
		\SO[unit=1.2](v_I){f_I}
		\SO[unit=1.2](v_O){f_D}
		\SO[unit=1.2](v_D){f_O}
		\SO[unit=1.2](v_P){f_P}
		\SO[unit=1.2](v_K){f_K}
		\tikzset{EdgeStyle/.style = {-}}
		\Edge(f_I)(v_I)
		\Edge(f_D)(v_I)
		\Edge(f_D)(v_O)
		\Edge(f_O)(v_K)
		\Edge(f_O)(v_P)
		\Edge(f_O)(v_O)
		\Edge(f_P)(v_D)
		\Edge(f_P)(v_P)
		\Edge(f_P)(v_D)
		\Edge(f_K)(v_K)
		\end{tikzpicture}
		\caption{The bipartite graph $\BG$.}
		\label{fig:bathtub bipartite observed}
		\vspace{6pt}
	\end{subfigure}%
	\begin{subfigure}{0.5\textwidth}
		\centering
		\begin{tikzpicture}[scale=0.75,every node/.style={transform shape}]
		\GraphInit[vstyle=Normal]
		\SetGraphUnit{1}
		\SetVertexMath
		\Vertex{v_I}
		\EA(v_I){v_O}
		\EA(v_O){v_D}
		\EA(v_D){v_P}
		\EA(v_P){v_K}
		\SO[unit=1.2](v_I){f_I}
		\SO[unit=1.2](v_O){f_D}
		\SO[unit=1.2](v_D){f_O}
		\SO[unit=1.2](v_P){f_P}
		\SO[unit=1.2](v_K){f_K}
		\tikzset{EdgeStyle/.style = {-}}
		\Edge(f_I)(v_I)
		\Edge(f_D)(v_D)
		\Edge(f_O)(v_K)
		\Edge(f_O)(v_P)
		\Edge(f_O)(v_O)
		\Edge(f_P)(v_D)
		\Edge(f_P)(v_P)
		\Edge(f_K)(v_K)
		\end{tikzpicture}
		\caption{The bipartite graph $\BG_{\mathrm{do}(f_D,v_D)}$.}
		\label{fig:bathtub bipartite intervened}
		\vspace{6pt}
	\end{subfigure}
	\begin{subfigure}{0.33\textwidth}
		\begin{tikzpicture}[scale=0.75,every node/.style={transform shape}]
		\GraphInit[vstyle=Normal]
		\SetGraphUnit{1}
		\SetVertexMath
		\Vertex{I}
		\EA[unit=1.2](I){D}
		\EA[unit=1.2](D){P}
		\NO[unit=1.2](P){O}
		\NO[unit=1.2](O){K}
		\draw[EdgeStyle, style={-}](I) to (0,1.2);
		\draw[EdgeStyle, style={->}](0,1.2) to (O);
		\draw[EdgeStyle, style={->}](O) to (P);
		\draw[EdgeStyle, style={->}](P) to (D);
		\draw[EdgeStyle, style={-}](K) to (3.0,2.4);
		\draw[EdgeStyle, style={-}](3.0,2.4) to (3.0,0);
		\draw[EdgeStyle, style={->}](3.0,0) to (P);
		\end{tikzpicture}
		\caption{$\mathrm{MO}(\BG)$.}
		\label{fig:bathtub markov ordering observed}
	\end{subfigure}%
	\begin{subfigure}{0.33\textwidth}
		\begin{tikzpicture}[scale=0.75,every node/.style={transform shape}]
		\GraphInit[vstyle=Normal]
		\SetGraphUnit{1}
		\SetVertexMath
		\Vertex{I}
		\EA[unit=1.2](I){D}
		\EA[unit=1.2](D){P}
		\NO[unit=1.2](P){O}
		\NO[unit=1.2](O){K}
		\draw[EdgeStyle, style={-}](I) to (0,1.2);
		\draw[EdgeStyle, style={->}](0,1.2) to (O);
		\draw[EdgeStyle, style={->}](O) to (P);
		\draw[EdgeStyle, style={-}](K) to (3.0,2.4);
		\draw[EdgeStyle, style={-}](3.0,2.4) to (3.0,0);
		\draw[EdgeStyle, style={->}](3.0,0) to (P);
		\end{tikzpicture}
		\caption{$\mathrm{MO}(\BG)_{\mathrm{do}(D)}$.}
		\label{fig:bathtub markov ordering observed intervened}
	\end{subfigure}%
	\begin{subfigure}{0.33\textwidth}
		\begin{tikzpicture}[scale=0.75,every node/.style={transform shape}]
		\GraphInit[vstyle=Normal]
		\SetGraphUnit{1}
		\SetVertexMath
		\Vertex{I}
		\EA[unit=1.2](I){D}
		\EA[unit=1.2](D){P}
		\NO[unit=1.2](P){O}
		\NO[unit=1.2](O){K}
		\draw[EdgeStyle, style={->}](D) to (P);
		\draw[EdgeStyle, style={->}](P) to (O);
		\draw[EdgeStyle, style={->}](K) to (O);
		\end{tikzpicture}
		\caption{$\mathrm{MO}(\BG_{\mathrm{do}(f_D,v_D)})$.}
		\label{fig:bathtub markov ordering intervened}
	\end{subfigure}
	\caption{The bipartite graph for the bathtub system without exogenous variables is given in Figure \ref{fig:bathtub bipartite observed}. The intervened bipartite graph is given in Figure \ref{fig:bathtub bipartite intervened}. The Markov ordering graphs for the observed and intervened bathtub system are given in Figures \ref{fig:bathtub markov ordering observed} and \ref{fig:bathtub markov ordering intervened} respectively. Figure \ref{fig:bathtub markov ordering observed intervened} shows the graph that we obtain by intervening on the Markov ordering graph. Note that this does not correspond with the Markov ordering graph of the intervened bathtub system.}
	\label{fig:bathtub markov ordering does not commute}
\end{figure}

\subsection{Relation to other causal models}
\label{sec:discussion:relation to other causal models}

The results in this work are easily applicable to other modelling frameworks, such as the popular SCM framework \citep{Pearl2000,Bongers2020}. Application of causal ordering to the structural equations of an SCM with self-cycles may result in a different ordering than the one implied by the SCM. In particular, causal ordering may lead to a stronger Markov property and a representation of effects of a different set of (perfect) interventions. Even though the causal ordering graph itself may not allow us to read off non-effects of arbitrary perfect interventions, one can still obtain those by first intervening on the bipartite graph, then applying the causal ordering algorithm, and finally reading off the descendants of the intervention targets (under appropriate maximal unique solvability conditions).

\paragraph{Structural Causal Models.} In an SCM, each endogenous variable is on the left-hand side of exactly one structural equation and perfect interventions always act on a structural equation and its corresponding variable. In comparison, a system of constraints consists of symmetric equations and the asymmetric relations between variables are derived automatically by the causal ordering algorithm. Consider, for example, the following structural equations:
\begin{alignat}{3}
X_1 &= U_1 \\
X_2 &= a X_1 + U_2,
\intertext{where $X_1,X_2$ are endogenous variables, $U_1, U_2$ are exogenous random variables, and $a$ is a constant. The ordering $X_1\to X_2$ can also be obtained by causal ordering of the following set of equations:}
X_1 - U_1 &= 0,\\
X_2 - a X_1 - U_2 &= 0.
\end{alignat}
Note that any set of structural equations implies a self-contained set of equations.\footnote{In a set of structural equations each variable is matched to a single equation. Since the set of equations has a perfect matching it is self-contained by Hall's marriage theorem (see Theorem~\ref{thm:hall} in Appendix~\ref{app:equivalence proof}).} We can thus always apply the causal ordering algorithm to structural equations. Interestingly, since the output of the causal ordering algorithm is unique (see Theorem~\ref{thm:uniqueness}), the structure that is provided by the structural equations is actually redundant if the structural equations contain no cycles.
\paragraph{SCM for the bathtub.}

Recall that at equilibrium, the bathtub system can be described by the following structural equations:
\begin{alignat*}{3}
f_K:\qquad X_{v_K} &= U_{w_K}, &\qquad\qquad\qquad f_O:\qquad X_{v_O} &= U_{w_5} X_{v_K} X_{v_P}, \\
f_I:\qquad X_{v_I} &= U_{w_I}, &\qquad\qquad\qquad f_D:\qquad X_{v_D} &= X_{v_D} + U_{w_1} (X_{v_I} - X_{v_O}),\\
f_P:\qquad X_{v_P} &= g U_{w_3} X_{v_D}.
\end{alignat*}
The graph of this SCM is depicted in Figure \ref{fig:bathtub SCM}, and the descendants and non-descendants of vertices are given in Table \ref{tab:interventions SCM}. Can we use this table to read off generic causal effects of perfect interventions targeting $\{f_K, v_K\}$, $\{f_I, v_I\}$, $\{f_P, v_P\}$, $\{f_O, v_O\}$, and $\{f_D, v_D\}$? The graph of the SCM contains (self-)cycles and the SCM does not have a (unique) solution under each of these perfect interventions.\footnote{There is no (unique) solution if one fixes the outflow rate of the system $X_{v_O}$ to a value that is not equal to $X_{v_I}$ for the perfect interventions targeting $\{f_O,v_O\}$ and $\{f_P,v_P\}$. In the dynamical model for the bathtub, these perfect interventions would correspond with the water level becoming (plus or minus) infinity.} Therefore, the graph of this SCM may not have a straightforward causal interpretation. Indeed, \citet{Bongers2020} pointed out that for SCMs with cycles or self-cycles, the absence (presence) of directed edges and directed paths between vertices may not correspond one-to-one to the absence (generic presence) of direct and indirect causal effects, as it does in DAGs. (Self-)cycles may even lead to (in)direct causal effects without a corresponding directed edge or path being present in the graph of the SCM. For the bathtub example, that unusual behaviour does not occur, but instead it illustrates another behaviour: certain causal effects are absent, even though one would na\"ively expect these to be generically present based on the graph of the SCM.\footnote{Such behaviour is a characteristic of perfectly adaptive dynamical systems \citep{Blom2021}.}
For example, Table \ref{tab:interventions SCM} shows that $v_O$ is a descendant of $v_K$ in the graph of the SCM while the solution for the outflow rate $X_{v_O}$ does \emph{not} change after the perfect intervention $\mathrm{do}(f_K,v_K)$. That this causal relation is absent can actually be read off from the causal ordering graph in Figure \ref{fig:bathtub directed cluster graph}.

For the bathtub system, the causal ordering algorithm can exploit the fact that equation $f_D$ can be replaced by
$f_D' : 0 = U_{w_1} (X_{v_I} - X_{v_O})$, which does not involve $v_D$, whereas for the SCM this self-cycle cannot be
removed. This causes the following differences in the results of the two approaches:
\begin{enumerate}
  \item The d-separations in the Markov ordering graph in Figure \ref{fig:bathtub full markov ordering} imply more conditional independences than those implied by the $\sigma$-separations in the graph of the SCM in Figure~\ref{fig:bathtub SCM} (as was discussed in detail in Section~\ref{sec:application to bathtub}).
  \item The graph of the SCM and the causal ordering graph represent different perfect intervention targets. In the graph of the SCM, we have minimal perfect intervention targets of the form $\{f_i,v_i\}$ with $i\in \{K,I,P,O,D\}$, while the causal ordering graph represents minimal perfect interventions on clusters $\{f_K,v_K\}$, $\{f_I,v_I\}$, $\{f_P, v_D\}$, $\{f_O, v_P\}$, and $\{f_D, v_O\}$. In both cases, the set of all perfect intervention targets that are represented by the graph are obtained by taking unions of minimal perfect intervention targets.
  \item The causal ordering graph of the bathtub has a straightforward causal interpretation because the bathtub system still has a unique solution under interventions on clusters in the causal ordering graph. In contrast, the graph of the SCM for the bathtub system does not have a straightforward causal interpretation and the bathtub system does not have a solution under each perfect intervention on the SCM. 
\end{enumerate}
We conclude that the causal ordering approach yields a more ``faithful'' representation of the bathtub than the SCM framework.

\paragraph{Other frameworks.} Since the causal ordering algorithm can be applied to any set of equations, the results that we developed here are generally applicable to sets of equations in other modelling frameworks. For example, the recently introduced Causal Constraint Models (CCMs) do not yet have a graphical representation for the independence structure between the variables \citep{Blom2019}. The causal ordering algorithm can be directly applied to a set of active constraints to obtain a Markov ordering graph.

\begin{table}[!htb]
	\caption{The descendants and non-descendants of intervention targets in the graph of the SCM for the bathtub system in Figure \ref{fig:bathtub SCM}.}
	\label{tab:interventions SCM}
	\begin{center}
		\begin{tabular}{l  l  l}
			\toprule
			target & descendants & non-descendants \\
			\midrule
			$f_K, v_K$ & $v_K$, $v_P$, $v_O$, $v_D$ & $v_I$ \\
			$f_I, v_I$ & $v_I$, $v_P$, $v_O$, $v_D$ & $v_K$ \\
			$f_P, v_P$ & $v_P$, $v_O$, $v_D$ & $v_K$, $v_I$\\
			$f_O, v_O$ & $v_P$, $v_O$, $v_D$ & $v_K$, $v_I$\\
			$f_D, v_D$ & $v_P$, $v_O$, $v_D$ & $v_K$, $v_I$\\
			\bottomrule
		\end{tabular}
	\end{center}
\end{table}

\subsection{Equilibration in dynamical models}
\label{sec:equilibration in dynamical causal models}

In this subsection we will discuss in more detail the relation between our work and other closely related work, in particular to that of \citet{Dash2005}.

Dynamical models in terms of first order differential equations can be \emph{equilibrated to a set of equations} by equating each time-derivative to zero \citep{Mooij2013,Bongers2018}. They can be \emph{equilibrated and mapped to a causal ordering graph} by applying the causal ordering algorithm to the resulting set of equilibrium equations. They can also be \emph{equilibrated and mapped to a Markov ordering graph} by subsequently applying Definition \ref{def:declustering} to this causal ordering graph. The bathtub system provides an example of what \citet{Dash2005} calls a ``violation of the Equilibration Manipulation Commutability property''.\footnote{We argue that this is confusing terminology in two ways. First, what Dash calls ``equilibration'' is what we would call equilibration to a set of equations, composed with the mapping to the Markov ordering graph. Second, Dash follows \citet{Iwasaki1994} in referring to the Markov ordering graph as the ``causal graph''. We argued in Section \ref{sec:the causal graph a misnomer} that this is a misnomer, as in general there is no straightforward one-to-one correspondence between the Markov ordering graph and the causal semantics of the system. This terminological confusion explains the apparent contradiction with the result of \citet{Bongers2018}, who prove that equilibration to an SCM commutes with manipulation (for perfect interventions).} 

Consider the dynamical system version of the filling bathtub, with dynamical equations
\begin{align*}
f_{K}:&\qquad X_{v_K} = U_{w_K},\\
f_{I}:&\qquad X_{v_I} = U_{w_I},\\
f_{D}:&\qquad\dot{X}_{v_D}(t) = U_{w_1}(X_{v_I}(t)-X_{v_O}(t)),\\
f_{P}:&\qquad\dot{X}_{v_P}(t) = U_{w_2}(g\,U_{w_3}X_{v_D}(t)-X_{v_P}(t)),\\
f_{O}:&\qquad\dot{X}_{v_O}(t) = U_{w_4}(U_{w_5}X_{v_K}X_{v_P}(t)-X_{v_O}(t)).
\end{align*}
Equilibration yields the equilibrium equations $f_K$, $f_I$, $f_D$, $f_P$, and $f_O$ in equations \eqref{eq:dyn K} to \eqref{eq:dyn O}. It is clear in this particular case that any perfect intervention $\mathrm{do}(S_F,S_V,\B{\xi}_V)$ (where we extended Definition~\ref{def:perfect intervention} to dynamical equations) commutes with equilibration (substituting zeroes for all first-order derivatives).\footnote{Note that it is crucially important here to ensure that the \emph{labelling} of the equations is not changed by the equilibration operation.} This type of commutation relation actually holds also in more general settings (see \citet{Mooij2013} and \citet{Bongers2018}).

On the other hand, mapping a set of equations to the corresponding Markov ordering graph does not necessarily commute with perfect interventions. For example, for the perfect intervention $\mathrm{do}(f_D,v_D)$, the Markov ordering graphs $\mathrm{MO}(\BG)_{\mathrm{do}(v_D)}$ and $\mathrm{MO}(\BG_{\mathrm{do}(f_D,v_D)})$ are wildly different, as can be seen by comparing Figures \ref{fig:bathtub markov ordering observed intervened} and \ref{fig:bathtub markov ordering intervened} respectively. Since perfect interventions do commute with equilibration, one can conclude that also the composition of equilibration followed by mapping to the Markov ordering graph fails to commute with this perfect intervention. This is the phenomenon that \citet{Dash2005} pointed out.

This lack of commutability does not hold for \emph{all} perfect interventions. For example, one can easily check that the perfect intervention $\mathrm{do}(f_K,v_K)$ commutes with the composition of equilibration followed by mapping to the Markov ordering graph. More generally, Proposition \ref{prop:perfect interventions on clusters commute} tells us
that for the bathtub, the clusters in the causal ordering graph ($\{f_K,v_K\}$, $\{f_I,v_I\}$, $\{f_P, v_D\}$, $\{f_O, v_P\}$, and $\{f_D, v_O\}$) represent the minimal perfect interventions targets for which both operations do commute.
This means that of the perfect interventions that \citet{Dash2005} considers ($\mathrm{do}(\{v_K,f_K\})$, $\mathrm{do}(\{v_I,f_I\})$, $\mathrm{do}(\{v_D,f_D\})$, $\mathrm{do}(\{v_O,f_O\})$, $\mathrm{do}(\{v_P,f_P\})$, and combinations thereof), exactly three commute with the mapping to the Markov ordering graph (namely $\mathrm{do}(\{f_K,v_K\})$, $\mathrm{do}(\{f_I,v_I\})$, $\mathrm{do}(\{f_P,v_P,f_O,v_O,f_D,v_D\})$, and combinations thereof). Hence, these are also the three minimal perfect interventions in that set that commute with equilibration followed by mapping to the Markov ordering graph.

As pointed out by \citet{Dash2005}, this lack of commutability has important implications when one tries to discover causal relations through structure learning, which we will briefly discuss in the next subsection.

\subsection{Structure learning}
\label{sec:structure learning}

We have shown that, under a solvability assumption, d-separations in the Markov ordering graph (or $\sigma$-separations in the directed graph associated with a particular perfect matching) imply conditional independences between variables in a system of constraints (see Theorem \ref{thm:markov property} and Theorem \ref{theo:generalized markov property}). Constraint-based causal discovery algorithms relate conditional independences in data to graphs under the Markov condition and the corresponding d- or $\sigma$-faithfulness assumption. Roughly speaking, the equivalence class of the Markov ordering graph (or the directed graph associated with a particular perfect matching) can be learned from data under the assumption that all conditional independences in the data are implied by the graph. The bathtub system in Example \ref{ex:bathtub intro} is used by \citet{Dash2005}, who simulates data from the dynamical model until it reaches equilibrium, and then applies the PC-algorithm to learn the graphical structure of the system. It is no surprise that the learned structure is the Markov ordering graph in Figure \ref{fig:bathtub markov ordering observed}. The usual assumption is then that the Markov ordering graph equals the causal graph, where directed edges express direct causal relations between variables. In this work we have shown that this learned Markov ordering graph does \emph{not} have such a straightforward causal interpretation.

\section{Conclusion and future work}

In this work, we reformulated Simon's causal ordering algorithm and demonstrated that it is a convenient and scalable tool to study causal and probabilistic aspects of models consisting of equations. In particular, we showed how the technique of causal ordering can be used to construct a \emph{causal ordering graph} and a \emph{Markov ordering graph} from a set of equations, without calculating explicit global solutions to the system of equations. The novelties of this paper include an extension of the causal ordering algorithm for general bipartite graphs, and proving that the corresponding Markov ordering graph implies conditional independences between variables, whereas the corresponding causal ordering graph encodes the effects of soft and perfect interventions. 

To model causal relations between variables in sets of equations unambiguously, we generalized existing notions of perfect interventions on SCMs. The main idea is that a perfect intervention on a set of equations targets variables and specified equations, whereas a perfect intervention on a Structural Causal Model (SCM) targets variables and their associated structural equations. We considered a simple dynamical model with feedback and demonstrated that, contrary to claims in the literature, the Markov ordering graph does not generally have any obvious causal interpretation in terms of soft or perfect interventions. We showed that the causal ordering graph, on the other hand, does encode the effects of soft and certain perfect interventions. The main take-away is that we need to make a distinction between variables and equations in graphical representations of the probabilistic and causal aspects of models with feedback. By making this distinction, we clarified the correct interpretation of some existing results in the literature. Additionally, we shed new light on discussions in causal discovery about the justification of using a single directed graph with endogenous variables as vertices to simultaneously represent causal relations and conditional independences. We believe that the phenomenon where the Markov ordering graph does not encode causal semantics in the usual way manifests itself in certain biological or econometric models with feedback at equilibrium. In future work we plan to investigate these occurrences further \citep{Blom2020b,Blom2021}.


\acks{We thank Patrick Forr{\'e} and Stephan Bongers for fruitful discussions. This work was supported by the ERC under the European Union's Horizon 2020 research and innovation programme (grant agreement 639466) and by NWO (VIDI grant 639.072.410).}



\newpage

\appendix

\section{Preliminaries}
\label{sec:preliminaries}

\subsection{Graph terminology}
\label{sec:graph terminology}

A \emph{bipartite graph} is an ordered triple $\BG=\tuple{V,F,E}$ where $V$ and $F$ are disjoint sets of vertices and $E$ is a set of undirected edges $(v-f)$ between vertices $v\in V$ and $f\in F$. For a vertex $x\in V\cup F$ we write $\adj{\BG}{x}=\{y\in V\cup F: (x-y)\in E\}$ for its \emph{adjacencies}, and for $X\subseteq V\cup F$ we write $\adj{\BG}{X}=\bigcup_{x\in X}\adj{\BG}{x}$ to denote the adjacencies of $X$ in $\BG$. A \emph{matching} $\M\subseteq E$ for a bipartite graph $\BG=\tuple{V,F,E}$ is a subset of edges that have no common endpoints. We say that two vertices $x$ and $y$ are \emph{matched} when $(x-y)\in\M$. We let $\M(x)$ denote the set of vertices to which $x$ is matched. Note that if $(x-y)\in\M$ then $\M(x)=\{y\}$ and if $x$ is not matched then $\M(x)=\emptyset$. We let $\M(X)=\bigcup_{x\in X}\M(x)$ denote the set of vertices to which the set of vertices $X\subseteq V\cup F$ is matched. A matching is \emph{perfect} if all vertices $V\cup F$ are matched.

A \emph{directed graph} is an ordered pair $\G=\tuple{V,E}$ where $V$ is a set of vertices and $E$ is a set of directed edges $(v\to w)$ between distinct vertices $v,w\in V$. A \emph{directed mixed graph} is an ordered triple $\G=\tuple{V,E,B}$ where $\tuple{V,E}$ is a directed graph and $B$ is a set of bi-directed edges between vertices in $V$. If a directed mixed graph $\tuple{V,E,B}$ has an edge $(v\to v)\in E$ then we say that it has a \emph{self-cycle}. We say that a vertex $v$ is a \emph{parent} of $w$ if $(v\to w)\in E$ and write $v\in\pa{\G}{w}$. Similarly we say that $w$ is a \emph{child} of $v$ if $(v\to w)\in E$ and write $w\in\ch{\G}{v}$. A \emph{path} is a sequence of distinct vertices and edges $(v_1, e_1, v_2, e_2, \ldots, e_{n-1},v_n)$ where for $i=1,\ldots, n-1$ we have that $e_i=(v_i\to v_{i+1})$, $e_i=(v_i\leftarrow v_{i+1})$, or $e_i=(v_i\leftrightarrow v_{i+1})$. The path is called \emph{open} if there is no $v_i\in \{v_2,\ldots v_{n-1}\}$ such that there are two arrowheads at $v_i$ on the path (i.e.\ there is no collider on the path). A \emph{directed path} $(v\to \ldots \to w)$ from $v$ to $w$ is a path where all arrowheads point in the direction of $w$. We say that $v$ is an \emph{ancestor} of $w$ if there is a directed path from $v$ to $w$ and write $v\in\an{\G}{w}$. We say that $w$ is a \emph{descendant} of $v$ if there is a directed path from $v$ to $w$ and write $w\in\de{\G}{v}$.

Let $\G=\tuple{V,E,B}$ be a directed mixed graph and consider the relation:
\begin{equation*}
v\sim w \;\iff\; w\in\an{\G}{v}\cap\de{\G}{v} = \scc{\G}{v}.
\end{equation*}
Since the relation is reflexive, symmetric, and transitive this is an equivalence relation. The equivalence classes $\scc{\G}{v}$ are called the \emph{strongly connected components} of $\G$. A directed graph without self-cycles is \emph{acyclic} if and only if all of its strongly connected components are singletons. A directed graph with no directed cycles is called a Directed Acyclic Graph (DAG).

A \emph{perfect intervention} $\mathrm{do}(I)$ on a directed mixed graph $\G=\tuple{V,E,B}$ removes all edges with an arrowhead at any of the nodes $i\in I\subseteq V$. That is, $\G_{\mathrm{do}(I)}=\tuple{V,E',B'}$ where $E'=\{(x\to y)\in E: y\notin I\}$ and $B'=\{(x\leftrightarrow y)\in B: x\notin I, y\notin I\}$. \emph{Marginalizing} out a set of nodes $W \subseteq V$ from a directed mixed graph $\G=\tuple{V,E,B}$ results in a directed mixed graph $\G_{\mathrm{mar}(W)}=\tuple{V\setminus W, E_{\mathrm{mar}(W)},B_{\mathrm{mar}(W)}}$ (also known as the \emph{latent projection}) where:
\begin{enumerate}
	\item $E_{\mathrm{mar}(W)}$ consists of edges $(x\to y)$ such that $x,y\in V\setminus W$ and there exist $w_1,\ldots,w_k\in W$ such that the directed path $x\to w_1\to \ldots \to w_k \to y$ is in $\G$.
	\item $B_{\mathrm{mar}(W)}$ consists of edges $(x\leftrightarrow y)$ such that $x,y\in V\setminus W$ and there exist $w_1,\ldots, w_k\in W$ such that at least one of the following paths is in $\G$: (i) $x\leftrightarrow y$, or (ii) $x \leftarrow w_1 \leftarrow \ldots \leftarrow w_i \rightarrow \ldots \rightarrow w_k \rightarrow y$, or (iii) $x \leftarrow w_1 \leftarrow \ldots \leftarrow w_i \leftrightarrow w_{i+1} \rightarrow \ldots \rightarrow w_k \rightarrow y$.
\end{enumerate}
The operations of marginalization and intervention commute \citep{Forre2017}.

\subsection{Cyclic SCMs}
\label{sec:cyclic SCMs}

Structural causal models are a popular causal modelling framework that form the basis of many statistical methods for causal inference \citep{Pearl2000}. Their origins can be traced back to early work in genetics \citep{Wright1921}, econometrics \citep{Wright1928, Haavelmo1943}, and the social sciences \citep{Goldberger1973}. The properties of acyclic SCMs (i.e.\ recursive SEMs) have been widely studied and are well-understood, see for example \citet{Lauritzen1990, Spirtes2000, Pearl2000}. For systems that have causal cycles the class of cyclic SCMs has been proposed as an appropriate modelling class \citep{Spirtes1995, Mooij2013}. Recently, \citet{Forre2017, Bongers2020} showed that \emph{modular} and \emph{simple} SCMs retain many of the attractive properties of acyclic SCMs. Notably, they induce a unique distribution over variables, they obey a Markov property, and their graphs have an intuitive causal interpretation. Here, we will closely follow \citet{Bongers2020} for a succinct introduction to cyclic SCMs and their properties. We also discuss literature on how they may arise from equilibrating dynamical models.

The definition of an SCM in \citet{Bongers2020} slightly deviates from previous notions of (acyclic) SCMs because it separates the model from the (endogenous) random variables that solve it. Due to this change, interventions on SCMs are always well-defined, even if the resulting \emph{intervened} SCM does not have a (unique) solution. In Definition \ref{def:scm} below, we explicitly include exogenous random variables, which may be observed or unobserved, and the graph of the SCM. The endogenous random variables that solve an SCM are defined in Definition \ref{def:scm solution}.

\begin{definition}
\label{def:scm}
A structural causal model (SCM) is a tuple $\tuple{\B{\X}, \P_W, \B{f}, \G}$ where
\begin{enumerate}
	\item $\B{\X}=\bigotimes_{v\in V}\X_v$, where each $\X_v$ is a standard measurable space and the domain of a variable $X_v$,
  \item $\P_W=\prod_{w\in W} \P_w$ specifies the \emph{exogenous distribution}, a product probability measure on $\bigotimes_{w\in W}\X_w$, where each $\P_w$ is a probability measure on $\X_w$, with $W\subseteq V$ a set of indices corresponding to exogenous variables,\footnote{This means that the nodes $V\setminus W$ correspond to endogenous variables.}
	\item $\B{f}:\B{\X}_{V} \to \B{\X}_{V\setminus W}$ is a measurable function that specifies \emph{causal mechanisms}.\footnote{The structural equations of the model are given by $x_v=f_v(\B{x})$, $\B{x}\in\B{\X}$ for $v\in V\setminus W$.} 
	\item $\G=\tuple{V, E}$ is a directed graph with:
	\begin{enumerate}
		\item a set of nodes $V$ corresponding to variables,
		\item a set of edges $E=\{(v_i\to v_j): v_i \text{ is a parent of } v_j\}$.\footnote{We say that $v_i$ is a parent of $v_j$ if and only if $v_j \in V\setminus W$ and there does not exist a measurable function $\tilde{f}_j:\B{\X}_{V \setminus \{v_i\}}\to \X_j$ such that for $\P_W$-almost every $\B{x}_W\in\B{\X}_W$ and for all $\B{x}_{V\setminus W}\in\B{\X}_{V\setminus W}$ we have $x_j=f_j(\B{x}) \iff x_j=\tilde{f}_j(\B{x}_{V \setminus \{v_i\}})$, see Definition 2.7 in \citet{Bongers2020}.}
	\end{enumerate}
\end{enumerate}
\end{definition}

\begin{definition}
\label{def:scm solution}
We say that a random variable $\B{X}$ taking value in $\B{\X}$ is a \emph{solution to an SCM} $\tuple{\B{\X}, \P_W, \B{f}, \G}$ if 
$\P^{\B{X}_W} = \P_W$ (i.e., if the marginal distribution of $\B{X}$ on $\B{\X}_W$ equals the exogenous distribution specified
by the SCM), and
\begin{align}
\B{X}_{V\setminus W} = \B{f}(\B{X}) \quad \text{a.s.}
\end{align}
\end{definition}

The notion of \emph{unique solvability w.r.t.\ a subset} is given in Definition \ref{def:scm unique solvability} below.
\begin{definition}
\label{def:scm unique solvability}
  An SCM $\tuple{\B{\X}, \P_W, \B{f}, \G}$ is uniquely solvable w.r.t.\ $S\subseteq V\setminus W$ if there exists a measurable function $\B{g}_S:\B{\X}_{\mathrm{pa}_{\G}(S)\setminus S} \to \B{\X}_S$ such that 
  for $\P_W$-almost every $\B{x}_W\in\B{\X}_W$ and for all $\B{x}_{V\setminus W}\in\B{\X}_{V\setminus W}$
\begin{align}
\B{x}_S = \B{g}_S(\B{x}_{\mathrm{pa}_{\G}(S)\setminus S}) \iff \B{x}_S = \B{f}_S(\B{x}).
\end{align}
\end{definition}

SCMs that are uniquely solvable w.r.t.\ every subset of variables are called \emph{simple SCMs} \citep{Bongers2020}. It can be shown that SCMs with acyclic graphs are simple SCMs (Proposition 3.6 in \citet{Bongers2020}). Furthermore, SCMs are uniquely solvable w.r.t.\ a single variable if and only if there is no self-cycle at that variable (Proposition 3.9 in \citet{Bongers2020}). The notion of (perfect) interventions on an SCM is given in Definition \ref{def:scm intervention}.

\begin{definition}
\label{def:scm intervention}
Let $\mathcal{M}=\tuple{\B{\X}, \P_W, \B{f}, \G}$ be an SCM, $I\subseteq V$ an \emph{intervention target} and $\B{\xi}_I\in\B{\X}_I$ the \emph{intervention value}. A perfect intervention $\mathrm{do}(I,\B{\xi}_I)$ on the SCM maps it to an intervened SCM $\mathcal{M}_{\mathrm{do}(I,\B{\xi}_I)}=\tuple{\B{\X}, \P_W, \widetilde{\B{f}}, \G_{\mathrm{do}(I)}}$ with
\begin{align}
\widetilde{f}_v(\B{x}) := \begin{cases}
\xi_v &\qquad v\in I \\
f_v(\B{x}) &\qquad v\in V\setminus I.
\end{cases}
\end{align}
\end{definition}

Cyclic SCMs may have no solution, multiple solutions with different distributions, or all solutions may have the same distribution. This may even change as a result of a perfect intervention. Because changes in the solution after an intervention may not be compatible with the structure of the functional relations between variables, the causal interpretation of cyclic SCMs may not be intuitive \citep{Bongers2020}. It can be shown that the graph of a simple SCMs, whose unique solvability is preserved under intervention (Proposition 8.2 in \citet{Bongers2020}), has an intuitive causal interpretation; direct and indirect causal effects can be read off from the graph of the SCM by checking for the presence of directed edges and directed paths between variables \citep{Bongers2020}. For general cyclic models, \citet{Bongers2020} give a sufficient condition for detecting direct and indirect causes in an SCM with cycles. Roughly speaking, an indirect cause $v_i$ of $v_j$ can be detected if by controlling $v_i$ we can bring about a change in the distribution of $v_j$ and a direct cause $v_d$ of $v_j$ can be detected if by controlling $v_d$ and keeping all other variables constant we can bring about a change in the distribution of $v_j$. For the exact formulation we refer to Proposition 7.1 in \citet{Bongers2020}.

Cyclic SCMs have been used to represent the equilibrium distribution of dynamical models \citep{Fisher1970,Spirtes1995, Richardson1996, Lauritzen2002, Mooij2013, Bongers2018}. Under certain stability assumptions, an SCM can be obtained by \emph{equilibrating} a dynamical model \citep{Dash2005, Mooij2013, Bongers2018}. In the deterministic setting, \citet{Mooij2013} showed that a set of first-order differential equations in a globally asymptotically stable system can be mapped to a set of \emph{labelled equilibrium equations} by setting the time derivatives of variables equal to zero and labelling them as belonging to the time derivative of particular variables. If each labelled equilibrium equation can be solved for the corresponding variable then the labelled equilibrium equations can be mapped to a cyclic SCM without self-cycles. The idea that a dynamical model can be \emph{equilibrated to an SCM} was formalized in a general stochastic setting with zeroth and higher order differential equations by \citet{Bongers2018}, who also show how to equilibrate the  \emph{causal dynamics model} of the bathtub system that we discussed in Example \ref{ex:bathtub intro} to an SCM with self-cycles.

\subsection{Graph separation and Markov properties}
\label{sec:graph sep and mp}

In the literature, several versions of Markov properties for graphical models and corresponding probability distributions have been put forward, see e.g.\ \citet{Lauritzen1990, Pearl2000, Spirtes2000, Forre2017}. For DAGs and Acyclic Directed Mixed Graphs (ADMGs), the d-separation criterion is often used to relate conditional independences between variables in a model to the underlying (acyclic) graphical structure of the model \citep{Pearl2000}. For graphs that contain cycles the `collapsed graph' representation of \citet{Spirtes1995} inspired \citet{Forre2017} to introduce the $\sigma$-separation criterion.

\begin{definition}
	\label{def:separation criteria}
	For a directed mixed graph $\G=\tuple{V,E,B}$ we say that a path $(v_1,\ldots,v_n)$ \emph{is $\sigma$-blocked by $Z\subseteq V$} if
	\begin{enumerate}
		\item $v_1 \in Z$ and/or $v_n\in Z$, or \label{cond:sigma-block1}
		\item there is a vertex $v_i\notin \an{\G}{Z}$ on the path such that the adjacent edges both have an arrowhead at $v_i$, or \label{cond:sigma-block2}
		\item there is a vertex $v_i\in Z$ on the path such that: $v_i\to v_{i+1}$ with $v_{i+1}\notin \scc{\G}{v_i}$, or $v_{i-1}\leftarrow v_i$ with $v_{i-1}\notin \scc{\G}{v_i}$, or both.
	\end{enumerate}
	The path is \emph{$d$-blocked by $Z$} if it is $\sigma$-blocked or if there is a vertex $v_i\in Z$ on the path such that at least one of the adjacent edges does not have an arrowhead at $v_i$. We say that $X\subseteq V$ and $Y\subseteq V$ are \emph{$\sigma$-separated} by $Z\subseteq V$ if every path in $\G$ with one end-vertex in $X$ and one end-vertex in $Y$ is $\sigma$-blocked by $Z$, and write
	\begin{equation*}
	X \sigsep{\G} Y \given Z.
	\end{equation*}
	If every such path is $d$-blocked by $Z$ then we say that \emph{$X$ and $Y$ are $d$-separated by $Z$}, and write
	\begin{equation*}
	X \dsep{\G} Y \given Z.
	\end{equation*}
\end{definition}

It can be shown that $\sigma$-separation implies d-separation and that the two are equivalent for acyclic graphs \citep{Forre2017}. In general, d-separation does not imply $\sigma$-separation. The d-separations or $\sigma$-separations in a probabilistic graphical model may imply conditional independences via the Markov properties in Definition \ref{def:markov property} below.

\begin{definition}
	\label{def:markov property}
	For a directed graph $\G=\tuple{V,E}$ and a probability distribution $\P_{\B{X}}$ on a product $\B{\X}=\otimes_{v\in V}\X_v$ of standard measurable spaces $\X_v$, we say that the pair $(\G,\P_{\B{X}})$ satisfies the \emph{directed global Markov property} if for all subsets $W,Y,Z\subseteq V$:
	\begin{equation*}
	W \dsep{\G} Y \given Z \implies \B{X}_W \ind{\P_{\B{X}}} \B{X}_Y \given \B{X}_Z.
	\end{equation*}
	The pair $(\G,\P_{\B{X}})$ satisfies the \emph{generalized directed global Markov property} if for all subsets $W,Y,Z\subseteq V$:
	\begin{equation*}
	W \sigsep{\G} Y \given Z \implies \B{X}_W \ind{\P_{\B{X}}} \B{X}_Y \given \B{X}_Z.
	\end{equation*}
\end{definition}

Since $\sigma$-separations imply d-separations but not the other way around, the generalized directed global Markov property is strictly weaker than the directed global Markov property \citep{Bongers2020}. For acyclic SCMs the induced probability distribution on endogenous variables and the corresponding DAG satisfy the directed global Markov property \citep{Lauritzen1990}. The variables that solve a simple SCM obey the generalized directed global Markov property relative to the graph of the SCM \citep{Bongers2020}, while d-separation is limited to more specific settings such as acyclic models, discrete variables, or continuous variables with linear relations \citep{Forre2017}. A comprehensive account of different Markov properties for graphical models is provided by \citet{Forre2017}.

Constraint-based causal discovery algorithms require some additional faithfulness assumption. A probability distribution is \emph{d-faithful} to a directed mixed graph when each conditional independence implies a d-separation in that graph. Similarly, a probability distribution is \emph{$\sigma$-faithful} to a directed mixed graph when each conditional independence implies a $\sigma$-separation in that graph. In non-linear, non-discrete, cyclic settings the $\sigma$-faithfulness assumption is a natural extension of the notion of the common $d$-faithfulness assumption with $\sigma$-separation replacing $d$-separation. Under the additional assumption of causal sufficiency (i.e., no latent confounding variables), the NL-CCD algorithm was shown to be sound under the generalized directed Markov property and the weaker d-faithfulness assumption (Chapter 4 in \citet{Richardson1996}). Recently, \citet{Forre2018, Mooij2020a, Mooij2020b} proved soundness for a variety of causal discovery algorithms under the generalized directed Markov property and the $\sigma$-faithfulness assumption. \citet{Strobl2018} proved soundness of a causal discovery algorithm under the directed Markov property and the d-faithfulness assumption, allowing for latent confounding and selection bias.

\section{Proofs}
\label{app:proofs}

In this section of the appendix, all proofs are provided.

\subsection{Causal ordering via minimal self-contained sets}
\label{app:uniqueness proof}

In this section we prove Theorem \ref{thm:uniqueness} below.

\uniqueness*

Lemma \ref{lemma:disjoint} below shows that the minimal self-contained sets in a self-contained bipartite graph are disjoint. Lemma \ref{lemma:ind-selfcont} shows that the induced subgraph after one iteration of Algorithm \ref{alg:causal ordering minimal self-contained exo}, with a self-contained bipartite graph as input, is self-contained. The minimal self-contained sets in the graph which are not used in the iteration are minimal self-contained sets of the induced subgraph. This shows that the output of Algorithm \ref{alg:causal ordering minimal self-contained exo} is well-defined. We then use Lemma \ref{lemma:disjoint} and \ref{lemma:ind-selfcont} to prove Lemma \ref{lemma:unique} which states that the output of Algorithm \ref{alg:causal ordering minimal self-contained exo}, with a self-contained bipartite graph as input, is unique. This implies that the output of Algorithm \ref{alg:causal ordering minimal self-contained exo}, which has an initialization that is uniquely determined by the specification of exogenous variables $W$, must also be unique.

\begin{lemma}\label{lemma:disjoint}
	Let $\BG=\tuple{V,F,E}$ be a self-contained bipartite graph. Let $\S_F$ be the set of minimal self-contained sets in $\BG$. The sets in $\S_F$ are pairwise disjoint, and, likewise, the sets of adjacent nodes 
	\begin{equation*}
	\S_V=\{\adj{\BG}{S} : S\in\S_F\},
	\end{equation*}
	of the minimal self-contained sets in $\S_F$ are pairwise disjoint.
\end{lemma}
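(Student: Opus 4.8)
The plan is to use two standard facts from matching theory. First, since $\BG$ is self-contained, condition~\ref{SC2} of Definition~\ref{def:self-contained} holds for $F$ itself, so \emph{every} subset $F'\subseteq F$ satisfies the Hall inequality $\lvert F'\rvert\le\lvert\adj{\BG}{F'}\rvert$; consequently a subset $F'\subseteq F$ is self-contained if and only if $\lvert F'\rvert=\lvert\adj{\BG}{F'}\rvert$. Second, the set function $F'\mapsto\lvert\adj{\BG}{F'}\rvert$ is submodular on subsets of $F$: for all $A,B\subseteq F$,
\[
\lvert\adj{\BG}{A\cup B}\rvert+\lvert\adj{\BG}{A\cap B}\rvert\;\le\;\lvert\adj{\BG}{A}\rvert+\lvert\adj{\BG}{B}\rvert ,
\]
which follows immediately from $\adj{\BG}{A\cup B}=\adj{\BG}{A}\cup\adj{\BG}{B}$, $\adj{\BG}{A\cap B}\subseteq\adj{\BG}{A}\cap\adj{\BG}{B}$, and inclusion--exclusion.

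Using these, I would first show that the family of self-contained subsets of $F$, together with $\emptyset$, is closed under union and intersection. Take self-contained $S_1,S_2$. Combining submodularity with $\lvert\adj{\BG}{S_i}\rvert=\lvert S_i\rvert$ gives $\lvert\adj{\BG}{S_1\cup S_2}\rvert+\lvert\adj{\BG}{S_1\cap S_2}\rvert\le\lvert S_1\rvert+\lvert S_2\rvert=\lvert S_1\cup S_2\rvert+\lvert S_1\cap S_2\rvert$, whereas the Hall inequality applied to $S_1\cup S_2$ and to $S_1\cap S_2$ gives the reverse inequality $\lvert\adj{\BG}{S_1\cup S_2}\rvert+\lvert\adj{\BG}{S_1\cap S_2}\rvert\ge\lvert S_1\cup S_2\rvert+\lvert S_1\cap S_2\rvert$; hence equality holds in both, so $S_1\cup S_2$ and (when nonempty) $S_1\cap S_2$ are self-contained.

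Next I would derive disjointness of the minimal self-contained sets. Suppose $S_1,S_2\in\S_F$ are distinct but $S_1\cap S_2\neq\emptyset$. Then $S_1\cap S_2$ is a nonempty self-contained subset of $S_1$, so by minimality of $S_1$ we get $S_1\cap S_2=S_1$, i.e.\ $S_1\subseteq S_2$; symmetrically $S_2\subseteq S_1$, contradicting $S_1\neq S_2$. Hence $S_1\cap S_2=\emptyset$. For the adjacency sets, disjointness of $S_1$ and $S_2$ together with the closure property yields $\lvert\adj{\BG}{S_1\cup S_2}\rvert=\lvert S_1\cup S_2\rvert=\lvert S_1\rvert+\lvert S_2\rvert=\lvert\adj{\BG}{S_1}\rvert+\lvert\adj{\BG}{S_2}\rvert$, while $\adj{\BG}{S_1\cup S_2}=\adj{\BG}{S_1}\cup\adj{\BG}{S_2}$, so inclusion--exclusion forces $\lvert\adj{\BG}{S_1}\cap\adj{\BG}{S_2}\rvert=0$, which is the second claim.

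There is no serious obstacle here; the lemma is essentially the lattice property of self-contained sets. The only points requiring care are: invoking that the ambient graph being self-contained makes the Hall inequality automatic for all subsets of $F$ (so ``self-contained'' reduces to the single equality $\lvert F'\rvert=\lvert\adj{\BG}{F'}\rvert$), correctly using the minimality hypothesis (the only nonempty self-contained subset of a minimal self-contained set is the set itself, equivalently the Strong Hall Property of the footnote), and handling the degenerate case $S_1\cap S_2=\emptyset$, where $\adj{\BG}{\emptyset}=\emptyset$ so the closure statement holds vacuously.
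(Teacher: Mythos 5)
Your proof is correct and rests on the same arithmetic as the paper's: the displayed chain of (in)equalities in the paper's proof is exactly your submodularity bound for $F'\mapsto|\adj{\BG}{F'}|$ combined with the Hall inequality (automatic for all subsets because $\BG$ is self-contained), and both arguments use minimality to rule out a nonempty self-contained intersection, with the disjointness of the adjacency sets falling out of the same computation. Packaging this as an explicit lattice-closure statement (self-contained subsets are closed under union and nonempty intersection) is a clean reorganization, and handles slightly more carefully the case $S_1\cap S_2=S_1$ that the paper glosses over, but it is not a genuinely different route.
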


\begin{proof}
	Let $S_1\subseteq F$ and $S_2\subseteq F$ be non-empty distinct minimal self-contained sets in $\S_F$. For the sake of contradiction, assume that $S_1\cap S_2 \neq \emptyset$. Since $S_1$ is minimal self-contained, we know that $S_1\cap S_2\subset S_1$ is not self-contained. Hence, by Definition \ref{def:self-contained}, we have that
	\begin{align}
	\label{eq:strict inequality}
	|S_1\cap S_2| < |\adj{\BG}{S_1 \cap S_2}|.
	\end{align}
	Consider the following equations:
	\begin{align}
	\hspace{70pt}&\hspace{-70pt}
	|\adj{\BG}{S_1}|+|\adj{\BG}{S_2}| - |S_1\cap S_2| \\
  &= |S_1|+|S_2| - |S_1 \cap S_2| \label{eq:msc} \\
	&= |S_1\cup S_2| \nonumber \\
	&\le |\adj{\BG}{S_1\cup S_2}| \label{eq:sc} \\
	&= |\adj{\BG}{S_1}\cup\adj{\BG}{S_2}| \nonumber \\
	&= |\adj{\BG}{S_1}|+|\adj{\BG}{S_2}| - |\adj{\BG}{S_1}\cap\adj{\BG}{S_2}|\nonumber \\
	&\le |\adj{\BG}{S_1}|+|\adj{\BG}{S_2}| - |\adj{\BG}{S_1\cap S_2}| \label{eq:dj},
	\end{align}
	where equality \eqref{eq:msc} holds by condition \ref{SC1} of Definition \ref{def:self-contained}, since $\BG$ is self-contained inequality \eqref{eq:sc} holds by condition \ref{SC2} of Definition \ref{def:self-contained}, and inequality \eqref{eq:dj} holds because $\adj{\BG}{S_1\cap S_2} \subseteq \adj{\BG}{S_1}\cap\adj{\BG}{S_2}$. It follows that
	\begin{equation*}
	|S_1\cap S_2| \ge |\adj{\BG}{S_1}\cap \adj{\BG}{S_2}| \ge |\adj{\BG}{S_1\cap S_2}|\geq 0.
	\end{equation*}
	This is in contradiction with equation \eqref{eq:strict inequality}, and hence $S_1\cap S_2=\emptyset$. This implies that $|S_1 \cap S_2|=0$ and therefore by the inequalities above we have that $|\adj{\BG}{S_1}\cap \adj{\BG}{S_2}|=0$. Thus $\adj{\BG}{S_1}\cap\adj{\BG}{S_2}=\emptyset$.
\end{proof}

\begin{lemma}\label{lemma:ind-selfcont}
	Let $\BG=\tuple{V,F,E}$ be a self-contained bipartite graph. Suppose that $F$ has minimal self-contained sets $\S_F$. Let $\BG'$ be the subgraph of $\BG$ induced by
	\begin{equation*}
	V':=V \backslash \adj{\BG}{S}, \quad \text{and } \quad F':=F\backslash S,
	\end{equation*}
	with $S\in\S_F$. Then the following two properties hold:
	\begin{enumerate}

		\item $\BG'$ is self-contained, and

		\item the sets in $\S_F\backslash\{S\}$ are minimal self-contained in $\BG'$.

	\end{enumerate}

\end{lemma}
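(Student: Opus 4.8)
The plan is to push through the numerical conditions of Definition~\ref{def:self-contained} directly, using two inputs: that a minimal self-contained set $S$ satisfies $|S| = |\adj{\BG}{S}|$, and that by Lemma~\ref{lemma:disjoint} the members of $\S_F$ — and likewise their adjacency sets — are pairwise disjoint. The one structural fact I will keep invoking is that for $F'' \subseteq F'$ the adjacency set in the induced subgraph is $\adj{\BG'}{F''} = \adj{\BG}{F''} \setminus \adj{\BG}{S}$, which is immediate from the definition of $\BG'$ as the subgraph induced by $V' \cup F'$.

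For part~1, I would first do the cardinality bookkeeping: since $|S| = |\adj{\BG}{S}|$ and $\BG$ self-contained gives $|F| = |V|$, one gets $|F'| = |F| - |S| = |V| - |\adj{\BG}{S}| = |V'|$. Then I would verify the Hall-type condition in $\BG'$: for $F'' \subseteq F'$, using that $F''$ is disjoint from $S$ (so $|F'' \cup S| = |F''| + |S|$) and that $\adj{\BG}{F''} \cup \adj{\BG}{S}$ is the disjoint union of $\adj{\BG}{S}$ and $\adj{\BG'}{F''}$ (so $|\adj{\BG}{F'' \cup S}| = |\adj{\BG}{S}| + |\adj{\BG'}{F''}|$), apply self-containedness of $\BG$ to $F'' \cup S$ and cancel the common term $|S| = |\adj{\BG}{S}|$ to obtain $|F''| \le |\adj{\BG'}{F''}|$. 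Specialising to $F'' = F'$ and combining with $\adj{\BG'}{F'} \subseteq V'$ forces $|\adj{\BG'}{F'}| = |F'|$, which is the remaining condition. Hence $\BG'$ is self-contained.

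For part~2, I would take $T \in \S_F$ with $T \ne S$ and invoke Lemma~\ref{lemma:disjoint} to get $T \cap S = \emptyset$ and $\adj{\BG}{T} \cap \adj{\BG}{S} = \emptyset$, so $T \subseteq F'$ and $\adj{\BG}{T} \subseteq V'$. From this, for every $T''' \subseteq T$ we have $\adj{\BG}{T'''} \subseteq \adj{\BG}{T} \subseteq V'$, hence $\adj{\BG'}{T'''} = \adj{\BG}{T'''}$; that is, $T$ and all its subsets carry exactly the same self-containedness data in $\BG'$ as in $\BG$. In particular $T$ is self-contained in $\BG'$, and any non-empty $T'' \subsetneq T$ that were self-contained in $\BG'$ would also be self-contained in $\BG$, contradicting minimality of $T$. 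So every $T \in \S_F \setminus \{S\}$ is minimal self-contained in $\BG'$.

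I expect the only real obstacle to be keeping the adjacency-set algebra in the induced subgraph straight — specifically the identity $\adj{\BG'}{F''} = \adj{\BG}{F''} \setminus \adj{\BG}{S}$ and the observation that for subsets of $T$ no adjacency is lost at all; both are where Lemma~\ref{lemma:disjoint} does the real work. I would also remark on the degenerate case $S = F$, in which $\BG'$ is the null graph (vacuously self-contained) and $\S_F \setminus \{S\} = \emptyset$, so part~2 is vacuous and needs no separate argument.
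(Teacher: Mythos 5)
Your proposal is correct and follows essentially the same route as the paper's proof: the same cardinality bookkeeping $|F'|=|F|-|S|=|V|-|\adj{\BG}{S}|=|V'|$, the same key step of applying self-containedness of $\BG$ to $S\cup F''$ together with the disjointness identity $|\adj{\BG}{S\cup F''}|=|\adj{\BG}{S}|+|\adj{\BG'}{F''}|$ (you argue directly where the paper argues by contradiction), and the same use of Lemma~\ref{lemma:disjoint} for part~2 to see that subsets of any other $T\in\S_F$ keep identical adjacencies in $\BG'$. The only cosmetic additions are your explicit check that $|\adj{\BG'}{F'}|=|F'|$ and the remark on the degenerate case $S=F$, both of which are harmless refinements of the same argument.
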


\begin{proof}

	Let $S\in\S_F$ be a minimal self-contained subset in $\BG$. Since $\BG$ and $S$ are self-contained we have that $|V|=|F|$ and $|S|=|\adj{\BG}{S}|$ respectively. Therefore
	\begin{align*}
	|V'|=|V\setminus \adj{\BG}{S}|=|V|-|\adj{\BG}{S}|=|F|-|S| = |F\setminus S|=|F'|.
	\end{align*}

	This shows that condition \ref{SC1} of Definition \ref{def:self-contained} is satisfied for $\BG'$. Assume, for the sake of contradiction, that $F'$ does not satisfy condition \ref{SC2} of Definition \ref{def:self-contained} in the induced subgraph $\BG'$. Then there exists $S'\subseteq F'$ such that $|S'|>|\adj{\BG'}{S'}|$. Consider the following equations:
	\begin{align*}
	|S\cup S'|&=|S|+|S'|\\
	&>|\adj{\BG}{S}| + |\adj{\BG'}{S'}|\\
	&=|\adj{\BG}{S}| + |\adj{\BG}{S'}| - |\adj{\BG}{S}\cap \adj{\BG}{S'}| \\
	&= |\adj{\BG}{S}\cup\adj{\BG}{S'}|\\
	&= |\adj{\BG}{S\cup S'}|\\
	&\geq |S\cup S'|,
	\end{align*}
	where the last inequality holds because $\BG$ is self-contained by assumption. This is a contradiction, and we conclude that both conditions of Definition \ref{def:self-contained} are satisfied for $\BG'$. This shows that $\BG'$ is self-contained.

	Let $S_1\in \S_F$ and $S_2\in S_F$ be two distinct minimal self-contained sets in $\BG$. Suppose that $\BG_1$ is a subgraph of $\BG$ induced by $V\setminus\adj{\BG}{S_1}$ and $F\setminus S_1$. By Lemma \ref{lemma:disjoint} we know that $S_1\cap S_2=\emptyset$ and $\adj{\BG}{S_1}\cap \adj{\BG}{S_2}=\emptyset$. It follows that for all $S'\subseteq S_2$ we have that $\adj{\BG}{S'}=\adj{\BG_1}{S'}$. We find that
	\begin{align*}
	|S_2| &= |\adj{\BG}{S_2}| = |\adj{\BG_1}{S_2}|,\\
	|S'| &\leq |\adj{\BG}{S'}| = |\adj{\BG_1}{S'}|,
	\end{align*}
	for all $S'\subseteq S_2$. This shows that $S_2$ satisfies the conditions of Definition \ref{def:self-contained} in the bipartite graph $\BG_1$. Since $S_2$ has no non-empty strict subsets that are self-contained in $\BG$ we have that $S_2$ has no non-empty strict subsets that are self-contained in $\BG_1$. We conclude that $S_2$ is a minimal self-contained subset in $\BG_1$. This shows that the sets $\S_F\setminus\{S\}$ are minimal self-contained in $\BG'$.

\end{proof}

\begin{lemma}
	\label{lemma:unique}

	Let $\BG=\tuple{V,F,E}$ be a self-contained bipartite graph. The output $\mathrm{CO}(\BG)$ of Algorithm \ref{alg:causal ordering minimal self-contained exo} is unique.

\end{lemma}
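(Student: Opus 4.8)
The plan is to prove Lemma~\ref{lemma:unique} by induction on $|F|$. The base case $|F|=0$ is immediate: then $\BG$ is the null graph and Algorithm~\ref{alg:causal ordering minimal self-contained exo} returns the empty directed cluster graph. For the inductive step I would first reduce the problem to the partition only, by showing that in any run the edge set $\E$ is determined by $\V$ and $\BG$ alone. Tracing the algorithm, when the cluster $C_i$ with $C_i\cap F=S_i$ is constructed at step $i$ from the current subgraph $\BG_i''$ (namely $\BG$ with all previously constructed clusters $C_1,\dots,C_{i-1}$ deleted), one has $C_i\cap V=\adj{\BG_i''}{S_i}=\adj{\BG}{S_i}\setminus\bigcup_{j<i}C_j$, and the edges added are exactly $(v\to C_i)$ for $v\in\adj{\BG}{S_i}\cap\bigcup_{j<i}C_j$. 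Since the constructed clusters are pairwise disjoint and $\adj{\BG}{S_i}\subseteq V$, this set equals $\adj{\BG}{C_i\cap F}\setminus C_i$, an expression referring only to $\BG$ and to the cluster $C_i\in\V$, not to the order of steps or the choices made. Hence it suffices to prove that $\V$ is independent of the choices of minimal self-contained sets.

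For that I would compare two runs $A$ and $B$ of the algorithm, where $A$ picks $S_a$ as its first minimal self-contained set and $B$ picks $S_b$. If $S_a=S_b$, both produce the same first cluster $C_a=S_a\cup\adj{\BG}{S_a}$ (no incoming edges, since $\adj{\BG}{S_a}\setminus C_a=\emptyset$) and then continue by running the algorithm on the subgraph $\BG_a$ obtained by deleting $C_a$; by Lemma~\ref{lemma:ind-selfcont} this subgraph is self-contained with fewer equations, so by the induction hypothesis both continuations yield the same partition $\V(\BG_a)$, giving $\V_A=\{C_a\}\cup\V(\BG_a)=\V_B$. If $S_a\ne S_b$, Lemma~\ref{lemma:disjoint} gives $S_a\cap S_b=\emptyset$ and $\adj{\BG}{S_a}\cap\adj{\BG}{S_b}=\emptyset$, so $C_a$ and $C_b$ are disjoint; by Lemma~\ref{lemma:ind-selfcont} item~2, $S_b$ is still a minimal self-contained set of $\BG_a$, and $\adj{\BG_a}{S_b}=\adj{\BG}{S_b}$ by the disjointness, so $A$ admits a continuation picking $S_b$ next and reaching the subgraph $\BG_{ab}$ obtained by deleting both $C_a$ and $C_b$ (in either order these deletions give the same subgraph). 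Applying the induction hypothesis to $\BG_a$ and then to $\BG_{ab}$ yields $\V_A=\{C_a\}\cup\V(\BG_a)=\{C_a,C_b\}\cup\V(\BG_{ab})$, and symmetrically $\V_B=\{C_b,C_a\}\cup\V(\BG_{ab})$, so $\V_A=\V_B$. In general, every run's partition equals $\{C_a\}\cup\V(\BG_a)$ for the set $S_a$ it happens to pick first, and the displayed identities show this common value does not depend on which admissible $S_a$ is chosen; hence $\V$ is well-defined, and with the first paragraph so is $\E$, which proves uniqueness of $\mathrm{CO}(\BG)$.

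The main obstacle I anticipate is the routine but delicate bookkeeping in the reduction to subgraphs: one has to verify that the cluster built when a minimal self-contained set is chosen at a later step --- in particular $C_b$ when chosen after $C_a$ --- is literally the same subset $S_b\cup\adj{\BG}{S_b}$ as when it is chosen first (this is where Lemma~\ref{lemma:disjoint} is used, via $\adj{\BG_a}{S_b}=\adj{\BG}{S_b}$), and that deleting clusters in different orders produces the same induced subgraph, so that the induction hypothesis can legitimately be invoked on $\BG_a$ and $\BG_{ab}$. Once these identities and the formula $\E=\{(v\to C):C\in\V,\ v\in\adj{\BG}{C\cap F}\setminus C\}$ are established, the induction closes. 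Since Algorithm~\ref{alg:causal ordering minimal self-contained exo} run with an arbitrary input has an initialization ($\mathcal{E}=\emptyset$, $\mathcal{V}=\{\{w\}:w\in W\}$, $\BG'$ the subgraph induced by $(V\setminus W)\cup F$) that is completely determined by $W$ and $\BG$, uniqueness for self-contained inputs then propagates to the general statement of Theorem~\ref{thm:uniqueness}.
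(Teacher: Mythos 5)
Your proof is correct, and it reaches the conclusion by a different induction than the paper's, though both rest on the same two combinatorial facts, namely Lemma~\ref{lemma:disjoint} (distinct minimal self-contained sets and their adjacency sets are pairwise disjoint) and Lemma~\ref{lemma:ind-selfcont} (deleting one cluster preserves self-containedness and leaves the other minimal self-contained sets intact). The paper fixes two complete runs and inducts on the index of the clusters produced by the first run: it argues that the equation set of each such cluster persists as a minimal self-contained set, with unchanged adjacency, through the second run's deletions, and so by finiteness must eventually be chosen there; equality of the cluster sets then follows by symmetry, and equality of the edge sets is asserted directly from the construction. You instead induct on $|F|$ and prove a local-confluence (diamond) step: any two admissible first choices $S_a\neq S_b$ commute, since each remains minimal self-contained with $\adj{\BG_a}{S_b}=\adj{\BG}{S_b}$ after the other's cluster is deleted, so both runs can be steered to the common residual graph $\BG_{ab}$, and the induction hypothesis applied to the strictly smaller self-contained graphs $\BG_a$, $\BG_b$, $\BG_{ab}$ closes the diamond. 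Your explicit reduction of the edge set to $\E=\{(v\to C): C\in\V,\ v\in\adj{\BG}{C\cap F}\setminus C\}$ makes precise something the paper leaves implicit (``it follows immediately from the construction of edges''), and it is genuinely needed in your setup, because the edges added in a continuation are computed relative to the original $\BG$ rather than the residual subgraph, so the continuation is not literally a run of the algorithm on $\BG_a$ as far as edges are concerned; restricting the inductive claim to the partition, as you do, is the right fix. Both arguments are sound; yours is the standard confluence-style proof for greedy decompositions and localizes the choice-independence to a single step, while the paper's works globally with two arbitrary runs at the cost of a more intricate bookkeeping of the order in which clusters appear.
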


\begin{proof}

	Suppose $\G_1=\tuple{\mathcal{V}_1,\mathcal{E}_2}$ and $\G_2=\tuple{\mathcal{V}_2,\mathcal{E}_2}$ are directed cluster graphs that are obtained by running Algorithm \ref{alg:causal ordering minimal self-contained exo}. Let $A=(1,2,\ldots,|\mathcal{V}_1|)$ be an ordered set that indicates the order in which clusters $S^{(a)}$ (with $a\in A$) are added to $\mathcal{V}_1$ in the first run of the algorithm. Similarly,  $B=(1,2,\ldots,|\mathcal{V}_2|)$ is an ordered set that indicates the order in which clusters $T^{(b)}$ (with $b\in B$) are added to $\mathcal{V}_2$ in the second run of the algorithm. With a slight abuse of notation we define $\BG\setminus (S^{(k)})_{k<i}$ as the subgraph of $\BG$ induced by the nodes $(S^{(k)})_{k\geq i}$. Similarly, $\BG\setminus (T^{(k)})_{k<i}$ denotes the subgraph of $\BG$ induced by the nodes $(T^{(k)})_{k\geq i}$.

	\paragraph{\normalfont{}\emph{Intermediate result:}} We will prove that for $i \in (1,2,\ldots,|\mathcal{V}_1|)$ there exists $b_i\in B$ such that $S^{(i)}=T^{(b_i)}$ by induction.

	\paragraph{\normalfont{}\emph{Base case:}} The algorithm adds the cluster $S^{(1)}$ to $\V_1$ in the first step of the first run. Therefore, we know that the set of nodes $F \cap S^{(1)}$ must be minimal self-contained in $\BG$. Let $1 \leq k \leq |\V_2|$ be arbitrary. By Lemma \ref{lemma:ind-selfcont} it follows that $F\cap S^{(1)}$ is minimal self-contained in $\BG\setminus (T^{(j)})_{j<k}$ provided $S^{(1)} \neq T^{(j)}$ for all $j<k$. Since $\BG$ is finite, the minimal self-contained set $S^{(1)}$ must be chosen eventually, and hence there exists $b_1\in B$ such that $S^{(1)}=T^{(b_1)}$.

	\paragraph{\normalfont{}\emph{Induction hypothesis:}}  Let $1 \leq i < |\V_1|$ be arbitrary and assume that for all $j\leq i$ there exists $b_j\in B$ such that $S^{(j)}=T^{(b_j)}$. We want to show that there exists $b_{i+1}\in B$ such that $S^{(i+1)}=T^{(b_{i+1})}$.

	\paragraph{\normalfont{}\emph{Induction step:}} Let $B'=B \setminus (b_1,\ldots, b_i) = (b'_1, \ldots, b'_{|\mathcal{V}_2| - i})$ be an ordered set such that $b'_{j} \prec b'_{j+1}$ for all $j=1,\ldots, |\mathcal{V}_2|-(i+1)$. 

	\begin{enumerate}

		\item In the second run of the algorithm, the cluster $T^{(b'_1)}$ is added to $\V_2$ right after the clusters $T^{(b_j)}$ with $b_j\prec b'_1$ are added to $\V_2$ and removed from the bipartite graph. Therefore, the set $F\cap T^{(b'_1)}$ is minimal self-contained in $\BG\setminus (T^{(b_j)})_{j\leq i, b_j \prec b'_1}$. In the first run of the algorithm, the clusters $S^{(1)}=T^{(b_1)},\ldots, S^{(i)}=T^{(b_i)}$ are subsequently added to $\V_1$ and removed from the bipartite graph. Therefore, by Lemma \ref{lemma:disjoint} and Lemma \ref{lemma:ind-selfcont}, we have that $F\cap T^{(b'_1)}$ is minimal self-contained in $\BG'=\BG\setminus (T^{(b_j)})_{j\leq i} =\BG\setminus (S^{(k)})_{k\leq i}$. Hence, both $F\cap T^{(b'_1)}$ and $F\cap S^{(i+1)}$ are minimal self-contained in $\BG'$. Therefore, by Lemma \ref{lemma:disjoint} and Lemma \ref{lemma:ind-selfcont}, either $T^{(b'_1)}=S^{(i+1)}$ (in which case we are done) or $F\cap S^{(i+1)}$ is minimal self-contained in $\BG'\setminus T^{(b'_1)}$.

		\item Let $k\leq |\mathcal{V}_2|-i$ be arbitrary. By iteration of the argument in the previous step we find that $F\cap T^{(b'_k)}$ is minimal self-contained in $(\BG\setminus (T^{(b_j)})_{j\leq i, b_j \prec b'_k})\setminus (T^{(b'_j)})_{j<k}$ and hence in $\BG'\setminus (T^{(b'_j)})_{j<k}$, so that either $T^{(b'_k)}=S^{(i+1)}$ or $F\cap S^{(i+1)}$ is minimal self-contained in $\BG'\setminus (T^{(b'_j)})_{j\leq k}$. Since the bipartite graph is finite, there exists $m\in 1,\ldots, |\V_2|-i$ such that $T^{(b'_m)}=S^{(i+1)}$. By definition of $B'$ there exists $b_{i+1}\in B$ such that $S^{(i+1)}=T^{(b_{i+1})}$.		

	\end{enumerate}

	This proves that the clusters in $\V_1$ are also clusters in $\V_2$. By symmetry we find that the clusters $S^{(a)}$ in $\mathcal{V}_1$ and the clusters $T^{(b)}$ in $\mathcal{V}_2$ coincide. Since $\mathcal{V}_1=\mathcal{V}_2$ it follows immediately from the construction of edges in the algorithm that $\E_1=\E_2$ and hence $\G_1=\G_2$.

\end{proof}

\subsection{Coarse decomposition}
\label{appendix:coarse decomposition proofs}

For completeness, we include the proofs of the results in \citet{Pothen1990} that are necessary to show that the output of the extended causal ordering algorithm (Algorithm \ref{alg:causal ordering coarse decomposition}) is unique. The presentation in this section is based on the exposition of \citet{Diepen2019}. In order to prove the statements in Lemma \ref{lemma:impossible edges} and Proposition \ref{prop:coarse decomposition unique}, we require additional results. Lemma \ref{lemma:berge} and \ref{lemma:disjoint coarse decomposition} show that the incomplete, complete, and over-complete set are disjoint. The former uses the notion of an \emph{augmented path} for a bipartite graph $\BG$ and a matching $M$, which is an alternating path for $M$ that starts and ends with an unmatched vertex.
\begin{lemma}
\label{lemma:berge}
[\citet{Berge1957}] $M$ is a maximum matching for a bipartite graph $\BG$ if and only if $\BG$ does not contain any augmenting paths for $M$.
\end{lemma}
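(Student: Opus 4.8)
The plan is to prove the two directions of the equivalence via the contrapositive of one direction and a direct construction for the other. First I would establish the easy direction: if $\BG$ contains an augmenting path $P$ for $M$, then $M$ is not maximum. Given an augmenting path $P = (v_1, e_1, v_2, \ldots, e_{k-1}, v_k)$ whose endpoints $v_1, v_k$ are both unmatched by $M$ and whose edges alternate between $E \setminus M$ and $M$ (starting and ending with edges not in $M$), I would form the symmetric difference $M' := M \bigtriangleup E(P)$, where $E(P) = \{e_1, \ldots, e_{k-1}\}$. A short check shows $M'$ is again a matching: every internal vertex of $P$ was incident to exactly one edge of $M \cap E(P)$ and one edge of $E(P)\setminus M$, so toggling preserves the matching property there, and the two endpoints gain exactly one matched edge each while remaining incident to at most one edge of $M'$. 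Since $P$ has one more non-matching edge than matching edge, $|M'| = |M| + 1$, so $M$ was not maximum.

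For the converse I would argue the contrapositive: if $M$ is not maximum, then $\BG$ contains an augmenting path for $M$. Let $N$ be a matching with $|N| > |M|$ and consider the edge set $H := M \bigtriangleup N$ together with the spanned subgraph. Every vertex has degree at most two in $H$ (at most one $M$-edge and at most one $N$-edge), so $H$ decomposes into vertex-disjoint simple paths and cycles, each of which alternates between $M$-edges and $N$-edges. Alternating cycles have equal numbers of $M$- and $N$-edges; alternating paths have numbers differing by at most one. Since $|N \cap H| > |M \cap H|$, by counting there must be at least one component that is a path with strictly more $N$-edges than $M$-edges, hence a path starting and ending with an $N$-edge. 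Both endpoints of such a path are therefore unmatched by $M$ (an endpoint matched by $M$ would have its $M$-edge in $H$ and thus continue the path), and the path alternates with respect to $M$ starting and ending with non-$M$-edges: this is precisely an augmenting path for $M$.

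The main obstacle is purely bookkeeping rather than conceptual: one must be careful that the definition of ``augmenting path'' used in the paper (an alternating path for $M$ that starts and ends with an unmatched vertex, built on the sequence-of-distinct-vertices-and-edges formalism in Definition \ref{def:coarse decomposition}) matches the object produced by the symmetric-difference argument, and that ``alternating'' is interpreted as the first and last edges lying outside $M$. I would state this correspondence explicitly at the start. A secondary subtlety is that the paper works with bipartite graphs specifically, but bipartiteness is not actually needed for Berge's theorem; I would simply note that the argument uses only that $H$ has maximum degree two, so no appeal to the bipartite structure is required, and the result holds as stated.
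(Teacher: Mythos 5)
Your proof is correct and complete in substance, but note that the paper itself does not prove this lemma: its ``proof'' is a one-line deferral to \citet{Berge1957}, so there is no in-paper argument to compare against. What you supply is the standard symmetric-difference proof of Berge's theorem --- augmenting along $P$ via $M \bigtriangleup E(P)$ for the easy direction, and decomposing $M \bigtriangleup N$ into alternating paths and cycles for the converse --- and it is a perfectly good self-contained replacement for the citation. Two small points are worth tightening. First, in the converse direction, your parenthetical justification that the endpoints of the surplus path are $M$-unmatched (``an endpoint matched by $M$ would have its $M$-edge in $H$ and thus continue the path'') silently assumes that this $M$-edge is not also an $N$-edge; if it lay in $M\cap N$ it would not belong to $H$ at all, so it would not extend the path. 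The conclusion still holds, because such an endpoint is already incident to an edge of $N\setminus M$ on the path, and a second incident $N$-edge would contradict $N$ being a matching --- but that second case should be spelled out. Second, your observation that bipartiteness is never used is accurate and consistent with how the paper deploys the lemma: it is stated for bipartite graphs only because that is the setting of the coarse decomposition, and the degree-at-most-two argument for $M \bigtriangleup N$ goes through verbatim in arbitrary graphs.
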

\begin{proof}
The proof can be found in \citet{Berge1957}.
\end{proof}

\begin{lemma}
\label{lemma:disjoint coarse decomposition}
[\citet{Pothen1985}] Let $\BG=\tuple{V,F,E}$ be a bipartite graph with a maximum matching $M$. The incomplete set $T_I$ and the overcomplete set $T_O$ in Definition \ref{def:coarse decomposition} are disjoint.
\end{lemma}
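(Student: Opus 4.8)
The plan is to argue by contradiction using Berge's theorem (Lemma~\ref{lemma:berge}): if some vertex $z$ lies in $T_I\cap T_O$, I will splice the alternating path witnessing $z\in T_I$ together with the one witnessing $z\in T_O$ into a single augmenting path for $M$, which is impossible since $M$ is a maximum matching.

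First I would record two elementary facts about an $M$-alternating path $P=(x_1,e_1,\dots,e_{n-1},x_n)$ whose endpoint $x_1$ is unmatched. Fact (i): $e_1\notin M$, since an unmatched vertex has no incident $M$-edge; hence, along $P$, the $i$-th edge lies in $M$ if and only if $i$ is even. Fact (ii): if in addition $x_n$ is unmatched, then $P$ is an augmenting path, so by Lemma~\ref{lemma:berge} it must be trivial; in particular no $M$-alternating path can have one endpoint in $V_{\mathrm{un}}$ while passing through a vertex of $F_{\mathrm{un}}$ (or vice versa), as the corresponding subpath would be augmenting. Fact (ii) already disposes of the degenerate situations $z\in V_{\mathrm{un}}$ or $z\in F_{\mathrm{un}}$, so I may assume $z$ is matched.

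Next, assume toward a contradiction that $z\in T_I\cap T_O$, and fix an $M$-alternating path $P$ from some $a\in V_{\mathrm{un}}$ to $z$ and an $M$-alternating path $Q$ from some $b\in F_{\mathrm{un}}$ to $z$. Traversing $P$ starting from $a$, let $c$ be the \emph{first} vertex of $P$ that also lies on $Q$ (the set of such vertices is non-empty because $z$ belongs to it). Note $c\neq a$: otherwise $a\in V_{\mathrm{un}}$ would lie on $Q$, giving an augmenting subpath of $Q$ from $b$ to $a$, contradicting Lemma~\ref{lemma:berge}; similarly $c\neq b$. Let $P_1$ be the initial segment of $P$ from $a$ to $c$ and let $Q_1$ be the segment of $Q$ from $b$ to $c$. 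By the choice of $c$, the segments $P_1$ and $Q_1$ meet only at the vertex $c$, hence share no edge, so the walk $a \xrightarrow{P_1} c \xrightarrow{Q_1^{-1}} b$ visits pairwise distinct vertices and is a genuine simple path from $a$ to $b$.

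Finally I would verify that this spliced path is $M$-alternating, hence augmenting. Since $a$ and $b$ are unmatched, both $P_1$ and $Q_1$ begin with a non-$M$ edge, so by Fact~(i) whether the edge entering $c$ lies in $M$ is determined by the length of the corresponding segment; in a bipartite graph that parity depends only on whether $c\in V$ or $c\in F$. A two-case check then shows that in either case exactly one of the two edges incident to $c$ in the spliced path lies in $M$, so alternation is preserved at the splice point, while it holds within $P_1$ and within $Q_1$ by hypothesis. The endpoints $a$ and $b$ are unmatched, so the spliced path is an augmenting path for $M$, contradicting Lemma~\ref{lemma:berge}. Therefore $T_I\cap T_O=\emptyset$. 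The main obstacle — and the only genuinely delicate point — is that $P$ and $Q$ may cross each other at many vertices besides $z$, so a naive concatenation at $z$ need not be a simple path; cutting at the \emph{first} common vertex $c$ together with the parity bookkeeping at the splice is the crux, and everything else is routine.
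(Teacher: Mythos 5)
Your proof is correct and follows the same route as the paper's: assume a common vertex, combine the two alternating paths into an augmenting path, and contradict Berge's theorem (Lemma~\ref{lemma:berge}). The paper's own proof simply says ``by sticking these two paths together we obtain an augmented path''; your version supplies the details it glosses over (handling the case where $z$ itself is unmatched, cutting at the first common vertex so the spliced walk is a simple path, and the parity check for alternation at the splice point), which is a genuine improvement in rigour but not a different argument.
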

\begin{proof}
For the sake of contradiction, assume that there is a vertex $v\in T_I\cap T_O$. Then there is an alternating path from an unmatched vertex in $V$ to $v$ and there is also an alternating path from an unmatched vertex in $F$ to $v$. By sticking these two paths together we obtain an augmented path. It follows from Lemma \ref{lemma:berge} that $M$ is not maximum. This is a contradiction and therefore $T_I$ and $T_O$ must be disjoint.
\end{proof}

Lemma \ref{lemma:matched vertices} and Lemma \ref{lemma:complete graph self-contained} show that for a bipartite graph and a maximum matching with coarse decomposition $\mathrm{CD}(\BG,M)$, the vertices in $T_I,T_C,T_O$ are matched to vertices in $T_I,T_C,T_O$ respectively. Furthermore the subgraph of $\BG$ induced by $T_C$ is self-contained, so that Algorithm \ref{alg:causal ordering minimal self-contained exo} can be applied.

\begin{lemma}
\label{lemma:matched vertices}
[\citet{Pothen1985}] Let $\BG=\tuple{V,F,E}$ be a bipartite graph with a maximum matching $M$. Let $\mathrm{CD}(\BG,M)=\tuple{T_I,T_C,T_O}$ be the associated coarse decomposition. A matched vertex in $T_I$ is matched to a vertex in $T_I$ and a matched vertex in $T_O$ is matched to a vertex in $T_O$.
\end{lemma}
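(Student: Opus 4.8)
The plan is to prove the assertion for $T_I$; the statement for $T_O$ follows by the symmetric argument in which the roles of $V$ and $F$ (and of $V_{\mathrm{un}}$ and $F_{\mathrm{un}}$) are interchanged. So suppose $v\in T_I$ is matched, and let $u$ be the vertex with $(v-u)\in M$; the goal is to show $u\in T_I$. By the definition of $T_I$ (Definition~\ref{def:coarse decomposition}) there is an alternating path $P=(x_0,e_1,x_1,\ldots,e_n,x_n)$ with $x_0=y\in V_{\mathrm{un}}$ and $x_n=v$. Since $y$ is unmatched, no edge of $M$ is incident to $y$, so $e_1\notin M$; by the alternation property this forces $e_i\in M$ precisely when $i$ is even. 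Because $\BG$ is bipartite and $x_0=y\in V$, we likewise have $x_i\in V$ precisely when $i$ is even. These two parity facts, read from the unmatched end $y$, are the only structure I will use.

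First I would dispose of the easy case, where the last edge $e_n$ lies in $M$ (equivalently $n$ is even, so $v=x_n\in V$, and $n\ge 2$ since $v$ is matched and hence $v\neq y$). Then $e_n=(x_{n-1}-v)\in M$ shows $u=x_{n-1}$, and the truncation $(x_0,e_1,\ldots,e_{n-1},x_{n-1})$ is again an alternating path from $y\in V_{\mathrm{un}}$ to $u$; hence $u\in T_I$.

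The remaining case is $n$ odd, so $e_n\notin M$ and $v=x_n\in F$. The idea here is to append the matched edge $(v-u)\in M$ to $P$: this exhibits an alternating path from $y$ to $u$ and finishes the proof, \emph{provided} $u$ does not already occur among $x_0,\ldots,x_n$. Establishing $u\notin\{x_0,\ldots,x_n\}$ is the crux of the argument. Suppose $u=x_j$. Since $u\in V$, $j$ is even; and $j\neq 0$, for $x_0=y$ is unmatched while $u$ is matched. Thus $j\ge 2$, so $e_j=(x_{j-1}-x_j)\in M$, i.e.\ $x_{j-1}$ is the vertex matched to $u=x_j$; but that vertex is $v$, so $v=x_{j-1}$. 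Since $j-1\le n-1<n$, this contradicts $v=x_n$ together with the distinctness of the vertices of $P$. Hence $u\notin P$, the path $P$ extended by $u$ via the edge $(v-u)$ is a genuine alternating path from $y\in V_{\mathrm{un}}$ to $u$, and $u\in T_I$.

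I expect the main obstacle to be purely bookkeeping: keeping the parity/side correspondence straight (which edges of $P$ lie in $M$, which of its vertices lie in $V$ versus $F$, always measured from the unmatched endpoint $y$) and, in the odd case, ruling out the degenerate possibility that the matched partner $u$ already lies on $P$. Everything else is routine. It is harmless to recall that a one-vertex path counts as an alternating path, so that $V_{\mathrm{un}}\subseteq T_I$ and $F_{\mathrm{un}}\subseteq T_O$, though this is not needed here since the vertices under consideration are matched.
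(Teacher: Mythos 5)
Your proof is correct and follows essentially the same argument as the paper: split on whether the alternating path from the unmatched vertex ends in a matched or unmatched edge, and in the latter case extend the path by the matching edge. The only difference is that you explicitly verify the appended vertex $u$ does not already lie on the path (needed since the paper's paths consist of distinct vertices), a point the paper's proof silently omits.
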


\begin{proof}
For a matched vertex $x\in T_I$ there is an alternating path starting from an unmatched vertex $u_v\in V$ to $x$. When $x\in V$, this alternating path ends with a matched edge and hence $x$ is matched to a vertex in $T_I$. When $x\in F$ the alternating path ends with an unmatched edge. We may extend the alternating path with the edge adjacent to $x$ that is in $M$, and hence is matched to a vertex in $T_I$. For a matched vertex $x\in T_O$ there is an alternating path starting from an unmatched vertex $u_f\in F$ to $x$. When $x\in F$, this alternating path ends with a matched edge and hence $x$ is matched to a vertex in $T_O$. When $x\in V$, the alternating path ends with an unmatched edge. The alternating path may be extended with the edge adjacent to $x$ that is in $M$, and hence $x$ is matched to a vertex in $T_O$.
\end{proof}

\completegraphselfcontained*

\begin{proof}
By Lemma \ref{lemma:matched vertices} we know that vertices in $T_I$ and $T_O$ can only be matched to a vertex in $T_I$ and $T_O$, respectively. There are no unmatched vertices in $T_C$, so vertices in $T_C\cap V$ are perfectly matched to vertices in $T_C\cap F$. It follows from Hall's marriage theorem that $\BG_C$ is self-contained \citep{Hall1986}.
\end{proof}

The following lemma restricts edges that can be present between the incomplete, complete and overcomplete sets. This shows that clusters of the causal ordering graph that are in the overcomplete set are never descendants of clusters in the incomplete or complete set. Similarly, it also shows that clusters in the incomplete set are never ancestors of the complete or overcomplete sets. Lemma \ref{lemma:impossible edges} is then used to prove Proposition \ref{prop:coarse decomposition unique}.

\impossibleedges*

\begin{proof}
Suppose that there is an edge $e=(v-f)$ between a vertex $v\in T_I\cap V$ to a vertex $f\in(T_C\cup T_O)\cap F$. By Lemma \ref{lemma:matched vertices} the edge is not in the maximum matching. Note that there is an alternating path from an unmatched vertex in $T_I\cap V$ to $v$ that starts with an unmatched edge and ends with a matched edge. By adding the edge $(v-f)$, we obtain again an alternating path so that $f\in T_I$. This is a contradiction, and hence there is no edge between $(v-f)$. The second part of the lemma follows by symmetry.
\end{proof}

\coarsedecompositionunique*

\begin{proof}
Let $M$ be an arbitrary matching and let $\mathrm{CD}(\BG,M)=\tuple{T_I,T_C,T_O}$. Note that all vertices in $(T_I\cap V)\setminus U_V$ are $M$-matched to vertices in $T_I \cap F$ (by construction and Lemma~\ref{lemma:matched vertices}). Also, all vertices in $(T_O\cap F)\setminus U_F$ are $M$-matched with vertices in $T_O\cap V$. Finally, all vertices in $T_C\cap V$ are $M$-matched with vertices in $T_C\cap F$ and vice versa by Lemma~\ref{lemma:complete graph self-contained}. By Lemma~\ref{lemma:impossible edges} we have $\adj{\BG}{T_I\cap V} = T_I\cap F$ and $\adj{\BG}{T_O\cap F} = T_O\cap V$, so \emph{any} matching for $\BG$ can only match vertices in $T_I\cap V$ with vertices in $T_I\cap F$ and vertices in $T_O\cap F$ with vertices in $T_O\cap V$.


For the sake of contradiction, assume that there exists a maximum matching $M'$ that matches a vertex in $T_I\cap F$ with a vertex in $(T_C \cup T_O)\cap V$. Write:
\begin{align*}
M_V = \{v \in V : \exists f \in F : v-f \in M\}, \qquad M_V' = \{v \in V : \exists f \in F : v-f \in M'\},\\
M_F = \{f \in F : \exists v \in V : v-f \in M\}, \qquad M_F' = \{f \in F : \exists v \in V : v-f \in M'\}.
\end{align*}
Note that the number of edges in matching $M'$ is bounded by
\begin{equation*}\begin{split}
|M'| &= |M_V'| \\
     &= |M_V' \cap T_I| + |M_V' \cap T_C| + |M_V' \cap T_O| \\
     &\le (|F \cap T_I| - 1) + |V \cap T_C| + |V \cap T_O| \\
     &= (|M_V \cap T_I| - 1) + |M_V \cap T_C| + |M_V \cap T_O| \\
     &= |M_V| - 1 = |M| - 1,
\end{split}\end{equation*}
where we used that (i) vertices in $T_I \cap V$ can only be matched with vertices in $T_I \cap F$, (ii) all nodes in $T_I \cap F$ are $M$-matched with vertices in $M_V \cap T_I$, (iii) all variable vertices in $T_C$ are $M$-matched, and (iv) all vertices in $T_O \cap V$ are $M$-matched. This contradicts the assumption that $M'$ is a maximum matching.

In a similar way, one obtains a contradiction when assuming the existence of a maximum matching $M''$ that matches a vertex in $T_O\cap V$ with a vertex in $(T_I\cup T_C)\cap F$. Hence any maximum matching of $\BG$ must match all vertices in $T_I\cap F$ with vertices in $T_I \cap V$, and all vertices in $T_O\cap V$ with vertices in $T_O\cap F$. We conclude that $T_O$ and $T_I$ do not depend on the choice of maximum matching. By definition $T_C$ is uniquely determined by $T_O$ and $T_I$. Therefore the coarse decomposition is independent of the choice of maximum matching.


\end{proof}

\subsection{Markov property via $d$-separation}
\label{app:markov property proof}

In this section we prove Theorem \ref{thm:markov property} below.

\markovproperty*

\begin{proof}
Let $v\in (T_C\cup T_O) \cap (V\setminus W)$ be arbitrary and define $S_V=\mathrm{cl}(v)\cap V$ and $S_F=\mathrm{cl}(v)\cap F$. First, we will show that $V(S_F)\setminus S_V = \mathrm{pa}_{\mathrm{MO}(\BG)}(v)$. The following equivalences hold for $x \in V$:
\begin{align*}
x \in V(S_F)\setminus S_V & \iff x\in \mathrm{adj}_{\BG}(S_F)\setminus S_V
\tag*{\small(by Definition \ref{def:system of constraints})}\\
& \iff (x\to \mathrm{cl}(v)) \text{ in } \mathrm{CO}(\BG)
\tag*{\small(by definition of Algorithm \ref{alg:causal ordering coarse decomposition})} \\
& \iff (x\to v) \text{ in } D(\mathrm{CO}(\BG))
\tag*{\small(by Definition \ref{def:declustering})}\\
& \iff (x\to v) \text{ in } D(\mathrm{CO}(\BG))_{\mathrm{mar}(F)} \\
& \iff (x\to v) \text{ in } \mathrm{MO}(\BG)
\tag*{\small(by Definition \ref{def:declustering})}\\
& \iff x \in \pa{\mathrm{MO}(\BG)}{v}.
\end{align*}

By assumption, the system of constraints is maximally uniquely solvable w.r.t.\ $\mathrm{CO}(\BG)$. Note that $S_V\subseteq V(S_F)$. Hence, there exist measurable functions $g_i:\B{\X}_{\mathrm{pa}_{\mathrm{MO}(\BG)}(v)}\to \X_i$ for all $i\in S_V$ such that
$\P_{\B{X}_W}$-a.s., for all $\B{x}_{V(S_F)\setminus W}\in \B{\X}_{V(S_F)\setminus W}$:
\begin{align*}
  \forall\, f\in S_F: \,\, \phi_f(\B{x}_{V(f)\setminus W},\B{X}_{V(f)\cap W})=c_f \;&\iff\; \\
  &\forall \, i\in S_V: \,\, x_i=g_i(\B{x}_{\mathrm{pa}_{\mathrm{MO}(\BG)}(v) \setminus W},\B{X}_{\mathrm{pa}_{\mathrm{MO}(\BG)}(v)\cap W}).
\end{align*}
  Since $v\in (T_C\cup T_O) \cap (V\setminus W)$ was chosen arbitrarily and $\B{X}^* = \B{h}(\B{X}_W)$ with $\B{h}$ a solution of $\mathcal{M}$, it follows that
\begin{align*}
X^*_v = g_v(\B{X}^*_{\mathrm{pa}_{\mathrm{MO}(\BG)} (v)}) \quad \text{$\P_{\B{X}_W}$-a.s.},
\end{align*}
for all $v\in (T_C\cup T_O) \cap (V\setminus W)$. The directed global Markov property was already shown to hold for pairs $(\G,\mathbb{P}_{\B{X}})$ where $\G$ is a DAG and $\B{X}$ is a solution to a set of structural equations with functional dependences corresponding to the DAG \citep{Pearl2000, Lauritzen1996}. Because the Markov ordering graphs $\mathrm{MO}(\BG)$ and $\mathrm{MO}_{\mathrm{CO}}(\BG)$ are acyclic by construction, and $\mathrm{MO}_{\mathrm{CO}}(\BG)$ is the graph corresponding to this set of structural equations, this completes the proof.
\end{proof}

\subsection{Causal ordering via perfect matchings}
\label{app:equivalence proof}

In this section we prove Theorem \ref{thm:equivalence} below.

\equivalence*

The following result gives a necessary and sufficient condition for the existence of a perfect matching for a bipartite graph and can be found in \citet{Hall1986}.

\begin{theorem}[Hall's Marriage Theorem]
\label{thm:hall}
Let $\BG=\tuple{V,F,E}$ be a bipartite graph with $|V|=|F|$. Then $\BG$ has a perfect matching if and only if $|F'|\le|\adj{\BG}{F}|$ for all $F'\subseteq F$.
\end{theorem}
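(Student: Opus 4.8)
The plan is to prove the two implications separately, with the forward (necessity) direction being routine and the reverse (sufficiency) direction carrying all the real content via induction on $n := |F| = |V|$. For necessity, suppose $\BG$ has a perfect matching $\M$. Then for any $F'\subseteq F$, the vertices $\{\M(f) : f\in F'\}$ are $|F'|$ distinct vertices, each lying in $\adj{\BG}{F'}$, so $|F'|\le|\adj{\BG}{F'}|$. This gives Hall's condition immediately.

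For sufficiency I would assume Hall's condition holds and induct on $n$. The base case $n=1$ is trivial: Hall's condition forces the single $f\in F$ to have a neighbour, which I match to it. For the inductive step I would split into two cases according to whether Hall's condition is ever tight on a proper subset. \textbf{Surplus case:} suppose every nonempty proper subset $F'\varsubsetneq F$ satisfies the strict inequality $|\adj{\BG}{F'}|\ge|F'|+1$. Pick any $f\in F$, any neighbour $v\in\adj{\BG}{\{f\}}$, match them, and delete both. In the induced subgraph $\BG'$ on $(F\setminus\{f\})\cup(V\setminus\{v\})$, removing $v$ drops each neighbourhood size by at most one, so for any $F''\subseteq F\setminus\{f\}$ we get $|\adj{\BG'}{F''}|\ge|\adj{\BG}{F''}|-1\ge|F''|$. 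Hall's condition is inherited, the induction hypothesis applies to $\BG'$ (which has $n-1$ vertices per side), and appending the edge $(f-v)$ yields a perfect matching for $\BG$.

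\textbf{Tight case:} suppose instead there is a nonempty proper $F_0\varsubsetneq F$ with $|\adj{\BG}{F_0}|=|F_0|$; set $V_0=\adj{\BG}{F_0}$. The subgraph $\BG_0$ induced on $F_0\cup V_0$ inherits Hall's condition (every subset of $F_0$ has the same neighbourhood in $\BG_0$ as in $\BG$) and has size $|F_0|=|V_0|<n$, so by induction it has a perfect matching $M_0$. The key computation is verifying Hall's condition on the complementary subgraph $\BG_1$ induced on $(F\setminus F_0)\cup(V\setminus V_0)$: for any $F_1\subseteq F\setminus F_0$, the neighbours of $F_1$ in $\BG_1$ are exactly $\adj{\BG}{F_0\cup F_1}\setminus V_0$, whence
\begin{align*}
|\adj{\BG_1}{F_1}| = |\adj{\BG}{F_0\cup F_1}| - |V_0| \ge |F_0\cup F_1| - |F_0| = |F_1|,
\end{align*}
using Hall's condition on $F_0\cup F_1$ together with $V_0\subseteq\adj{\BG}{F_0\cup F_1}$. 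Induction then gives a perfect matching $M_1$ for $\BG_1$, and $M_0\cup M_1$ is a perfect matching for $\BG$.

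The main obstacle is the tight case of the reverse direction: one must choose the right subgraph split and check that Hall's condition descends to \emph{both} $\BG_0$ and the complement $\BG_1$ simultaneously. The surplus case is comparatively easy because a single deletion can only decrease neighbourhoods by one, whereas the tight case requires the displayed subtraction identity $|\adj{\BG_1}{F_1}|=|\adj{\BG}{F_0\cup F_1}|-|V_0|$, which hinges on $\adj{\BG}{F_0}=V_0$ being entirely removed. Note finally that I read the statement's condition as $|F'|\le|\adj{\BG}{F'}|$ for all $F'\subseteq F$ (the intended Hall condition), which is what the argument uses throughout.
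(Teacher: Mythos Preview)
Your proof is correct and is the standard inductive argument for Hall's theorem. The paper, however, does not prove this statement at all: it simply cites the result (``can be found in \citet{Hall1986}'') and uses it as a black box to derive Corollary~\ref{cor:perfect matching self contained}. So there is no paper proof to compare against; your argument supplies what the paper omits.
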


From Hall's Marriage Theorem it trivially follows that a bipartite graph has a perfect matching if and only if it is self-contained.

\begin{corollary}
\label{cor:perfect matching self contained}
Let $\BG=\tuple{V,F,E}$ be a bipartite graph. Then $\BG$ has a perfect matching if and only if $\BG$ is self-contained.
\end{corollary}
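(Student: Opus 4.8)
The plan is to derive both implications directly from the combinatorial content of Definition~\ref{def:self-contained} and Hall's Marriage Theorem (Theorem~\ref{thm:hall}), treating the two directions separately.

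For the forward direction, suppose $\BG=\tuple{V,F,E}$ has a perfect matching $\M$. Since a matching pairs each vertex of $F$ with a distinct vertex of $V$ and vice versa, we immediately get $|F|=|V|$, which is the cardinality requirement in the definition of a self-contained bipartite graph. It remains to check the two conditions on $F$ itself. For any $F'\subseteq F$, the restriction of $\M$ to $F'$ injects $F'$ into $\adj{\BG}{F'}$ (each $f\in F'$ is matched to some $v\in V(f)\subseteq\adj{\BG}{F'}$, and distinct $f$'s get distinct matches), so $|F'|\le|\adj{\BG}{F'}|$; this is condition~\ref{SC2}. Applying this with $F'=F$ gives $|F|\le|\adj{\BG}{F}|\le|V|=|F|$, forcing $|F|=|\adj{\BG}{F}|$, which is condition~\ref{SC1}. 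Hence $F$ is self-contained and, together with $|F|=|V|$, so is $\BG$.

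For the converse, suppose $\BG$ is self-contained, i.e.\ $|F|=|V|$ and $F$ satisfies conditions~\ref{SC1} and~\ref{SC2}. Condition~\ref{SC2} states precisely that $|F'|\le|\adj{\BG}{F'}|$ for all $F'\subseteq F$, which is exactly the hypothesis of Hall's Marriage Theorem (here using $|V|=|F|$ so that the theorem applies). Theorem~\ref{thm:hall} then yields a perfect matching for $\BG$. (Note that condition~\ref{SC1} is not even needed here, since it already follows from condition~\ref{SC2} applied to $F'=F$ combined with $|\adj{\BG}{F}|\le|V|=|F|$.)

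There is essentially no obstacle: both directions are immediate once one notes that the Hall condition appearing in Theorem~\ref{thm:hall} is literally condition~\ref{SC2} of Definition~\ref{def:self-contained} in the presence of the cardinality constraint $|V|=|F|$. The only mild care needed is to observe that the $|V|=|F|$ hypothesis of Theorem~\ref{thm:hall} is exactly the defining cardinality condition for a self-contained bipartite graph, so the two notions match on the nose.
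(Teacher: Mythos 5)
Your proof is correct and follows essentially the same route as the paper's: both reduce the statement to Hall's Marriage Theorem after observing that self-containedness is exactly the Hall condition together with $|V|=|F|$, which a perfect matching also forces. You spell out the forward direction by hand (injecting $F'$ into $\adj{\BG}{F'}$ via the matching) where the paper simply invokes the ``only if'' part of Theorem~\ref{thm:hall}, but this is a stylistic difference, not a different argument.
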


\begin{proof}
If $\BG$ has a perfect matching then $|V|=|F|$. By Definition \ref{def:self-contained} we know that if $\BG$ is self-contained then $|V|=|F|$. Hence, the statement follows from Definition \ref{def:self-contained} and Theorem \ref{thm:hall}.
\end{proof}

The following technical lemma is used to prove Lemma \ref{lemma:output coincides}, which shows that the output of Algorithm \ref{alg:causal ordering minimal self-contained exo} coincides with that of Algorithm \ref{alg:causal ordering perfect matchings} in the case that the input of the algorithm is a self-contained bipartite graph and $W=\emptyset$.

\begin{lemma}
\label{lemma:sccs are msc}
Let $\M$ be a perfect matching for a self-contained bipartite graph $\BG=\tuple{V,F,E}$. Let $S_V^{(1)},\ldots,S_V^{(n)}$ be a topological ordering of the strongly connected components in the graph $\G(\BG,\M)_{\mathrm{mar}(F)}$. Let $\BG^{(i)}$ be the subgraph of $\BG$ induced by $\bigcup_{j=i}^{n}(S_V^{(j)}\cup \M(S_V^{(j)}))$. Then $\BG^{(i)}$ is self-contained and $\M(S_V^{(i)})$ is a minimal self-contained set in $\BG^{(i)}$.
\end{lemma}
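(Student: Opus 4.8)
The plan is to establish the two claims—self-containedness of $\BG^{(i)}$ and minimal self-containedness of $\M(S_V^{(i)})$ in $\BG^{(i)}$—by a downward induction on $i$, peeling off strongly connected components from the top of the topological order. First I would record two structural facts about the associated directed graph $\G(\BG,\M)$ that follow directly from the orientation rule in Definition \ref{def:orient, cluster, merge}: since $\M$ is perfect, every equation vertex $f$ has exactly one outgoing edge in $\G(\BG,\M)$, namely $f \to \M(f)$ (as $(v-f)\in\M$ orients to $v \leftarrow f$), and all other edges incident to $f$ point into $f$. Consequently, in the marginalized graph $\G(\BG,\M)_{\mathrm{mar}(F)}$ there is a directed edge $v \to v'$ precisely when there is an equation $f$ with $\M(f) = \{v'\}$ and $v \in V(f)\setminus\{v'\}$; that is, the parents of $v'$ in $\G(\BG,\M)_{\mathrm{mar}(F)}$ are exactly $V(\M^{-1}(v'))\setminus\{v'\}$. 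This identifies the strongly connected components $S_V^{(j)}$ of the marginalized graph with ``closed'' sets of variables under the matching: for the top component $S_V^{(1)}$ (a source in the topological order), every parent of every vertex in $S_V^{(1)}$ lies again in $S_V^{(1)}$, which forces $\adj{\BG}{\M(S_V^{(1)})} = S_V^{(1)}$ inside $\BG = \BG^{(1)}$.

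Next I would verify the base of the induction, $\BG^{(1)} = \BG$, which is self-contained by hypothesis. The key step is then to show $\M(S_V^{(i)})$ is minimal self-contained in $\BG^{(i)}$ and that removing it leaves $\BG^{(i+1)}$ self-contained. By the structural observation above, inside $\BG^{(i)}$ the set $S_V^{(i)}$ is the top (source) component among the remaining ones, so $\adj{\BG^{(i)}}{\M(S_V^{(i)})} = S_V^{(i)}$. Since $\M$ restricted to $S_V^{(i)}$ is a perfect matching between $\M(S_V^{(i)})$ and $S_V^{(i)}$, we get $|\M(S_V^{(i)})| = |S_V^{(i)}| = |\adj{\BG^{(i)}}{\M(S_V^{(i)})}|$, giving condition \ref{SC1}. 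For condition \ref{SC2}, I would take $F'' \subseteq \M(S_V^{(i)})$ and note $\M(F'') \subseteq \adj{\BG^{(i)}}{F''}$ since $\M \subseteq E$, whence $|F''| = |\M(F'')| \le |\adj{\BG^{(i)}}{F''}|$. For minimality, suppose $\emptyset \ne F'' \subsetneq \M(S_V^{(i)})$ were self-contained; then $\adj{\BG^{(i)}}{F''} = \M(F'')$ has size $|F''|$, and these adjacencies all lie in $S_V^{(i)}$; but then the variables $\M(F'')$ form a proper nonempty subset of $S_V^{(i)}$ that is closed under taking parents in $\G(\BG^{(i)},\M)_{\mathrm{mar}(F)}$ restricted to $S_V^{(i)}$ (a parent of $v' \in \M(F'')$ comes via $\M^{-1}(v') \in F''$, and $V(\M^{-1}(v'))\setminus\{v'\} \subseteq \adj{\BG^{(i)}}{F''} = \M(F'')$), contradicting that $S_V^{(i)}$ is a single strongly connected component. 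Finally, by Lemma \ref{lemma:ind-selfcont} applied to the self-contained $\BG^{(i)}$ and its minimal self-contained set $\M(S_V^{(i)})$, the induced subgraph on $\bigl(V \setminus \adj{\BG^{(i)}}{\M(S_V^{(i)})}\bigr) \cup \bigl(F \setminus \M(S_V^{(i)})\bigr)$ — which is exactly $\BG^{(i+1)}$, since $\adj{\BG^{(i)}}{\M(S_V^{(i)})} = S_V^{(i)}$ and the removed equations are $\M(S_V^{(i)})$ — is self-contained, closing the induction.

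The main obstacle I anticipate is the bookkeeping connecting the graph-theoretic notion ``strongly connected component of $\G(\BG,\M)_{\mathrm{mar}(F)}$'' to the matching-theoretic notion ``minimal self-contained set''; in particular, one must argue carefully that a source component $S_V^{(i)}$ has \emph{all} parents inside itself (so its adjacency in the bipartite graph does not leak out to already-processed components $S_V^{(j)}$, $j<i$, nor to later ones), and that strong connectivity of $S_V^{(i)}$ is exactly what rules out proper self-contained subsets of $\M(S_V^{(i)})$. Both directions hinge on the correspondence $\pa{\G(\BG,\M)_{\mathrm{mar}(F)}}{v'} = V(\M^{-1}(v')) \setminus \{v'\}$, so I would state and prove that equality as a preliminary claim and then treat the rest as careful but routine verification of the counting conditions in Definition \ref{def:self-contained}, invoking Lemma \ref{lemma:ind-selfcont} for the inductive descent.
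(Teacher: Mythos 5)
Your proof is correct and follows essentially the same route as the paper's: identify $\adj{\BG^{(i)}}{\M(S_V^{(i)})}=S_V^{(i)}$ from the topological ordering, obtain self-containedness of $\M(S_V^{(i)})$ by counting via the matching, and derive minimality by showing that a proper nonempty self-contained subset $F''$ would make $\M(F'')$ a proper nonempty subset of $S_V^{(i)}$ with no incoming edges from the rest of the component, contradicting strong connectivity. The only immaterial difference is that you invoke Lemma \ref{lemma:ind-selfcont} for the inductive descent, whereas the paper restricts the perfect matching to $\BG^{(i+1)}$ and applies Corollary \ref{cor:perfect matching self contained}.
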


\begin{proof}
We use the notation $\G^{(k)}:=\G(\BG^{(k)},\M^{(k)})$ and $S_F^{(k)} := \M^{(k)}(S_V^{(k)})$, where $\M^{(1)}=\M$ (we will define $\M^{(i)}$ with $i>1$ later). First we show that $S_F^{(1)}$ is self-contained in $\BG^{(1)}$. We proceed by proving that $S_F^{(1)}$ is minimal self-contained in $\BG^{(1)}$ and that $\BG^{(2)}$ is a self-contained bipartite graph. Finally, we consider how these arguments can be iterated to prove the lemma.

By definition of a perfect matching and the fact that $\BG^{(1)}=\BG$ is self-contained, we know that:
\begin{align}
|S_V^{(1)}| = |S_F^{(1)}| \leq |\adj{\BG^{(1)}}{S_F^{(1)}}|.
\end{align}
By definition of topological ordering and the orientation step in Definition \ref{def:orient, cluster, merge} we know that:
\begin{align*}
\adj{\BG^{(1)}}{S_F^{(1)}} \subseteq S_V^{(1)}.
\end{align*}
Together, these two inequalities show that $|S_F^{(1)}|=|\adj{\BG^{(1)}}{S_F^{(1)}}|$. Because $\BG^{(1)}$ is self-contained, the set $S_F^{(1)}$ satisfies both conditions of Definition \ref{def:self-contained}. We conclude that $S_F^{(1)}$ is self-contained in $\BG^{(1)}$.

Assume, for the sake of contradiction, that $S_F^{(1)}$ is not \emph{minimal} self-contained. Then there exists a non-empty strict subset $F'\subset S_F^{(1)}$ that is self-contained in $\BG^{(1)}$. First note that, by Definition \ref{def:self-contained}, we have that $|F'|=|\adj{\BG^{(1)}}{F'}|$ and $|S_V^{(1)}|=|S_F^{(1)}$| so that $S_V^{(1)}\setminus \adj{\BG^{(1)}}{F'}\neq \emptyset$ and $\adj{\BG^{(1)}}{F'}\neq \emptyset$. Furthermore, by Definition \ref{def:orient, cluster, merge} (orientation step), we must have that:
\begin{align}
\pa{\G^{(1)}}{\adj{\BG^{(1)}}{F'}} = \M^{(1)}(\adj{\BG^{(1)}}{F'}) =  F'.
\end{align}
Therefore there is no directed edge from any vertex in $F\setminus F'$ to any vertex in $\adj{\BG^{(1)}}{F'}$. Clearly, there can be no edge in $\G^{(1)}$ between any vertex $v\in S_V^{(1)}\setminus \adj{\BG^{(1)}}{F'}$ and any vertex $f'\in F'$ and hence
\begin{align}
\pa{\G^{(1)}}{S_V^{(1)}\setminus \adj{\BG^{(1)}}{F'}} = \M^{(1)}(S_V^{(1)}\setminus \adj{\BG^{(1)}}{F'}) = F\setminus F'.
\end{align}
Therefore, there can be no directed path from any $v\in S_V^{(1)}\setminus \adj{\BG^{(1)}}{F'}$ to any $f\in F'$ in $\G^{(1)}$. This contradicts the assumption that $S_V^{(1)}$ is a strongly connected component in $\G^{(1)}_{\mathrm{mar}(F)}$. We conclude that $S_F^{(1)}$ is minimal self-contained in $\BG^{(1)}$.

Clearly, the set $\M^{(2)} := \{(i-j) \in \M^{(1)}: i,j \notin S_V^{(1)}\cup S_F^{(1)}\}$ is a perfect matching for $\BG^{(2)}$. By Corollary \ref{cor:perfect matching self contained} we therefore know that $\BG^{(2)}$ is self-contained. Since $S_V^{(2)},\ldots, S_V^{(n)}$ is a topological ordering for the strongly connected components in $\G^{(2)}_{\mathrm{mar}(F)}$ the above argument can be repeated to show that $S_F^{(2)}$ is minimal self-contained in $\BG^{(2)}$. For arbitrary $i\in \{1,\ldots, n\}$ this entire argument can be iterated to show that $S_F^{(i)}$ is minimal self-contained in the self-contained bipartite graph $\BG^{(i)}$.
\end{proof}

\begin{lemma}
\label{lemma:output coincides}
Let $\M$ be an arbitrary perfect matching for a self-contained bipartite graph $\BG=\tuple{V,F,E}$. The directed cluster graph $\G_1=\tuple{\mathcal{V}_1,\mathcal{E}_1}$ that is obtained by application of Definition \ref{def:orient, cluster, merge} coincides with the output $\G_2=\tuple{\mathcal{V}_2,\mathcal{E}_2}$ of Algorithm \ref{alg:causal ordering minimal self-contained exo}.
\end{lemma}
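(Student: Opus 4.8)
The plan is to exhibit one particular \emph{legal} run of Algorithm~\ref{alg:causal ordering minimal self-contained exo} on $\BG$ (here $W=\emptyset$, since $\BG$ is self-contained) whose output is exactly the directed cluster graph $\G_1$ produced by the orient/cluster/merge construction of Definition~\ref{def:orient, cluster, merge}, and then to invoke uniqueness of the output (Lemma~\ref{lemma:unique}) to conclude that \emph{every} run yields $\G_1$, hence $\G_2=\G_1$. To set up, fix a topological ordering $S_V^{(1)},\dots,S_V^{(n)}$ of the strongly connected components of $\G(\BG,\M)_{\mathrm{mar}(F)}$, put $S_F^{(i)}:=\M(S_V^{(i)})$, $C^{(i)}:=S_V^{(i)}\cup S_F^{(i)}$, and let $\BG^{(i)}$ be the subgraph of $\BG$ induced by $\bigcup_{j\ge i}C^{(j)}$, as in Lemma~\ref{lemma:sccs are msc}. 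Since $\M$ is a bijection and the $S_V^{(j)}$ partition $V$, the sets $C^{(1)},\dots,C^{(n)}$ partition $V\cup F$, and $\BG^{(i)}$ is precisely the graph obtained from $\BG$ by deleting $C^{(1)},\dots,C^{(i-1)}$.

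The first task is to identify $\G_1$ explicitly: I would show that orient/cluster/merge outputs the directed cluster graph with cluster set $\{C^{(1)},\dots,C^{(n)}\}$ and with edge $v\to C^{(i)}$ exactly for $v\in\adj{\BG}{S_F^{(i)}}\setminus\adj{\BG^{(i)}}{S_F^{(i)}}$. For the clusters: in $\G(\BG,\M)$ every equation vertex $f$ has the unique out-neighbour $\M(f)$, so marginalizing $F$ out of $\G(\BG,\M)$ preserves reachability among $V$-vertices; hence the strongly connected components of $\G(\BG,\M)_{\mathrm{mar}(F)}$ are exactly the nonempty sets $S\cap V$ for $S$ a component of $\G(\BG,\M)$, and an alternating-cycle parity argument shows that $\{v\}$ is a component of $\G(\BG,\M)$ iff $\{\M(v)\}$ is, while every larger component $S$ satisfies $S=(S\cap V)\cup\M(S\cap V)$. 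Therefore the merge step $\V=\{S\cup\M(S):S\in\V'\}$ collapses $\V'$ into precisely $\{C^{(1)},\dots,C^{(n)}\}$, and each vertex of $C^{(i)}$ lies in some element of $\V'$ merging to $C^{(i)}$. For the edges: the only $E_{\mathrm{dir}}$-edges entering $C^{(i)}$ from outside are of the form $v\to f$ with $f\in S_F^{(i)}$, $v\notin S_V^{(i)}$, and $(v-f)\notin\M$ (the last condition being automatic once $v\notin S_V^{(i)}$, since $f\in\M(S_V^{(i)})$), so $\E_1$ has the edges $v\to C^{(i)}$ for $v\in\adj{\BG}{S_F^{(i)}}\setminus S_V^{(i)}$; and this set equals $\adj{\BG}{S_F^{(i)}}\setminus\adj{\BG^{(i)}}{S_F^{(i)}}$ because $\adj{\BG^{(i)}}{S_F^{(i)}}=S_V^{(i)}$ (the inclusion $\subseteq$ holds as $S_V^{(i)}$ is a source among the remaining components, and $\supseteq$ from the matching; both are established inside the proof of Lemma~\ref{lemma:sccs are msc}).

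Next I would describe the matching run of Algorithm~\ref{alg:causal ordering minimal self-contained exo}. With $W=\emptyset$, initialization gives $\BG'=\BG=\BG^{(1)}$ and $\V=\E=\emptyset$; I claim the run that at iteration $i$ chooses $S_F^{(i)}$ is legal and reproduces $\G_1$. Inductively $\BG'=\BG^{(i)}$; by Lemma~\ref{lemma:sccs are msc} the set $S_F^{(i)}$ is minimal self-contained in $\BG^{(i)}$, so this choice is admissible; the cluster formed is $S_F^{(i)}\cup\adj{\BG^{(i)}}{S_F^{(i)}}=C^{(i)}$; the edges added are $v\to C^{(i)}$ for $v\in\adj{\BG}{S_F^{(i)}}\setminus\adj{\BG^{(i)}}{S_F^{(i)}}$; and the updated $\BG'$ is $\BG^{(i)}$ with $C^{(i)}$ removed, i.e.\ $\BG^{(i+1)}$. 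After $n$ iterations $\BG'$ is the null graph and the loop halts, so this run outputs exactly the graph identified in the previous paragraph, namely $\G_1$. Since by Lemma~\ref{lemma:unique} the output of Algorithm~\ref{alg:causal ordering minimal self-contained exo} is independent of all choices, $\G_2$ equals the output of this run, so $\G_2=\G_1$, completing the proof.

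The routine parts here are the simulated run and the edge bookkeeping; the delicate point, and where I expect the main effort to go, is the claim in the second paragraph that the \emph{merge} operation collapses the components of $\G(\BG,\M)$ into precisely the clusters $C^{(i)}$. This requires reconciling the partition of $V\cup F$ into strongly connected components of $\G(\BG,\M)$ with the partition of $V$ into components of $\G(\BG,\M)_{\mathrm{mar}(F)}$, together with the $\M$-pairing that governs how singleton components get merged — a short but careful argument resting on the alternating (bipartite) structure of directed cycles and on the fact that each equation vertex has a unique out-edge.
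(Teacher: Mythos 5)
Your proposal is correct and follows essentially the same route as the paper's proof: use Lemma~\ref{lemma:sccs are msc} to exhibit a legal run of Algorithm~\ref{alg:causal ordering minimal self-contained exo} that adds the clusters $S_V^{(i)}\cup\M(S_V^{(i)})$ in topological order, invoke uniqueness of the algorithm's output to identify $\V_1=\V_2$, and then match the edge sets by the same bookkeeping. The only difference is that you spell out the argument that the merge step of Definition~\ref{def:orient, cluster, merge} collapses the strongly connected components of $\G(\BG,\M)$ into exactly these clusters (via the unique out-edge of each $f$ and the parity of alternating cycles), a point the paper asserts directly from the definition; your elaboration is sound and arguably makes the proof more complete.
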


\begin{proof}
Let $S^{(1)},\ldots, S^{(n)}$ be a topological ordering of the strongly connected components in $\G(\M,\BG)_{\mathrm{mar}(F)}$. By Definition \ref{def:orient, cluster, merge} the cluster set $\V_1$ consists of clusters $S^{(i)}\cup \M(S^{(i)})$ with $i\in\{1,\ldots,n\}$. By Lemma \ref{lemma:sccs are msc}, Algorithm \ref{alg:causal ordering minimal self-contained exo} can be run in such a way that the clusters $S^{(i)}\cup \M(S^{(i)})$ are added to $\V_2$ in the order specified by the topological ordering. By Theorem \ref{thm:uniqueness} the output of Algorithm \ref{alg:causal ordering minimal self-contained exo} is unique and therefore $\V_1 = \V_2$. By Definition \ref{def:orient, cluster, merge} the following equivalences hold for $C\in\V_1=\V_2$ and $v\in V\setminus C$:

\begin{align*}
(v\to C) \in \E_1 &\iff \exists w\in C \text{ s.t. } (v\to w) \text{ in } \G(\M,\BG) \\
&\iff \exists w\in C \text{ s.t. } (v-w) \in E \text{ and } (v-w)\notin \M \\
&\iff v \in \adj{\BG}{C\cap F} \setminus \M(C\cap F)\\
&\iff v \in \adj{\BG}{C\cap F} \setminus (C\cap V)\\
&\iff (v\to C) \in \E_2.
\end{align*}

Let $C\in \V_1=\V_2$ and $f\in F\cap(\adj{\BG}{C}\setminus C)$. By definition of Algorithm \ref{alg:causal ordering minimal self-contained exo} we know that $(f\to C)\notin \E_2$. Note that $\M(C\cap F)=C\cap V$. By Definition \ref{def:orient, cluster, merge} there is no edge $(f\to v)$ with $v\in C\cap V$ in $\G(\BG,\M)$ and hence by Definition we know that $(f\to C)\notin \E_2$. By construction, edges $(x\to C)$ with $x\in C$ are neither in $\E_1$ nor in $\E_2$. We conclude that $\E_1=\E_2$ and consequently $\G_1$ coincides with $\G_2$.
\end{proof}

Lemma \ref{lemma:output coincides} shows that the output of Algorithm \ref{alg:causal ordering minimal self-contained exo} coincides with the output of Algorithm \ref{alg:causal ordering perfect matchings} if the input is a self-contained bipartite graph. Otherwise, both Algorithm \ref{alg:causal ordering minimal self-contained exo} and \ref{alg:causal ordering perfect matchings} have an initialization that is determined by the specification of exogenous variables. The exogenous variables are placed into separate clusters and there are directed edges from each exogenous variable to the clusters of its adjacencies for both algorithms. The output of the two algorithms coincides for any valid input.

\subsection{Markov property via $\sigma$-separation}
\label{app:acyclification proof}

Here, we prove the following theorem.

\gdgmp*

The proof of this theorem relies on results by \citet{Forre2017}, who define the notion of an \emph{acyclic augmentation} for a class of graphical models that they call \emph{HEDGes}. They define the \emph{augmentation} of a HEDG as a directed graph where hyperedges are represented by vertices with additional edges. The acyclic augmentation of a HEDG is obtained by \emph{acyclification} of the edge set of it augmentation \citep{Forre2017}. The acyclification of a directed graph is given in Definition \ref{def:acyclification}.

\begin{definition}
\label{def:acyclification}
Let $\G=\tuple{V,E}$ be a directed graph. The \emph{acyclification} of $E$, denoted by $E^{\mathrm{acy}}$, has edges $(i\to j)\in E^{\mathrm{acy}}$ if and only if $i\notin\scc{\G}{j}$ and there exists $k\in\scc{\G}{j}$ such that $(i\to k)\in E$. 
\end{definition}

Lemma \ref{lemma:cluster decluster} shows that the clustering operation in Definition \ref{def:orient, cluster, merge} on directed graphs, followed by the declustering operation in Definition \ref{def:declustering}, results in the same directed graph as the one that is obtained by applying the acyclification operation to its edge set.

\begin{lemma}
\label{lemma:cluster decluster}
Let $\G=\tuple{V,E}$ be a directed graph. It holds that $\G^{\mathrm{acy}}=\tuple{V,E^{\mathrm{acy}}} = D(\mathrm{clust}(\G)))$.
\end{lemma}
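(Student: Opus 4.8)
The claim relates two constructions on a directed graph $\G=\tuple{V,E}$: on one hand, taking the acyclification $E^{\mathrm{acy}}$ of the edge set (Definition~\ref{def:acyclification}); on the other hand, forming the clustered graph $\mathrm{clust}(\G)$ by contracting strongly connected components (the ``construct clusters'' operation in Definition~\ref{def:orient, cluster, merge}), and then declustering it via $D(\cdot)$ (Definition~\ref{def:declustering}). Both produce a directed graph on the \emph{same} vertex set $V$, so the proof reduces to showing the edge sets coincide. I would prove this by a direct double-inclusion argument on edges, unwinding the definitions.

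First I would fix notation: for $v\in V$ write $\scc{\G}{v}$ for the strongly connected component of $v$ in $\G$, which is exactly the cluster $\mathrm{cl}(v)$ in the partition $\V'$ of $\mathrm{clust}(\G)$. Note that $\G$ here is a plain directed graph (no bidirected edges), so $\mathrm{clust}(\G)=\tuple{\V',\E'}$ where $\E'$ contains $(x\to \mathrm{cl}(w))$ whenever $(x\to w)\in E$ and $x\notin\mathrm{cl}(w)$. Then $D(\mathrm{clust}(\G))=\tuple{V,E^D}$ where, by Definition~\ref{def:declustering}, $(v\to w)\in E^D$ iff $(v\to \mathrm{cl}(w))\in\E'$. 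Chaining these: $(v\to w)\in E^D$ iff there exists $w'$ with $\mathrm{cl}(w')=\mathrm{cl}(w)$, i.e. $w'\in\scc{\G}{w}$, such that $(v\to w')\in E$ and $v\notin\mathrm{cl}(w')=\scc{\G}{w}$. Comparing with Definition~\ref{def:acyclification}, $(v\to w)\in E^{\mathrm{acy}}$ iff $v\notin\scc{\G}{w}$ and there exists $k\in\scc{\G}{w}$ with $(v\to k)\in E$. These two conditions are visibly the same (take $w'=k$), so $E^D=E^{\mathrm{acy}}$ and hence $D(\mathrm{clust}(\G))=\tuple{V,E^{\mathrm{acy}}}=\G^{\mathrm{acy}}$.

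The one point requiring a small amount of care — and the only place where something could go subtly wrong — is the handling of the side condition $v\notin\mathrm{cl}(w')$ versus $v\notin\scc{\G}{w}$. Since $w'\in\scc{\G}{w}$ implies $\scc{\G}{w'}=\scc{\G}{w}$ (strongly-connected-component membership is an equivalence relation, as noted in Section~\ref{sec:graph terminology}), we have $\mathrm{cl}(w')=\scc{\G}{w'}=\scc{\G}{w}$, so the condition $v\notin\mathrm{cl}(w')$ that appears implicitly in the $\E'$ construction is identical to $v\notin\scc{\G}{w}$. There is also the degenerate possibility that $v$ and $w$ lie in the same component: then neither $E^{\mathrm{acy}}$ nor $E^D$ has an edge $v\to w$ (the former by the explicit exclusion $i\notin\scc{\G}{j}$, the latter because any edge $(v\to w')\in E$ with $w'\in\scc{\G}{w}=\scc{\G}{v}$ is dropped in forming $\E'$), so consistency holds there too. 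I expect no real obstacle; the ``hard part'' is merely to present the chain of equivalences cleanly and to remark that $\G^{\mathrm{acy}}$ is by construction acyclic (all its strongly connected components are singletons), which is why the notation $\G^{\mathrm{acy}}$ is justified.
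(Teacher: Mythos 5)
Your proof is correct and follows the same route as the paper, which simply observes that the lemma follows by unwinding Definitions~\ref{def:declustering}, \ref{def:orient, cluster, merge}, and \ref{def:acyclification}; your chain of equivalences (identifying $\mathrm{cl}(w')=\scc{\G}{w'}=\scc{\G}{w}$ and matching the side conditions) is exactly the spelled-out version of that observation. Nothing is missing.
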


\begin{proof}
This follows from Definitions \ref{def:declustering}, \ref{def:orient, cluster, merge}, and \ref{def:acyclification}.
\end{proof}

The following proposition shows that $\sigma$-separations in a directed graph coincide with $d$-separations in the graph that is obtained by clustering and subsequently declustering that directed graph.

\begin{proposition}
\label{prop:acyclification dsep}	
Let $\G=\tuple{V,E}$ be a directed graph with nodes $V$ and $\G^{\mathrm{acy}}=\tuple{V,E^{\mathrm{acy}}}$. Then for all subsets $A,B,C\subseteq V$:
\begin{align*}
A\sigsep{\G} B\given C \;\iff\; A\dsep{\G^{\mathrm{acy}}} B\given C \;\iff\; A\dsep{D(\mathrm{clust}(\G))} B\given C.
\end{align*}
\end{proposition}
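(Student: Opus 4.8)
The plan is to prove Proposition~\ref{prop:acyclification dsep} by reducing the statement about $\sigma$-separation to a purely graphical fact about the acyclification operation, and then invoking the results of \citet{Forre2017}. The second equivalence, $A\dsep{\G^{\mathrm{acy}}} B\given C \iff A\dsep{D(\mathrm{clust}(\G))} B\given C$, is immediate from Lemma~\ref{lemma:cluster decluster}, which identifies $\G^{\mathrm{acy}}$ with $D(\mathrm{clust}(\G))$ as the \emph{same} directed graph (same vertex set, same edge set); $d$-separation depends only on the graph, so there is nothing further to check there. Hence the entire content is in the first equivalence: $\sigma$-separation in $\G$ agrees with $d$-separation in $\G^{\mathrm{acy}}$.

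For that first equivalence, I would appeal to the theory of acyclification in \citet{Forre2017}. The key structural observation is that $\G$ and $\G^{\mathrm{acy}}$ have the \emph{same strongly connected components}: by Definition~\ref{def:acyclification}, $\G^{\mathrm{acy}}$ only adds edges $(i\to j)$ with $i\notin\scc{\G}{j}$ and deletes edges within strongly connected components, so no new cycles spanning distinct components of $\G$ are created, and within each $\scc{\G}{v}$ the acyclified graph is edgeless; thus $\scc{\G^{\mathrm{acy}}}{v}=\scc{\G}{v}$ for every $v$, and moreover $\G^{\mathrm{acy}}$ is acyclic (its strongly connected components are exactly the singletons, as is required for a DAG after collapsing). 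The ancestor sets also match in the relevant sense: $\an{\G^{\mathrm{acy}}}{Z}=\an{\G}{Z}$ for any $Z$, because a directed path in $\G$ through several vertices of one component can be ``shortcut'' by an acyclification edge entering that component, and conversely every acyclification edge $(i\to k)$ with $k\in\scc{\G}{j}$ witnesses $i\in\an{\G}{j}$. With these two facts in hand, one shows that a path is $\sigma$-blocked by $C$ in $\G$ if and only if the corresponding path(s) in $\G^{\mathrm{acy}}$ are $d$-blocked by $C$: condition~\ref{cond:sigma-block1} of Definition~\ref{def:separation criteria} is identical; condition~\ref{cond:sigma-block2} (a collider outside $\an{}{C}$) transfers because ancestor sets agree; and the crucial condition~\ref{cond:sigma-block3} (a conditioned non-collider $v_i\in C$ emitting an edge $v_i\to v_{i+1}$ with $v_{i+1}\notin\scc{}{v_i}$) is precisely the situation in which the edge survives acyclification and becomes a head-no-arrowhead non-collider through a conditioned vertex, i.e.\ $d$-blocking; whereas an edge $v_i\to v_{i+1}$ with $v_{i+1}\in\scc{\G}{v_i}$ (which does \emph{not} $\sigma$-block) is deleted in $\G^{\mathrm{acy}}$, so it cannot appear on any path there. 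Rather than re-doing this path-surgery by hand, I would cite the relevant proposition in \citet{Forre2017} (the equivalence of $\sigma$-separation in a directed graph with $d$-separation in its acyclification), since Definition~\ref{def:acyclification} here is exactly their acyclification restricted to directed graphs.

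The main obstacle I anticipate is the bookkeeping in the path-translation argument: a single path in $\G$ may correspond to several shortened paths in $\G^{\mathrm{acy}}$ (because contiguous runs of vertices inside one strongly connected component must be collapsed to single acyclification edges), and one must argue carefully that \emph{some} such translated path is open in $\G^{\mathrm{acy}}$ iff the original is $\sigma$-open in $\G$, handling the endpoints (an endpoint in a component whose interior the path traverses) and the colliders at component boundaries correctly. If \citet{Forre2017} already state the needed equivalence for directed graphs at this level of generality, the cleanest route is simply to quote it together with Lemma~\ref{lemma:cluster decluster}; if not, the argument above is the fallback, and the delicate case-check is isolated in condition~\ref{cond:sigma-block3} and at the path's two ends.
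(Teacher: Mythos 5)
Your proposal is correct and takes essentially the same route as the paper: the second equivalence is exactly Lemma~\ref{lemma:cluster decluster}, and the first is delegated to the known acyclification result from the literature (the paper cites Proposition~A.19 of \citet{Bongers2020} rather than \citet{Forre2017}, but it is the same statement). Note only that your fallback sketch contains a slip---$\G^{\mathrm{acy}}$ is acyclic, so its strongly connected components are singletons and cannot coincide with those of $\G$, and likewise $\an{\G^{\mathrm{acy}}}{Z}$ need not equal $\an{\G}{Z}$ for vertices inside a nontrivial component---but since your primary route is to quote the established equivalence, this does not affect the proof.
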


\begin{proof}
The first equivalence is Proposition A.19 in \citet{Bongers2020}. The second equivalence follows directly from Lemma \ref{lemma:cluster decluster}.
\end{proof}

We now have all ingredients to finish the proof of Theorem~\ref{theo:generalized markov property}. First note that, since the subgraph of $\BG=\tuple{V,F,E}$ induced by $(V\cup F)\setminus W$ has a perfect matching, $\mathrm{CO}(\BG)=\tuple{\V,\E}$ is well-defined by Corollary \ref{cor:perfect matching self contained}. Let $S_V^{(1)},\ldots, S_V^{(n)}$ be the strongly connected components in $\G_{\mathrm{dir}}$, where $\G_{\mathrm{dir}}:= \G(\BG,\M)_{\mathrm{mar}(F)}$. By Lemma \ref{lemma:sccs are msc} and the definition of Algorithm \ref{alg:causal ordering minimal self-contained exo} we know that $\V$ consists of the clusters $S_V^{(i)}\cup\M(S_V^{(i)})$ with $i=1,\ldots,n$. Therefore, $\mathcal{M}$ is uniquely solvable with respect to $\mathrm{CO}(\BG)$. By Theorem \ref{thm:markov property} we have that for subsets $A,B,C\subseteq V\setminus W$:
\begin{align}
\label{eq:markov property}
A \dsep{\mathrm{MO}(\BG)} B \given C \;\implies\; \B{X}_A \ind{\P_{\B{X}}} \B{X}_B \given \B{X}_C.
\end{align}
By Proposition \ref{prop:acyclification dsep} we have that:
\begin{align}
\label{eq:acyclification}
A \sigsep{\G_{\mathrm{dir}}} B \given C \;\iff\; A \dsep{\G_{\mathrm{dir}}^{\mathrm{acy}}} B\given C
\;\iff\; A \dsep{D(\mathrm{clust}(\G_{\mathrm{dir}}))} B \given C. 
\end{align}
The desired result follows from implications \eqref{eq:markov property} and \eqref{eq:acyclification} when $D(\mathrm{clust}(\G_{\mathrm{dir}}))=\mathrm{MO}(\BG)$. Consider the cluster set $\V_{\mathrm{mar}(F)} = \{S\cap V: S\in \V\}$ and note that edges in $\mathrm{CO}(\BG)$ go from vertices in $V$ to clusters in $\V$. By Definition \ref{def:declustering} and \ref{def:orient, cluster, merge} we have that:
\begin{align}
D(\tuple{\V_{\mathrm{mar}(F)},\E}) = D(\tuple{\V,\E})_{\mathrm{mar}(F)} \quad \text{ and } \quad \mathrm{clust}(\G_{\mathrm{dir}}) = \tuple{\V_{\mathrm{mar}(F)},\E},
\end{align}
respectively. It follows that
\begin{align}
D(\mathrm{clust}(\G_{\mathrm{dir}})) = D(\mathrm{CO}(\BG))_{\mathrm{mar}(F)} = \mathrm{MO}(\BG).
\end{align}
Note that both d-separations and $\sigma$-separations are preserved under marginalization of exogenous vertices $W$ \citep{Forre2017, Bongers2020}. This finishes the proof.

\subsection{Effects of interventions}
\label{app:interventions proofs}

This section is devoted to the proofs of the results that were presented in Section \ref{sec:causal implications for sets of equations}.

\softinterventions*

\begin{proof}
The directed cluster graph $\mathrm{CO}(\BG)$ is acyclic by construction and therefore there exists a topological ordering of its clusters. When there is no directed path from $f$ to $v$ in $\mathrm{CO}(\BG)$ then there exists a topological ordering $V^{(1)},\ldots,V^{(n)}$ of the clusters such that $\mathrm{cl}(v)$ comes before $\mathrm{cl}(f)$. Note that clusters of vertices in the incomplete set $T_I$ are never ancestors of clusters in $T_C\cup T_O$ by Lemma \ref{lemma:impossible edges} (the proof of this lemma will be given in Appendix~\ref{appendix:coarse decomposition proofs}). Therefore there exists a topological ordering of clusters so that no cluster in $T_I$ precedes a cluster in $T_C\cup T_O$. By the assumption of unique solvability w.r.t.\ the clusters $T_C\cup T_O$ in $\mathrm{CO}(\BG)$ we know that the solution component for any variable $v\in V^{(i)}\subseteq T_C\cup T_O$ can be solved from the constraints in $V^{(i)}$ after plugging in the relevant solution components $\bigcup_{j=1}^{i-1} V^{(j)}$. By the solvability assumption, the solution components $X_v^*$ and $X_v'$ are equal almost surely.

By assumption, the variables in $\mathrm{cl}(f)$ can be solved from the constraints in $\mathrm{cl}(f)$. Hence, a soft intervention on a constraint in $\mathrm{cl}(f)$ may change the distribution of the solution components $\B{X}^*_{\mathrm{cl}(f)\cap V}$ that correspond to the variable vertices in $\mathrm{cl}(f)$. Suppose that there exists a sequence of clusters $V_1=\mathrm{cl}(f),V_2,\ldots, V_{k-1}, V_k=\mathrm{cl}(v)$ such that for all $V_i\in\{V_1,\ldots, V_{k-1}\}$ there is a vertex $z_i\in V_i$ such that $(z_i\to V_{i+1})$ in $\mathrm{CO}(\BG)$. In that case we know that $V_i\cup T_I = \emptyset$ for $i=1,\ldots, k$. By the assumption of maximal unique solvability w.r.t.\ $\mathrm{CO}(\BG)$ the solution components for the variables in $V_2,\ldots V_k$ may depend on the distribution of the unique solution components $\B{X}^*_{\mathrm{cl}(f)\cap V}$ that correspond to the variable vertices in $\mathrm{cl}(f)$. It follows that the solution $\B{X}^*_v$ may be different from that of $\B{X}'_v$, if there is a directed path from $f$ to $v$ in $\mathrm{CO}(\BG)$.
\end{proof}

\selfcontainedafterintervention*

\begin{proof}
By definition of Algorithm \ref{alg:causal ordering perfect matchings} we know that the subgraph of $\BG$ induced by $(V\cup F)\setminus W$ has a perfect matching $\M$ such that $\M(S_F)=S_V$. By definition of a perfect intervention on the bipartite graph we know that $\M$ is also a perfect matching for the subgraph of $\BG_{\mathrm{do}(S_F,S_V)}$ induced by $(V\cup F)\setminus W$. The result follows from Corollary \ref{cor:perfect matching self contained}.
\end{proof}

\commute*

\begin{proof}
Let $S_V=\tuple{s_v^1,\ldots, s_v^m}$ and $S_F=\tuple{s_f^1,\ldots s_f^m}$ denote the targeted variables and constraints. We consider the output $\mathrm{CO}(\BG)=\tuple{\V,\E}$ of the causal ordering algorithm. Suppose that the order in which clusters $V^{(i)}$ are added to $\V$ is given by
\begin{align}
V^{(1)}, \ldots, V^{(k)}=(S_F\cup S_V),\ldots, V^{(n)}.
\end{align}
Consider $\mathrm{CO}(\BG_{\mathrm{do}(S_F,S_V)})=\tuple{\V',\E'}$. It follows from Definition \ref{def:perfect intervention}, Lemma \ref{lemma:disjoint}, Lemma \ref{lemma:ind-selfcont}, and the definition of Algorithm \ref{alg:causal ordering coarse decomposition} (i.e.\ the extended causal ordering algorithm) that
\begin{align}
V^{(1)},\ldots, V^{(k-1)}, \{s_f^1,s_v^1\}, \ldots, \{s_f^m,s_v^m\}, V^{(k+1)},\ldots V^{(n)}
\end{align}
is an order in which clusters could be added to $\V'$. This shows that there are two differences between $\mathrm{CO}(\BG)=\tuple{\V,\E}$ and $\mathrm{CO}(\BG_{\mathrm{do}(S_F,S_V)})=\tuple{\V',\E'}$: first $(S_F\cup S_V)\in\V$ whereas $\{\{s_f^i, s_v^i\}:i=1,\ldots,m\}\subseteq \V'$ and second the clusters $(S_F\cup S_V)$ may have parents in $\mathrm{CO}(\BG)$ but the clusters $\{s_f^i, s_v^i\}$ (with $i\in\{1,\ldots,m\}$) have no parents in $\mathrm{CO}(\BG_{\mathrm{do}(S_F,S_V)})$. The result follows directly from Definition \ref{def:perfect intervention on a directed cluster graph}.	
\end{proof}

\perfectinterventions*

\begin{proof}
First note that $T_C\cup T_O = T'_C\cup T'_O$ by Definition \ref{def:perfect intervention}. Let $v\in S_V$. Since the variable vertices $S_V$ are targeted by the perfect intervention, we have that $X'_v=\xi_v$, which may be different from the solution component $X^*_v$. Consider $v\in V\setminus S_V$ and its cluster $\mathrm{cl}(v)$ in $\mathrm{CO}(\BG)$. Since the causal ordering graph is acyclic by construction, there exists a topological ordering $V^{(1)},\ldots, V^{(i)}=\mathrm{cl}(v),\ldots V^{(n)}$ of the clusters in $\mathrm{CO}(\BG)$ (where $n$ is the amount of clusters in $\mathrm{CO}(\BG)$) such that $V^{(j)}\prec \mathrm{cl}(v)$ implies that there is a directed path from some vertex in $V^{(j)}$ to the cluster $\mathrm{cl}(v)$ in $\mathrm{CO}(\BG)$. Note that clusters in $T_I$ are never ancestors of clusters in $T_C\cup T_O$ and that the ordering $V^{(1)},\ldots, V^{(n)}$ is such that no cluster in $T_I$ precedes a cluster in $T_C\cup T_O$. By assumption, the solution component $X^*_v$ can be solved from the constraints and variables in $V^{(i)}=\mathrm{cl}(v)$ by plugging in the solution for variables in $V^{(1)},\ldots, V^{(i-1)}$. Let $s_f^1,\ldots s_f^m$ and $s_v^1, \ldots s_v^m$ denote the ordered vertices in $S_F$ and $S_V$ respectively and suppose that $S_V\cup S_F=V^{(k)}$ for some $k\in\{1,\ldots,n\}$. By definition of a perfect intervention on a cluster we know that $V^{(1)},\ldots, V^{(k-1)}, \{s_f^1,s_v^1\}, \ldots, \{s_f^m,s_v^m\}, V^{(k+1)},\ldots V^{(n)}$ is a topological ordering of clusters in $\mathrm{CO}(\BG)_{\mathrm{do}(S_F,S_V)} = \mathrm{CO}(\BG_{\mathrm{do}(S_F,S_V)})$ (by Proposition \ref{prop:perfect interventions on clusters commute}). Furthermore, maximal unique solvability w.r.t.\ $\mathrm{CO}(\BG)$ implies maximal unique solvability w.r.t.\ $\mathrm{CO}(\BG_{\mathrm{do}(S_F,S_V)})$.

Suppose that $V^{(k)} \succ \mathrm{cl}(v)$ in the topological ordering for $\mathrm{CO}(\BG)$. By maximal unique solvability w.r.t.\ $\mathrm{CO}(\BG)_{\mathrm{do}(S_F,S_V)}$, $X'_v$ can be solved from the constraints and variables in $\mathrm{cl}(v)$ by plugging in the solution for variables in $V^{(1)},\ldots, V^{(i-1)}$. It follows that $X^*_v=X'_v$ almost surely and by construction of the topological ordering there is no directed path from any $x\in S_V$ to $v$ in $\mathrm{CO}(\BG)$. Suppose that $V^{(k)} \prec \mathrm{cl}(v)$ in the topological ordering for $\mathrm{CO}(\BG)$. By maximal unique solvability w.r.t.\ $\mathrm{CO}(\BG)_{\mathrm{do}(S_F,S_V)}$, we know that $X'_v$ can be solved from the constraints and variables in $V^{(i)}$ by plugging in the solution for variables in $V^{(1)},\ldots, V^{(k-1)}, \{s_f^1,s_v^1\}, \ldots, \{s_f^m,s_v^m\}, V^{(k+1)},\ldots V^{(i-1)}$. It follows that $X^*_v$ and $X'_v$ may have a different distribution, and by construction of the topological ordering there is a directed path from a vertex in $S_V$ to the cluster $\mathrm{cl}(v)$ in $\mathrm{CO}(\BG)$.
\end{proof}

\bibliography{library}

\end{document}